\newif\ifdraft
\newcommand{\ar}[1]{\textcolor{blue}{[Aaron: #1]}}
\newcommand{\igh}[1]{\textcolor{red}{[Ira: #1]}}
\newcommand{\ph}[1]{\textcolor{orange}{[Peter: #1]}}
\newcommand{\mk}[1]{\textcolor{purple}{[Michael: #1]}}
\newcommand{\ar}[1]{}
\newcommand{\igh}[1]{}
\newcommand{\ph}[1]{}
\newcommand{\mk}[1]{}
\title{An Algorithmic Framework for Bias Bounties}
\author[1,2]{Ira Globus-Harris}
\author[1,2]{Michael Kearns}
\author[1,2]{Aaron Roth}
\affil[1]{\small University of Pennsylvania}
\affil[2]{\small Amazon AWS AI}
\title{An Algorithmic Framework for Bias Bounties}
\author{Ira Globus-Harris}
\affiliation{
\institution{University of Pennsylvania and Amazon AWS AI}
\country{USA}
}
\email{igh@seas.upenn.edu}
\author{Michael Kearns}
\affiliation{
\institution{University of Pennsylvania and Amazon AWS AI}
\country{USA}
}
\email{mkearns@cis.upenn.edu}
\author{Aaron Roth}
\affiliation{
\institution{University of Pennsylvania and Amazon AWS AI}
\country{USA}
}
\email{aaroth@cis.upenn.edu}
  \providecommand\BibTeX{{%
    \normalfont B\kern-0.5em{\scshape i\kern-0.25em b}\kern-0.8em\TeX}}}
\begin{document}

\newcommand{\err}{\varepsilon}
\newcommand{\dom}{\text{dom}}

\newcommand{\cX}{\mathcal{X}}
\newcommand{\cY}{\mathcal{Y}}
\newcommand{\cD}{\mathcal{D}}
\newcommand{\cP}{\mathcal{P}}
\newcommand{\cG}{\mathcal{G}}
\newcommand{\cC}{\mathcal{C}}
\newcommand{\cH}{\mathcal{H}}
\newcommand{\cO}{\mathcal{O}}

\newcommand{\sqo}{\mathcal{O}_{\text{SQ}}}

\newcommand{\E}{\mathop{\mathbb{E}}}

\ifdraft
\newtheorem{theorem}{Theorem}
\newtheorem{lemma}[theorem]{Lemma}
\newtheorem{definition}[theorem]{Definition}
\fi
\newtheorem{problem}{Problem}
\newtheorem{remark}[theorem]{Remark}
\newtheorem{observation}[theorem]{Observation}

\SetKwInput{KwInput}{Input}
\SetKwInput{KwOutput}{Output}

\ifdraft
\else
\begin{abstract}

We propose and analyze an algorithmic framework for ``bias bounties'' --- events in which
external participants are invited to propose improvements to a trained model, akin to bug bounty
events in software and security. Our framework allows participants to submit arbitrary
subgroup improvements, which are then algorithmically incorporated into an updated model.
Our algorithm has the property that there is no tension between overall and subgroup
accuracies, nor between different subgroup accuracies, and it enjoys provable convergence
to either the Bayes optimal model or a state in which no further improvements can be
found by the participants. We provide formal analyses of our framework, experimental
evaluation, and findings from a preliminary bias bounty event.\footnote{The most up-to-date version of this paper may be found at \url{https://arxiv.org/abs/2201.10408}}

\end{abstract}
\begin{CCSXML}
<ccs2012>
   <concept>
       <concept_id>10003752.10010070.10010071</concept_id>
       <concept_desc>Theory of computation~Machine learning theory</concept_desc>
       <concept_significance>500</concept_significance>
       </concept>
 </ccs2012>
\end{CCSXML}

\ccsdesc[500]{Theory of computation~Machine learning theory}

%%
%% Keywords. The author(s) should pick words that accurately describe
%% the work being presented. Separate the keywords with commas.
\keywords{bias bounty, subgroup fairness, multigroup fairness}
\fi

\maketitle

\ifdraft

\thispagestyle{empty}
\setcounter{page}{0}
\newpage
\fi

\section{Introduction}

Modern machine learning (ML) is well known for its powerful applications, and more recently, for its potential to train discriminatory models. As organizations become more aware of these negative consequences, many are instituting partial
remedies such as responsible AI teams, auditing practices,  and model cards \citep{modelcards} and data sheets \citep{datasheets} that document ML workflows.

Such practices are likely to remain insufficient for at least two reasons. First, even an organization that diligently
audits its models on carefully curated datasets cannot anticipate all possible downstream use cases. A face recognition
system designed and tested for use in well-lit, front-facing conditions may be applied by a customer in less ideal
conditions, leading to degradation in performance overall and on particular subgroups. Second, we live in an era of AI activism, in which teams of researchers, journalists, and other external parties 
can independently audit models with commercial or open source APIs, and publish their findings in both the research community and mainstream media.
In the absence of more stringent regulation of AI and ML, this activism is 
one of the strongest forces pushing for change on
organizations developing ML models and tools. 
It has the ability to leverage distributed teams of researchers searching for a wide variety of kinds of problems
in deployed models and systems.

Unfortunately, the dynamic between ML developers and their critics currently has a somewhat adversarial tone.
An external audit will often be conducted privately, and its results aired publicly, without prior consultation
with the organization that built the model and system. That organization may endure criticism, and attempt
internally to fix the identified biases or problems, but typically there is little or no direct interaction
with external auditors.

\iffalse
this typically requires costly retraining of a model from scratch that a company may be unwilling to do. This dynamic can lead developers to view the external auditors as adversaries, which is counter-productive, since often times their goals are in fact well aligned (making a product that works as well as possible for everybody). This invites a natural comparison to the security community.
\fi

Over time, the software and security communities have developed ``bug bounties'' in an attempt to turn similar
dynamics between system developers and their critics (or hackers) towards more interactive and productive ends.
The hope is that by deliberately inviting external parties to find software or hardware bugs in their systems, and often providing
monetary incentives for doing so, a healthier and more rapidly responding ecosystem will evolve.

It is natural for the ML community to consider a similar ``bias bounty'' approach to the timely discovery
and repair of models and systems with bias or other undesirable behaviors. Rather than finding bugs in software,
external parties are invited to find biases --- for instance, (demographic or other) subgroups of inputs on
which a trained model underperforms --- and are rewarded for doing so. Indeed, we are already starting to see early
field experiments in such events~\citep{twitterbounty}.

In this work, we propose and analyze a principled algorithmic framework for conducting bias bounties
against an existing trained model $f(x)$. Our framework has the following properties and features:
\vspace{-\topsep}
\begin{itemize}
    \item ``Bias hunters'' audit $f$ by submitting pairs of models $(g,h)$--- a model $g$ identifying a subset of inputs to $f$ that $f$ performs poorly on, and a proposed model $h$ which improves on this subset. Requiring participants to identify not just a group but an improved model ensures that improvement on the group
        is in fact possible (as opposed to identifying a fundamentally hard subgroup, such as
        images of occluded faces in a face recognition task).
    \item The proposed groups do not need to be identified in advance, and the improving models do not
        need to be chosen from predetermined parametric classes; indeed, both groups and improving models
        can be arbitrarily complex. This is in contrast to most fair ML frameworks, in which training is
        formulated as a constrained optimization problem over a fixed parametric family of models, with fairness
        constraints over fixed demographic subgroups.
    \item Given a proposed group and model pair, our algorithm validates the proposed improvement
        on a holdout set, using techniques from adaptive data analysis to circumvent the potential for overfitting
        due to the potentially unlimited complexity of the submitted pair.
    \item If the improvement is validated, our algorithm has a simple mechanism for automatically
        incorporating the improvement into an update of $f$ in a way that reduces {\em both the overall error
        and the error on the proposed subgroup}. Indeed, our algorithm has the further property that once a subgroup
        improvement has been accepted, the error on that subgroup will never increase (much) due to subsequent
        subgroup introductions. Thus there is {\em no tradeoff between overall and subgroup errors, or between 
        different subgroups}. This is again in contrast to constrained optimization approaches, where there
        is necessarily tension between fairness and accuracy. Our algorithm achieves this through the use of
        a new model class called {\em pointer decision lists}.
    \item Our algorithm provably and monotonically converges quickly to one of two possible outcomes: either we reach the Bayes optimal model, or 
    we reach a model that cannot be distinguished from the Bayes optimal, in the sense that the bias hunters can find no further improvements. If payments are made to the bias hunters in proportion to the scale of the improvement\ifdraft{ (the product of the size of the group they improve on and the magnitude of the improvement on that group)}\else\fi, we guarantee that the total payments made to the
        bias hunters can be bounded in advance.
    \item We can alternatively view our framework as an entirely algorithmic approach to training (minimax) fair
        and accurate models, by replacing the bias hunters with automated mechanisms for finding improving pairs.
        We propose and analyze two such mechanisms\ifdraft{: one based on a reduction to cost-sensitive classification,
        and an expectation-maximization style approach that alternates between finding the optimal model for
        a given subgroup and optimizing the subgroup for a given model.}\else{.}\fi
\end{itemize}
\vspace{-\topsep}
Our contributions include the introduction of the bias bounty framework; proofs of the
convergence, generalization and monotonicity properties of our algorithm; experimental
results on census-derived Folktables datasets~\citep{ding2021retiring}; and preliminary
findings from an initial bias bounty event held in an undergraduate class at the University of Pennsylvania.
\subsection{Limitations and Open Questions}
The primary limitation of our proposed framework is that it can only identify and correct sub-optimal performance on groups \emph{as measured on the data distribution for which we have training data}. It does not solve either of the following related problems:
\vspace{-\topsep}
\begin{enumerate}
    \item Our model appears to perform well on every group \emph{only because in gathering data, we have failed to sample representative members of that group}.
    \item The model that we have cannot be improved on some group \emph{only because we have failed to gather the features that are most predictive on that group}, and the performance would be improvable if only we had gathered better features.
\end{enumerate}
\vspace{-\topsep}
That is, our framework can be used to find and correct biases as measured on the data distribution from which we have data, but cannot be used to find and correct biases that come from having gathered the wrong dataset. In both cases, one primary obstacle to extending our framework is the need to be able to efficiently validate proposed fixes. For example, because we restrict attention to a single data distribution, given a proposed pair $(g,h)$, we can check on a holdout set whether $h$ in fact has improved performance compared to our model, on examples from group $g$. This is important to disambiguate distributional improvements compared to subsets of examples that amount to cherrypicking. How can we approach this problem when proposed improvements include evaluations on new datasets, for which we by definition do not have held out data?  Compelling solutions to this problem seem to us to be of high interest. We remark that a bias bounty program held under our proposed framework would at least serve to highlight \emph{where} new data collection efforts are needed, by disambiguating failures of training from failures for the data to properly represent a population: if a group continues to have persistently high error even in the presence of a large population of auditors in our framework, this is evidence that in order to obtain improved error on that group, we need to focus on better representing them within our data. 

\subsection{Related Work}
There are several strands of the algorithmic fairness literature that are closely related to our work. Most popular notions of algorithmic fairness (e.g. those that propose to equalize notions of error across protected groups as in e.g. \citep{hardt2016equality, agarwal2018reductions, gerrymandering,zafar2017fairness}, or those that aim to ``treat similar individuals similarly'' as in e.g. \citep{awareness,metriclearning,RY18,jung2019eliciting}) involve tradeoffs, in that asking for ``fairness'' involves settling for reduced accuracy. Several papers \citep{notradeoffs2,notradeoffs1} show that fairness constraints of these types need not involve tradeoffs (or can even be accuracy improving) \emph{on test data} if the training data has been corrupted by some bias model and is not representative of the test data. In cases like this, fairness constraints can act as corrections to undo the errors that have been introduced in the data. These kinds of results leverage differences between the training and evaluation data, and unlike our work, do not avoid tradeoffs between fairness and accuracy in settings in which the training data is representative of the true distribution.

A notable exception to the rule that fairness and accuracy must involve tradeoffs is the literature on multicalibration initiated by H{\'e}bert-Johnson et al.  \citep{multicalibration,multiaccuracy,momentmulti,multivalid,indistinguishability} that asks that a model's predictions be calibrated not just overall, but also when restricted to a large number of protected subgroups $g$. H{\'e}bert-Johnson et al. \citep{multicalibration} and Kim, Ghorbani, and Zou \citep{multiaccuracy} show that an arbitrary model $f$ can be postprocessed to satisfy multicalibration (and the related notion of ``multi-accuracy'') without sacrificing (much) in terms of model accuracy. Our aim is to achieve something similar, but for predictive error, rather than model calibration. There are two things driving the ``no tradeoff'' result for multicalibration, from which we take inspiration: 1) the fairness notion that they ask for is satisfied by the Bayes-optimal model, and 2) they do not optimize over a fixed model class, but rather a model class defined in terms of the groups $\cG$ that define their fairness notion. These two things will be true for us as well.

The notion of fairness that we ask for in this paper was studied in an online setting (in which the data, rather than the protected groups arrive online) by Blum and Lykouris \citep{BL20} and generalized by Rothblum and Yona \citep{RY21} as ``multigroup agnostic learnability.'' Noarov, Pai, and Roth \citep{NPR21} show how to obtain it in an online setting as part of the same unified framework of algorithms that can obtain multicalibrated predictions. The algorithmic results in these papers lead to complex models  --- in contrast, our algorithm produces ``simple'' predictors in the form of a decision list. In contrast to these prior works, we do not view the set of groups $\cG$ that we wish to offer guarantees to as fixed up front, but instead as something that can be discovered online, after models are deployed. Our focus is on fast algorithms to update existing models when new groups $g$ on which our model is performing poorly are discovered.

Concurrently and independently of our paper, Tosh and Hsu \citep{toshhsu21} study algorithms and sample complexity for multi-group agnostic learnability and give an algorithm (``Prepend'') that is equivalent to our Algorithm \ref{alg:basic} (``ListUpdate''). Their focus is on sample complexity of batch optimization, however --- in contrast to our focus on the discovery of groups on which our model is performing poorly online (e.g. as part of a ``bias bounty program''). They also are not concerned with the details of the optimization that needs to be solved to produce an update --- we give practical algorithms based on reductions to cost sensitive classification and empirical evaluation. Tosh and Hsu \citep{toshhsu21} also contains additional results, including algorithms producing more complex hypotheses but with improved sample complexity (again in the setting in which the groups are fixed up front).

The multigroup notion of fairness we employ \citep{BL20,RY21} aims to perform \emph{optimally} on each subgroup $g$, rather than \emph{equalizing} the performance across subgroups. This is similar in motivation to \emph{minimax} fairness, studied in \citep{minimax1,minimax2,minimax3}, which aim to minimize the error on the maximum error sub-group. \ifdraft Both of these notions of fairness have the merit that they produce models that pareto-dominate equal error solutions, in the sense that in a minimax (or multigroup) optimal solution, every group has weakly lower error than they would have in any equal error solution.  However, optimizing for minimax error over a fixed hypothesis class still results in tradeoffs in the sense that it results in higher error than the error optimal model in the same class, and that adding more groups to the guarantee makes the tradeoff more severe. \fi Our approach avoids tradeoffs by optimizing over a class that is dynamically expanded as the set of groups to be protected expands.

The idea of a ``bias bug bounty'' program dates back at least to a 2018 editorial of Amit Elazari Bar On \citep{bountyarticle}, and Twitter ran a version of a bounty program in 2021 to find bias issues in its image cropping algorithm \citep{twitterbounty}. These programs are generally envisioned to be quite different than what we propose here. On the one hand, we are proposing to automatically audit models and award bounties for the discovery of a narrow form of technical bias --- sub-optimal error on  well defined subgroups --- whereas the bounty program run by Twitter was open ended, with human judges and written submissions. On the other hand, the method we propose could underly a long-running program that could automatically correct the bias issues discovered in production systems at scale, whereas Twitter's bounty program was a labor intensive event that ran over the course of a week.

\section{Preliminaries}
We consider a supervised learning problem defined over \emph{labelled examples} $(x,y) \in \cX \times \cY$. This can represent (for example) a binary classification problem if $\cY = \{0,1\}$, a discrete multiclass classification problem if $\cY = \{1,\ldots,k\}$, or a regression problem if $\cY = \mathbb{R}$. We write $\cD$ to denote a joint probability distribution over labelled examples: $\cD \in \Delta (\cX \times \cY)$. We will write $D \sim \cD^n$ to denote a dataset consisting of $n$ labelled examples sampled i.i.d. from $\cD$. Our goal is to learn some \emph{model} represented as a function $f:\cX\rightarrow \cY$ which aims to predict the label of an example from its features, and we will evaluate the performance of our model with a loss function $\ell:\cY\times \cY \rightarrow [0,1]$, where $\ell(\hat y, y)$ represents the ``cost'' of mistakenly labelling an example that has true label $y$ with the prediction $f(x) = \hat y$. We will be interested in the performance of models $f$ not just \emph{overall} on the underlying distribution, but also on particular subgroups of interest. A subgroup corresponds to an arbitrary subset of the feature space $\cX$, which we will model using an indicator function:
\begin{definition}[Subgroups]
A \emph{subgroup} of the feature space $\cX$ will be represented as an indicator function $g:\cX\rightarrow \{0,1\}$. We say that $x \in \cX$ is in group $g$ if $g(x) = 1$ and $x$ is \emph{not} in group $g$ otherwise. Given a group $g$, we write $\mu_g(\cD)$ to denote its measure under the probability distribution $\cD$:
\ifdraft
$$\mu_g(\cD) = \Pr_\cD[g(x) = 1].$$
\else $\mu_g(\cD) = \Pr_\cD[g(x) = 1].$ \fi
We write $\mu_g(D)$ to denote the corresponding empirical measure under $D$, which results from viewing $D$ as the uniform distribution over its elements.
\end{definition}

We can now define the loss of a model both overall and on different subgroups:
\begin{definition}[Model Loss]
Given a model $f:\cX\rightarrow \cY$
We write $L(\cD,f)$ to denote the average loss of $f$ on distribution $\cD$:
\ifdraft
$$L(\cD,f) = \E_{(x,y) \sim \cD}[\ell(f(x),y)]$$
\else $L(\cD,f) = \E_{(x,y) \sim \cD}[\ell(f(x),y)]$. \fi
We  write $L(\cD,f,g)$ to denote the loss on $f$ conditional on membership in $g$:
$$L(\cD,f,g) = \E_{(x,y) \sim \cD}[\ell(f(x),y) | g(x) = 1].$$
Given a dataset $D$, we write $L(D,f)$ and $L(D,f,g)$ to denote the corresponding empirical losses on $D$, which result from viewing $D$ as the uniform distribution over its elements.
\end{definition}

The best we can ever hope to do in any prediction problem (fixing the loss function and the distribution) is to make predictions that are as accurate as those of a Bayes optimal model:
\begin{definition}
A \emph{Bayes Optimal} model $f^*:\cX \rightarrow \cY$ with respect to a loss function $\ell$ and a distribution $\cD$ satisfies:
\ifdraft
$$f^*(x) \in \arg\min_{y \in \cY} \E_{(x', y') \sim \cD}\left[\ell(y, y') | x' = x\right]$$
\else $f^*(x) \in \arg\min_{y \in \cY} \E_{(x', y') \sim \cD}\left[\ell(y, y') | x' = x\right]$, \fi
where $f^*(x)$ can be defined arbitrarily for any $x$ that is not in the support of $\cD$.
\end{definition}

The Bayes optimal model is pointwise optimal, and hence has the lowest loss of any possible model, not just overall, but simultaneously on \emph{every} subgroup. In fact, its easy to see that this is a characterization of Bayes optimality.
\begin{observation}
\label{obs:BO}
Fixing a loss function $\ell$ and a distribution $\cD$, $f^*$ is a Bayes optimal model if and only if for every group $g$ and every alternative model $h$:
\ifdraft
$$L(\cD, f^*, g) \leq L(\cD, h, g)$$
\else $L(\cD, f^*, g) \leq L(\cD, h, g)$.\fi
\end{observation}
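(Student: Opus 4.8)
The plan is to prove the two directions of the equivalence separately, exploiting a single structural fact: for any group $g$ with $\mu_g(\cD) > 0$, the conditional loss $L(\cD,f,g)$ is nothing but a $g$-membership-weighted average of the pointwise conditional losses $q_f(x) := \E_{(x',y')\sim\cD}[\ell(f(x),y') \mid x'=x]$, and Bayes optimality is exactly the statement that $f^*$ minimizes $q_f(x)$ pointwise for every $x$ in the support. Concretely, I would first record the identity
\[
L(\cD,f,g) = \frac{1}{\mu_g(\cD)}\,\E_{(x,y)\sim\cD}\big[\mathbbm{1}[g(x)=1]\,\ell(f(x),y)\big] = \frac{1}{\mu_g(\cD)}\,\E_{x}\big[\mathbbm{1}[g(x)=1]\, q_f(x)\big],
\]
obtained by the law of total expectation (conditioning on $x$ inside the group-restricted average). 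Both directions then reduce to how a pointwise inequality between $q_{f^*}$ and $q_h$ interacts with this nonnegatively weighted integral.

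For the forward direction (Bayes optimal $\Rightarrow$ the stated inequality), I would invoke the definition of $f^*$ directly: for every $x$ in the support of $\cD$ and every alternative model $h$, we have $q_{f^*}(x) \leq q_h(x)$, since $f^*(x)$ is a minimizer of $\E_{(x',y')\sim\cD}[\ell(y,y') \mid x'=x]$ over $y \in \cY$ and $h(x)$ is just one admissible value of $y$. Fixing any group $g$ with $\mu_g(\cD) > 0$, I then apply this pointwise inequality inside the expectation in the identity above; because the weight $\mathbbm{1}[g(x)=1]$ is nonnegative, monotonicity of expectation preserves the inequality, yielding $L(\cD,f^*,g) \leq L(\cD,h,g)$. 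Groups with $\mu_g(\cD)=0$ are excluded as the conditional loss is undefined, and points outside the support (where $f^*$ is unconstrained) are irrelevant since they carry no probability mass.

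For the converse I would argue by contrapositive. Suppose $f^*$ is \emph{not} Bayes optimal; then the ``suboptimal set''
\[
S = \Big\{ x : q_{f^*}(x) > \min_{y' \in \cY}\, \E_{(x',y'')\sim\cD}[\ell(y',y'') \mid x'=x] \Big\}
\]
must have $\Pr_\cD[x \in S] > 0$, for otherwise $f^*$ would attain the pointwise minimum almost surely and hence satisfy the definition of Bayes optimality. I take $g = \mathbbm{1}[x \in S]$ and let $h$ be any model assigning to each $x$ a pointwise-minimizing label, so that $q_h(x) < q_{f^*}(x)$ strictly for every $x \in S$. Plugging these into the identity, the integrand $\mathbbm{1}[g(x)=1]\big(q_{f^*}(x) - q_h(x)\big)$ is strictly positive on the positive-measure set $S$ and zero elsewhere, so its expectation is strictly positive; dividing by $\mu_g(\cD) > 0$ gives $L(\cD,f^*,g) > L(\cD,h,g)$, contradicting the hypothesis.

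The conceptual content of both directions is the same ``averaging preserves order'' argument, so I expect the main obstacle to be measure-theoretic bookkeeping in the converse rather than any genuine difficulty: namely, justifying that the suboptimal set $S$ has strictly positive measure, and that a strictly positive \emph{pointwise} gap integrates to a \emph{strict} inequality over $S$ (as opposed to merely a weak one). The forward direction is routine once pointwise optimality is invoked; the only care needed there is to restrict attention to groups $g$ with $\mu_g(\cD) > 0$ so that the conditional losses are well-defined.
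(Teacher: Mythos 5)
Your proof is correct and takes essentially the same approach as the paper: the paper states this observation without a formal proof, justifying it only by the remark that the Bayes optimal model is pointwise optimal and hence has lowest loss simultaneously on every subgroup, which is precisely the ``averaging preserves order'' argument you formalize, together with the natural positive-measure witness construction for the converse. The one caveat --- that for the converse direction Bayes optimality must be read up to almost-sure equivalence, since a failure of pointwise optimality on a measure-zero set cannot be witnessed by any group of positive mass --- is a technicality you flag yourself and the paper glosses over entirely.
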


The above characterization states that a model is Bayes optimal if and only if it induces loss that is as exactly as low as that of \emph{any possible} model $h$, when restricted to \emph{any possible} group $g$. It will also be useful to refer to approximate notions of Bayes optimality, in which the exactness is relaxed, as well as possibly the class of comparison models $\cH$, and the class of groups $\cG$. We call this $(\epsilon,\cG, \cH)$-Bayes optimality to highlight the connection to (exact) Bayes Optimality, but it is identical to what Rothblum and Yona \citep{RY21} call a ``multigroup agnostic PAC solution'' with respect to $\cG$ and $\cH$. Related notions were also studied in \citep{BL20,NPR21}.

\begin{definition}
A model $f:\cX\rightarrow \cY$ is $(\epsilon,\cC)$-Bayes optimal with respect to a collection of (group, model) pairs $\cC$ if for each $(g,h) \in \cC$, the performance of $f$ on $g$ is within $\epsilon$ of the performance of $h$ on $g$. In other words, for every $(g,h) \in \cC$:
\ifdraft
$$L(\cD,f,g) \leq L(\cD,h,g) + \frac{\epsilon}{\mu_g(\cD)}.$$
\else $L(\cD,f,g) \leq L(\cD,h,g) + \frac{\epsilon}{\mu_g(\cD)}.$ \fi
When $\cC$ is a product set $\cC = \cG\times \cH$, then we call $f$
``$(\epsilon,\cG,\cH)$-Bayes Optimal" and the condition is equivalent to requiring that for every $g \in G$, $f$ has performance on $g$ that is within $\epsilon$ of the \emph{best} model $h \in \cH$ on $g$. When $\cG$ and $\cH$ represent the set of all groups and models respectively, we call $f$ $\epsilon$-Bayes optimal.
\end{definition}

\begin{remark}
We have chosen to define approximate Bayes optimality by letting the approximation term $\epsilon$ scale proportionately to the inverse probability of the group $g$, similar to how notions of multigroup fairness are defined in  \citep{gerrymandering,momentmulti,multivalid}. An alternative (slightly weaker) option would be to require error that is uniformly bounded by $\epsilon$ for all groups, but to only make promises for groups $g$ that have probability $\mu_g$ larger than some threshold, as is done in \citep{multicalibration}. Some relaxation of this sort is necessary to provide guarantees on an unknown distribution based only on a finite sample from it, since we will necessarily have less statistical certainty about smaller subsets of our data.
\end{remark}

Note that $(\epsilon,\cG,\cH)$-Bayes optimality is identical to Bayes optimality when $\epsilon = 0$ and when $\cG$ and $\cH$ represent the classes of all possible groups and models respectively, and that it becomes an increasingly stronger condition as $\cG$ and $\cH$ grow in expressivity. 

\section{Certificates of Sub-Optimality and Update Algorithms}
\label{sec:certificates}

Recall that ``bias hunters" submit a group that our existing model performs poorly on and an improvement on that group. We need to formulate what our requirements for accepting their proposed improvement would be, and develop a method to incorporate these fixes into our model without retraining it. Formally, suppose we have an existing model $f$, and we find that it is performing sub-optimally on some group $g$. By Observation \ref{obs:BO}, it must be that $f$ is not Bayes optimal, and this will be witnessed by some model $h$ such that:
\ifdraft
$$L(\cD,f,g) > L(\cD,h,g)$$
\else $L(\cD,f,g) > L(\cD,h,g)$. \fi
We call such a pair $(g,h)$ a certificate of sub-optimality. \ifdraft Note that by Observation \ref{obs:BO}, such a certificate will exist if and only if $f$ is not Bayes optimal. \fi We can define a quantitative version of these certificates:
\begin{definition}
A group indicator function $g:\cX\rightarrow \{0,1\}$ together with a model $h:\cX \rightarrow \cY$ form a $(\mu, \Delta)$-certificate of sub-optimality for a model $f$  under distribution $\cD$ if:
\begin{enumerate}
    \item Group $g$ has probability mass at least $\mu$ under $\cD$: $\mu_g(\cD) \geq \mu$, and
    \item $h$ improves on the performance of $f$ on group $g$ by at least $\Delta$: $L(\cD,f,g) \geq L(\cD,h,g) + \Delta$
\end{enumerate}
\ifdraft We say that $(g,h)$ form a certificate of sub-optimality for $f$ if they form a $(\mu,\Delta)$-certificate of optimality for $f$ for any constants $\mu,\Delta > 0$. \fi
\end{definition}

The core of our algorithmic updating framework will rely on a close quantitative connection between certificates of sub-optimality and approximate Bayes optimality. The following theorem can be viewed as a quantitative version of Observation \ref{obs:BO}. \ifdraft\else Its proof can be found in the appendix.\fi

\begin{restatable}{theorem}{BOQ}
\label{thm:BOQ}
Fix any $\epsilon > 0$, and any collection of (group,model) pairs $\cC$. There exists a $(\mu,\Delta)$-certificate of sub-optimality $(g,h) \in \cC$ for $f$ if and only if $f$ is not $(\epsilon,\cC)$-Bayes optimal for $\epsilon < \mu\Delta$.
\end{restatable}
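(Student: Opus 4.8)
The plan is to strip both sides of the biconditional down to a single scalar inequality comparing the product $\mu\Delta$ against $\epsilon$. The key observation is that both ``being a $(\mu,\Delta)$-certificate'' and ``witnessing a failure of $(\epsilon,\cC)$-Bayes optimality'' are, for a fixed pair $(g,h)$, statements about the single quantity $\mu_g(\cD)\cdot\big(L(\cD,f,g)-L(\cD,h,g)\big)$: the certificate definition lower-bounds its two factors separately by $\mu$ and $\Delta$, while $(\epsilon,\cC)$-Bayes optimality, after clearing the denominator $\mu_g(\cD)$, is exactly the requirement that this product be at most $\epsilon$. So I would first rewrite the condition $L(\cD,f,g)\le L(\cD,h,g)+\epsilon/\mu_g(\cD)$ in its cleared form $\mu_g(\cD)\big(L(\cD,f,g)-L(\cD,h,g)\big)\le\epsilon$, so that its negation reads $\mu_g(\cD)\big(L(\cD,f,g)-L(\cD,h,g)\big)>\epsilon$.

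For the forward direction, suppose $(g,h)\in\cC$ is a $(\mu,\Delta)$-certificate and fix any $\epsilon<\mu\Delta$. From the two defining inequalities $\mu_g(\cD)\ge\mu$ and $L(\cD,f,g)-L(\cD,h,g)\ge\Delta\ge 0$ I would multiply to obtain $\mu_g(\cD)\big(L(\cD,f,g)-L(\cD,h,g)\big)\ge\mu\Delta>\epsilon$, which is precisely the cleared negation above, witnessed by this same pair $(g,h)\in\cC$; hence $f$ is not $(\epsilon,\cC)$-Bayes optimal. The only care needed is that both factors are nonnegative so the inequalities combine multiplicatively, which holds since $\mu,\Delta>0$.

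For the converse, suppose $f$ is not $(\epsilon,\cC)$-Bayes optimal; then some $(g,h)\in\cC$ satisfies $\mu_g(\cD)\big(L(\cD,f,g)-L(\cD,h,g)\big)>\epsilon$. Here I would take this same witnessing pair and read off its parameters, setting $\mu:=\mu_g(\cD)$ and $\Delta:=L(\cD,f,g)-L(\cD,h,g)$. Both are strictly positive (the product exceeds $\epsilon>0$), the pair $(g,h)$ satisfies the two certificate inequalities trivially with equality, and by construction $\mu\Delta=\mu_g(\cD)\big(L(\cD,f,g)-L(\cD,h,g)\big)>\epsilon$, i.e.\ $\epsilon<\mu\Delta$, as required.

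I expect the only genuinely subtle point to be interpreting the quantifier on $(\mu,\Delta)$ correctly: the claim is not that a single prespecified pair of parameters works, but that the existence of \emph{some} certificate with $\mu\Delta>\epsilon$ is equivalent to the failure of $(\epsilon,\cC)$-Bayes optimality. The asymmetry between the two directions makes this explicit---the forward direction discards information by retaining only the product bound $\mu\Delta$, whereas the converse must recover a valid $(\mu,\Delta)$ by matching the two parameters exactly to the measure and loss gap of the witnessing pair. This is also precisely why the $1/\mu_g(\cD)$ scaling in the definition of approximate Bayes optimality is the ``right'' normalization: it makes the threshold on the loss gap equal to $\epsilon/\mu_g(\cD)$, so that the product $\mu_g(\cD)\cdot\big(L(\cD,f,g)-L(\cD,h,g)\big)$ compares directly and without slack to $\epsilon$.
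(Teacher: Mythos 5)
Your proof is correct and follows essentially the same route as the paper's: both directions reduce to comparing the single product $\mu_g(\cD)\cdot\bigl(L(\cD,f,g)-L(\cD,h,g)\bigr)$ against $\epsilon$, with the paper's ``without loss of generality'' parameter choices playing exactly the role of your assignments $\mu:=\mu_g(\cD)$ and $\Delta:=L(\cD,f,g)-L(\cD,h,g)$. Your multiplicative combination of the two certificate inequalities (noting nonnegativity) is a slightly cleaner packaging of the paper's argument, but the substance is identical.
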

\ifdraft
\begin{proof}
  We need to prove two directions. First, we will assume that $f$ is $(\epsilon,\cC)$-Bayes optimal, and show that in this case there do not exist any pairs $(g,h) \in \cC$ such that $(g,h)$ form a $(\mu,\Delta)$-certificate of sub-optimality with $\mu\cdot \Delta > \epsilon$.  Fix a pair $(g,h) \in \cC$. Without loss of generality, we can take $\Delta = L(\cD,f,g) - L(\cD,h,g)$ (and if $\Delta \leq 0$ we are done, so we can also assume that $\Delta > 0$). Since $f$ is $(\epsilon,\cC)$-Bayes optimal, by definition we have that:
$$\Delta =L(\cD,f,g) - L(\cD,h,g) \leq \frac{\epsilon}{\mu_g(\cD)}$$
Solving, we get that $\Delta\cdot \mu_g \leq \epsilon$ as desired.

Next, we prove the other direction: We assume that there exists a pair $(g,h) \in \cC$ that form an $(\mu, \Delta)$-certificate of sub-optimality, and show that $f$ is not $(\epsilon, \cC)$-Bayes optimal for any $\epsilon < \mu\cdot \Delta$. Without loss of generality we can take $\mu = \mu_g(\cD)$ and conclude:
 $$L(\cD,f,g) - L(\cD,h,g) \geq \Delta = \frac{\mu\cdot \Delta}{\mu_g(\cD)} \geq \frac{\epsilon}{\mu_g(\cD)}$$
which falsifies $(\epsilon, \cC)$-Bayes optimality for any $\epsilon < \mu\cdot \Delta$ as desired.
\end{proof}
\fi

Theorem \ref{thm:BOQ} tells us that if we are looking for evidence that a model $f$ fails to be Bayes Optimal (or more generally, fails to be $(\epsilon, \cC)$-Bayes optimal), then without loss of generality, we can restrict our attention to certificates of sub-optimality with large parameters --- these exist if and only if $f$ is significantly far from Bayes optimal. But it does not tell us what to do if we find such a certificate. Can we use a certificate of sub-optimality $(g,h)$ for $f$ to easily update $f$ to a new model that both corrects the suboptimality witnessed by $(g,h)$ and makes measurable progress towards Bayes Optimality? It turns out that the answer is \emph{yes}, and we can do this with an exceedingly simple update algorithm, which we analyze next. The update algorithm (Algorithm \ref{alg:basic}) takes as input a model $f$ together with a certificate of sub-optimality for $f$, $(g,h)$, and outputs an improved model based on the following intuitive update rule: If an example $x$ is in group $g$ (i.e. if $g(x) = 1$), then we will classify $x$ using $h$; otherwise we will classify $x$ using $f$. 
\\

\begin{algorithm}[H]
\ifdraft
\else
\small
\fi 
\KwInput{A model $f_t$, and a certificate of suboptimality $(g_{t+1},h_{t+1})$}
Define $f_{t+1}$ as follows:
\[ f_{t+1}(x)= \begin{cases}
      f_t(x) & \text{if } g_{t+1}(x)=0 \\
      h_{t+1}(x) & \text{if } g_{t+1}(x)=1
   \end{cases}
\]

\KwOutput{$f_{t+1}$}
\caption{ListUpdate($f_t,(g_{t+1},h_{t+1})$): An Update Algorithm Constructing a Decision List}
\label{alg:basic}
\end{algorithm}

\begin{restatable}{theorem}{progress}
\label{thm:progress}
Algorithm \ref{alg:basic} (ListUpdate) has the following properties. If $(g_{t+1},h_{t+1})$ form a $(\mu,\Delta)$-certificate of sub-optimality for $f_{t}$, and $f_{t+1} = \textrm{ListUpdate}(f_t,(g_{t+1},h_{t+1}))$ then:
\begin{enumerate}
    \item The new model matches the performance of $h_{t+1}$ on group $g_{t+1}$: $L(\cD,f_{t+1},g_{t+1}) = L(\cD, h_{t+1},g_{t+1})$, and
    \item The overall performance of the model is improved by at least $\mu\cdot \Delta$: $L(\cD, f_{t+1}) \leq L(\cD, f_t) - \mu\cdot \Delta$.
\end{enumerate}
\end{restatable}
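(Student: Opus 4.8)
The plan is to prove both claims essentially directly from the construction of $f_{t+1}$, using the law of total expectation to split the loss according to membership in $g_{t+1}$. Claim 1 will be an exact identity, and claim 2 will follow from a two-way conditioning decomposition combined with the defining inequality of a certificate.

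For the first claim, I would observe that the conditional loss $L(\cD,f_{t+1},g_{t+1})$ only averages over points $x$ with $g_{t+1}(x)=1$, and on exactly these points the update rule sets $f_{t+1}(x)=h_{t+1}(x)$. Hence $\ell(f_{t+1}(x),y)=\ell(h_{t+1}(x),y)$ pointwise on the conditioning event, and taking conditional expectations gives $L(\cD,f_{t+1},g_{t+1}) = L(\cD,h_{t+1},g_{t+1})$ immediately, with no approximation.

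For the second claim, write $\mu_g := \mu_{g_{t+1}}(\cD)$ and let $\bar g_{t+1}$ denote the complementary group. The key step is to decompose the overall loss of any model $m$ as
$$L(\cD,m) = \mu_g\, L(\cD,m,g_{t+1}) + (1-\mu_g)\, L(\cD,m,\bar g_{t+1}).$$
Applying this to both $f_t$ and $f_{t+1}$ and subtracting, the out-of-group terms cancel, because $f_{t+1}$ agrees with $f_t$ wherever $g_{t+1}(x)=0$, so $L(\cD,f_{t+1},\bar g_{t+1}) = L(\cD,f_t,\bar g_{t+1})$. This leaves
$$L(\cD,f_{t+1}) - L(\cD,f_t) = \mu_g\big(L(\cD,f_{t+1},g_{t+1}) - L(\cD,f_t,g_{t+1})\big).$$
Substituting the identity from claim 1 and then the certificate guarantee $L(\cD,f_t,g_{t+1}) \geq L(\cD,h_{t+1},g_{t+1}) + \Delta$ bounds the bracketed term by $-\Delta$; since $\mu_g \geq \mu$ and $\Delta > 0$, the product is at most $-\mu\Delta$, yielding $L(\cD,f_{t+1}) \leq L(\cD,f_t) - \mu\Delta$.

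There is no genuinely hard step: the whole argument is a conditioning decomposition plus the defining inequality of a certificate. The only points requiring care are (i) making sure the out-of-group conditional losses cancel exactly, which relies on $f_{t+1}$ being defined to be \emph{literally} $f_t$ off the group rather than merely close to it, and (ii) tracking the direction of the inequality when passing from $\mu_g \geq \mu$ to $-\mu_g\Delta \leq -\mu\Delta$, where the sign flip depends on $\Delta > 0$. I would state both decompositions explicitly so the cancellation remains transparent.
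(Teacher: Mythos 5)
Your proposal is correct and follows essentially the same route as the paper's proof: both arguments establish Claim 1 by pointwise agreement of $f_{t+1}$ and $h_{t+1}$ on the event $g_{t+1}(x)=1$, and both prove Claim 2 by conditioning the overall loss on group membership, substituting $f_t$ off the group and $h_{t+1}$ on the group, and invoking the certificate inequality together with $\mu_{g_{t+1}}(\cD) \geq \mu$ and $\Delta > 0$. The only difference is presentational --- you subtract the two decompositions so the off-group terms cancel explicitly, whereas the paper chains the inequalities in a single displayed computation --- and your attention to the sign-flip step $-\mu_{g_{t+1}}(\cD)\Delta \leq -\mu\Delta$ matches the implicit use of $\Delta > 0$ in the paper.
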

\ifdraft
\begin{proof}
It is immediate from the definition of $f_{t+1}$ that $L(\cD,f_{t+1},g_{t+1}) = L(\cD, h_{t+1},g_{t+1})$, since for any $x$ such that  $g_{t+1}(x) = 1$, $f_{t+1}(x) = h_{t+1}(x)$. It remains to verify the 2nd condition. 
Because we also have that for every $x$ such that $g_{t+1}(x) = 0$, $f_{t+1}(x) = f_t(x)$, we can calculate:
\begin{eqnarray*}
L(\cD,f_{t+1}) &=& \Pr_{\cD}[g_{t+1}(x) = 0]\cdot \E_{\cD}[\ell(f_{t+1}(x), y) | g_{t+1}(x) = 0] + \Pr_{\cD}[g_{t+1}(x) = 1]\cdot \E_{\cD}[\ell(f_{t+1}(x), y) | g_{t+1}(x) = 1] \\
&=&  \Pr_{\cD}[g_{t+1}(x) = 0]\cdot \E_{\cD}[\ell(f_{t}(x), y) | g_{t+1}(x) = 0] + \Pr_{\cD}[g_{t+1}(x) = 1]\cdot \E_{\cD}[\ell(h_{t+1}(x), y) | g_{t+1}(x) = 1]\\
&\leq& \Pr_{\cD}[g_{t+1}(x) = 0]\cdot \E_{\cD}[\ell(f_{t}(x), y) | g_{t+1}(x) = 0] + \Pr_{\cD}[g_{t+1}(x) = 1]
\left(\E_{\cD}[\ell(f_{t}(x), y) | g_{t+1}(x) = 1] - \Delta \right)\\
&\leq& L(\cD, f_t) - \mu\Delta
\end{eqnarray*}
\end{proof}
\else \fi
\ifdraft
\else 
\fi

\ifdraft
\begin{figure}
    
    \includegraphics[scale=0.2]{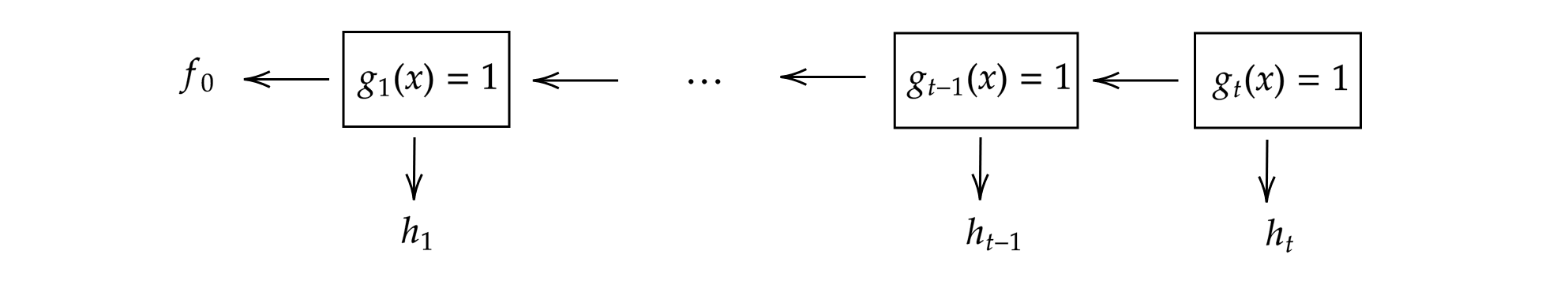}
    \caption{The decision list model $f_t$ output by Algorithm \ref{alg:basic} after $t$ iterations.}\label{fig:dl}
\end{figure}
\else
\fi

\ifdraft \else The proof can be found in the Appendix. \fi We can use Algorithm \ref{alg:basic} as an iterative update algorithm: If we have a model $f_0$, and then discover a certificate of sub-optimality $(g_1,h_1)$ for $f_0$, we can update our model to a new model $f_1$. If we then find a new certificate of sub-optimality $(g_2,h_2)$, we can once again use Algorithm \ref{alg:basic} to update our model to a new model, $f_2$, and so on.  The result is that at time $t$, we have a model $f_t$ in the form of a decision list in which the internal nodes branch on the group indicator functions $g_i$ and the leaves invoke the models $h_{i}$ or the initial model  $f_0$. \ifdraft See Figure \ref{fig:dl}. Note that to evaluate such a decision list on a point $x$, although we might need to evaluate $g_i(x)$ for every group indicator used to define the list, we only need to evaluate a single model $h_i(x)$. Moreover, as we will show next in Theorem \ref{thm:updatebound}, when the decision list is constructed iteratively in this manner, it cannot grow very long. Thus, evaluation can be fast even if the models used to construct it are complex.\fi The fact that each update makes progress towards Bayes Optimality (in fact, optimal progress, given Theorem \ref{thm:BOQ}) means that this updating process cannot go on for too long:
\begin{restatable}{theorem}{updatebound}
\label{thm:updatebound}
Fix any $\epsilon > 0$. For any initial model $f_0$ with loss $L(\cD,f_0) = \ell_0$ and any sequence of models $f_1,\ldots,f_T$, such that $f_i = \textrm{ListUpdate}(f_{i-1},(g_{i},h_{i}))$ and each pair $(g_i,h_{i})$ forms a $(\mu,\Delta)$-certificate of suboptimality for $f_{i-1}$ for some $\mu, \Delta$ such that  $\mu\cdot \Delta \geq \epsilon$, the length of the update sequence must be at most $T \leq \frac{\ell_0}{\epsilon} \leq \frac{1}{\epsilon}$.
\end{restatable}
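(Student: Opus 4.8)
The plan is to run a simple potential argument, using the overall loss $L(\cD,f_t)$ as a nonnegative potential that must strictly decrease by a fixed amount at each step. The engine is the second guarantee of Theorem \ref{thm:progress}: since each pair $(g_i,h_i)$ forms a $(\mu,\Delta)$-certificate of sub-optimality for $f_{i-1}$ with $\mu\cdot\Delta \geq \epsilon$, applying that theorem to the update $f_i = \textrm{ListUpdate}(f_{i-1},(g_i,h_i))$ gives
\[
L(\cD,f_i) \leq L(\cD,f_{i-1}) - \mu\cdot\Delta \leq L(\cD,f_{i-1}) - \epsilon .
\]
Note that the $\mu$ and $\Delta$ witnessing the certificate may differ from step to step; all that matters is that every step enjoys at least an $\epsilon$ decrease, which holds whenever the product is at least $\epsilon$.

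Next I would telescope this inequality across the whole sequence. Summing (or inducting on $i$ from $1$ to $T$) yields
\[
L(\cD,f_T) \leq L(\cD,f_0) - T\epsilon = \ell_0 - T\epsilon .
\]
Finally I would invoke the lower bound on loss: because the loss function takes values in $[0,1]$, every average loss is nonnegative, so $L(\cD,f_T) \geq 0$. Combining with the telescoped bound gives $0 \leq \ell_0 - T\epsilon$, i.e.\ $T \leq \ell_0/\epsilon$. Using the further fact that $\ell \in [0,1]$ forces $\ell_0 = L(\cD,f_0) \leq 1$, which yields the claimed $T \leq \ell_0/\epsilon \leq 1/\epsilon$.

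There is no real obstacle here — the result is an immediate corollary of Theorem \ref{thm:progress}, and the only things to be careful about are (i) stating that the per-step decrease is at least $\epsilon$ under the hypothesis $\mu\cdot\Delta \geq \epsilon$ even though the witnessing parameters may vary across steps, and (ii) explicitly using the boundedness of $\ell$ in $[0,1]$ both to get $L(\cD,f_T)\geq 0$ (for the first inequality) and to bound $\ell_0 \leq 1$ (for the second). The substance of the statement lives entirely in the progress guarantee already proven; this theorem just repackages it as a termination bound.
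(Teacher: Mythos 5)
Your proof is correct and is essentially identical to the paper's: both use the per-step decrease of at least $\epsilon$ from Theorem \ref{thm:progress}, telescope to get $L(\cD,f_T) \leq \ell_0 - T\epsilon$, and conclude from nonnegativity of the loss. Your added care about the step-varying $(\mu,\Delta)$ and the explicit use of $\ell_0 \leq 1$ for the final inequality are fine but do not change the argument.
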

\ifdraft
\begin{proof}
By assumption $L(\cD,f_0) = \ell_0$. Because each $(g_i,h_{i})$ is a $(\mu,\Delta)$-certificate of suboptimality of $f_{i-1}$ with $\mu\cdot \Delta \geq \epsilon$, we know from Theorem \ref{thm:progress} that for each $i$, $L(\cD,f_i) \leq L(\cD,f_{i-1}) - \epsilon$. Hence $L(\cD,f_T) \leq \ell_0 - T\epsilon$. But loss is non-negative: $L(\cD,f_T) \geq 0$. Thus it must be that $T \leq \frac{\ell_0}{\epsilon}$ as desired.
\end{proof}
\fi

What can we do with such an update algorithm? Given a model $f_t$, we can search for certificates of sub-optimality, and if we find them, we can make quantitative progress towards improving our model. We can then repeat the process. The guarantee of Theorem \ref{thm:updatebound} is that this process of searching and updating cannot go on for very many rounds $T$ before we arrive at a model $f_T$ that \emph{our search process is unable to demonstrate is not Bayes Optimal}. How interesting this is depends on what our search process is.

Suppose, for example, that we have an optimization algorithm that for some class of (group,model) pairs $\cC$ can find a certificate of sub-optimality $(g,h) \in \cC$ whenever one exists. Paired with our update algorithm, we obtain an algorithm which quickly converges to an $(\epsilon,\cC)$-Bayes Optimal model. We give such an algorithm in \ifdraft Section \ref{sec:restricted} \else Appendix \ref{app:algopt}\fi.

Suppose alternately that we open the search for certificates of sub-optimality to a large and motivated population: for example, to machine learning engineers, regulators and the general public, incentivized by possibly large monetary rewards. In this case, the guarantee of Theorem \ref{thm:updatebound} is that the process of iteratively opening our models up to scrutiny and updating whenever certificates of suboptimality are found cannot go on for too many rounds: at convergence, it must be \emph{either} that our deployed model is $\epsilon$-Bayes optimal, \emph{or} that if not, at least nobody can find any evidence to contradict this hypothesis. Since in general it is not possible to falsify Bayes optimality given only a polynomial amount of data and computation, this is in a strong sense the best we can hope for. We give a procedure for checking large numbers of arbitrarily complex submitted proposals for certificates of sub-optimality (e.g. that arrive as part of a bias bounty program) in Section \ref{sec:unrestricted}. There are two remaining obstacles, which we address in the next sections:

\ifdraft
\begin{enumerate}
    \item Our analysis so far is predicated on our update algorithm being given $(\Delta,\mu)$ certificates of sub-optimality $(g,h)$. But $\mu$ and $\Delta$ are defined with respect to the distribution $\cD$, and we will not have direct access to $\cD$ --- we will only have samples drawn from $\cD$. So how can we find certificates of sub-optimality and check their parameters? In an algorithmic application in which we search for certificates within a restricted class $\cC$, we can appeal to uniform convergence bounds, but the bias bounty application poses additional challenges. In this case, the certificates are not constrained to come from any fixed class, and so we cannot appeal to uniform convergence results. If we are opening up the search for certificates of sub-optimality to the public (with large monetary incentives), we also need to be prepared to handle a very large number of submissions. In Section \ref{sec:unrestricted} we show how to use techniques from adaptive data analysis to re-use a small holdout set to check a very large number of submissions, while maintaining strong worst case guarantees \citep{DFHPRR1,DFHPRR2,DFHPRR3,BH15,BNSSSU16,JLNRSS20}.
    \item Theorem \ref{thm:progress} gives us a guarantee that whenever we are given a certificate of sub-optimality $(g_{t+1},h_{t+1})$, our new model $f_{t+1}$ makes improvements both with respect to its error on $g_{t+1}$, and with respect to overall error. But it does not promise that the update does not increase error for some other previously identified group $g_{i}$ for $i \leq t$. This would be particularly undesirable in a ``bias bounty'' application, and would represent the kind of tradeoff that our framework aims to circumvent. However, we show in Section \ref{sec:monotone} that (up to small additive terms that come from statistical estimation error), our updates can be made to be groupwise monotonically error improving, in the sense that the update at time $t+1$ does not increase the error for any group $g_i$ identified at any time $i \leq t$.
\end{enumerate}
\else
(1) Our analysis so far is predicated on our update algorithm being given $(\Delta,\mu)$ certificates of sub-optimality $(g,h)$. But $\mu$ and $\Delta$ are defined with respect to the distribution $\cD$, and we will not have direct access to $\cD$ --- we will only have samples drawn from $\cD$. So how can we find certificates of sub-optimality and check their parameters? 

(2) Theorem \ref{thm:progress} gives us a guarantee that whenever we are given a certificate of sub-optimality $(g_{t+1},h_{t+1})$, our new model $f_{t+1}$ makes improvements both with respect to its error on $g_{t+1}$, and with respect to overall error. But it does not promise that the update does not increase error for some other previously identified group $g_{i}$ for $i \leq t$. This would be particularly undesirable in a ``bias bounty'' application, and would represent the kind of tradeoff that our framework aims to circumvent. However, we show in Section \ref{sec:monotone}  that (up to small additive terms that come from statistical estimation error), our updates can be made to be groupwise monotonically error improving.
\fi 

\ifdraft
\else
\vspace{-0.2cm}
\fi
\section{Obtaining Certificates of Suboptimality}
\label{sec:reusable}

In this section we show how to find and verify proposed certificates of sub-optimality $(g,h)$ given only a finite sample of data $D \sim \cD^n$. We consider two important cases:
\ifdraft 

\else
\fi
In Section \ref{sec:unrestricted}, we consider the ``bias bounty'' application in which the discovery of certificates of sub-optimality is crowd-sourced (aided perhaps with API access to the model $f_t$ and a training dataset). In this case, we face two main difficulties:
\ifdraft
\begin{enumerate}
    \item The submitted certificates $(g,h)$ might be arbitrary (and in particular, not guaranteed to come from a class of bounded complexity or expressivity), and
    \item We  expect to receive a very large number of submitted certificates, all of which need to be checked.
\end{enumerate}
\else
(1) The submitted certificates $(g,h)$ might be arbitrary (and in particular, not guaranteed to come from a class of bounded complexity or expressivity and (2) We  expect to receive a very large number of submitted certificates, all of which need to be checked.
\fi 
The first of these difficulties means that we cannot appeal to uniform convergence arguments to obtain rigorous bounds on the sub-optimality parameters $\mu$ and $\Delta$. The second of these difficulties means that we cannot naively rely on estimates from a (single, re-used) holdout set to obtain rigorous bounds on $\mu$ and $\Delta$.
\ifdraft

\else \fi
In Section \ref{sec:restricted} we consider the algorithmic application in which the discovery of certificates is treated as an optimization problem over $\cC$, for particular classes $\cC$. In this case we give two algorithms for finding $(\epsilon,\cC)$-Bayes optimal models via efficient reductions to cost sensitive classification problems over an appropriately defined class, solved over a polynomially sized dataset sampled from the underlying distribution\ifdraft \else, which we discuss in detail in Appendix \ref{app:algopt}\fi.
\ifdraft 
\else
\vspace{-0.2cm}
\fi 
\subsection{Unrestricted Certificates and Bias Bounties}
\label{sec:unrestricted}

In this section we develop a procedure to re-use a holdout set to check the validity of a very large number of proposed certificates of sub-optimality $(g_i,h_i)$ with rigorous guarantees. Here we make no assumptions at all about the structure or complexity of either the groups $g_i$ or models $h_i$, or the process by which they are generated. This allows us the flexibility to model e.g. a public bias bounty, in which a large collection of human competitors use arbitrary methods to find and propose certificates of sub-optimality, potentially adaptively as a function of all of the information that they have available. We use simple description length techniques developed in the adaptive data analysis literature \citep{DFHPRR2,BH15}. Somewhat more sophisticated techniques which employ noise addition \citep{DFHPRR1,DFHPRR3,BNSSSU16,JLNRSS20} could also be directly used here to improve the sample complexity bound in Theorems \ref{thm:adaptive} and \ref{thm:bounty-SC} by a $\sqrt{1/\epsilon}$ factor, but we elide this for clarity of exposition. First we give a simple algorithm (Algorithm \ref{alg:checker}) that takes as input a stream of arbitrary adaptively chosen triples $(f_i,g_i,h_i)$, and checks if each $(g_i,h_i)$ form a certificate of sub-optimality for $f_i$. We then use this as a sub-routine in Algorithm \ref{alg:FalsifyAndUpdate} which maintains a sequence of models $f_t$ produced by ListUpdate (Algorithm \ref{alg:basic}) and takes as input a sequence of proposed certificates $(g_i,h_i)$ which claim to be certificates of sub-optimality for the current model $f_t$: it updates the current model whenever such a proposed certificate is verified.

\begin{algorithm}
\ifdraft
\else 
\small
\fi
\KwInput{Holdout dataset $D$, Target $\epsilon$, and a stream of submissions  $(f_1,g_1,h_1),(f_2,g_2,h_2),\ldots$ of length at most $U$}
$\mathbf{NumberAccepted} \leftarrow 0$

\While{$\mathbf{NumberAccepted} \leq 2/\epsilon$ and $i \leq U$}{
  Consider the next submission $(f_i,g_i,h_i)$ \\

  Compute $\mu_i \leftarrow \mu_D(g_i)$, $\Delta_i \leftarrow L(D,f_i,g_i) - L(D,h_i,g_i)$

  \If{$\mu_i\cdot \Delta_i < \frac{3\epsilon}{4}$} {
    \KwOut{$\pi_i = \bot$ (Submission Rejected)}
  }
  \Else{
    \KwOut{$\pi_i = \top$ (Submission Accepted)}
    $\mathbf{NumberAccepted} \leftarrow \mathbf{NumberAccepted}$+1.
  }
}
\caption{CertificateChecker($\epsilon,D,U,(f_1,g_1,h_1),\ldots )$: An algorithm that takes as input a stream of submissions $(f_i,g_i,h_i)$ and checks if $(g_i,h_i)$ is a certificate of sub-optimality for $f_i$ with sufficiently high parameter values.}
\label{alg:checker}
\end{algorithm}

\begin{restatable}{theorem}{certchecker}
\label{thm:adaptive}
Let $\cD \in \Delta(\cX \times \cY)$ be any distribution over labelled examples, and let $D \sim \cD^n$ be a holdout dataset consisting of $n$ i.i.d. samples from $\cD$. Suppose:
\ifdraft
$$n \geq \frac{65 \ln(2U/\delta')}{\epsilon^3}$$
\else $n \geq \frac{65 \ln(2U/\delta')}{\epsilon^3}$. \fi
Let $\pi$ be the output stream generated by CertificateChecker$(\epsilon,D,(f_1,g_1,h_1),\ldots)$ (Algorithm \ref{alg:checker}).
Then for any possibly adaptive process generating a stream of up to $U$ submissions $(f_i,g_i,h_i)$ as a function of the output stream $\pi \in \{\bot,\top\}^*$, with probability $1-\delta$ over the randomness of $D$:
\begin{enumerate}
    \item For every round $i$ such that $\pi_i = \bot$ (the submission is rejected), we have that $(g_i,h_i)$ is not a $(\mu,\Delta)$ certificate of sub-optimality for $f_i$ for any $(\mu,\Delta)$ with $\mu\cdot \Delta \geq \epsilon$. And:
    \item For every round $i$ such that $\pi_i = \top$ (the submission is accepted), we have that $(g_i,h_i)$ is a $(\mu,\Delta)$-certificate of sub-optimality for $f_i$ for $\mu\cdot\Delta \geq \epsilon/2$.
\end{enumerate}
\end{restatable}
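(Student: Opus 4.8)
The plan is to reduce both conclusions to a single high-probability event: that for every submission $i$ the algorithm actually processes, the empirical product $\mu_i\cdot\Delta_i$ is within $\epsilon/4$ of its population counterpart. The first useful observation is that this product is in fact a single, bounded-range statistical query. Writing $\phi_i(x,y)=\big(\ell(f_i(x),y)-\ell(h_i(x),y)\big)\,g_i(x)\in[-1,1]$, one checks directly that $\mu_D(g_i)\cdot L(D,f_i,g_i)=\frac1n\sum_{(x,y)\in D}\ell(f_i(x),y)\,g_i(x)$, so that $\mu_i\cdot\Delta_i=\frac1n\sum_{(x,y)\in D}\phi_i(x,y)$ is exactly the empirical mean of $\phi_i$, whose expectation is $\mu_{g_i}(\cD)\big(L(\cD,f_i,g_i)-L(\cD,h_i,g_i)\big)$. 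This sidesteps any need to control $\mu_i$ and $\Delta_i$ separately (and any $0/0$ issue for tiny groups). Granting the accuracy event, the two conclusions are pure arithmetic against the threshold $3\epsilon/4$: a rejection gives empirical product $<3\epsilon/4$, hence population product $<\epsilon$, which rules out any $(\mu,\Delta)$-certificate with $\mu\Delta\ge\epsilon$ (such a certificate would force $\mu_{g_i}(\cD)\big(L(\cD,f_i,g_i)-L(\cD,h_i,g_i)\big)\ge\mu\Delta\ge\epsilon$, as $\Delta>0$ and $\mu_{g_i}(\cD)\ge\mu$); an acceptance gives empirical product $\ge 3\epsilon/4$, hence population product $\ge\epsilon/2$, so taking $\mu=\mu_{g_i}(\cD)$ and $\Delta=L(\cD,f_i,g_i)-L(\cD,h_i,g_i)$ exhibits a $(\mu,\Delta)$-certificate with $\mu\Delta\ge\epsilon/2$.

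The real work is establishing the accuracy event under adaptivity: the submissions $(f_i,g_i,h_i)$, and hence the queries $\phi_i$, are chosen as a function of the output stream $\pi$, which itself depends on the holdout $D$, so $\phi_i$ is not independent of $D$ and a naive union bound over submissions is invalid. I would handle this with the standard description-length / transcript-counting argument from adaptive data analysis. Fix the analyst's internal randomness (independent of $D$); then the interaction is a deterministic function of the transcript, and each query $\phi_i$ is determined by the transcript prefix $\pi_{<i}$. The crucial structural fact built into Algorithm \ref{alg:checker} is that the loop halts once more than $2/\epsilon$ submissions have been accepted, so every realizable transcript is a string in $\{\bot,\top\}^{\le U}$ containing at most $K=\lceil 2/\epsilon\rceil+1$ copies of $\top$. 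The number of such transcripts is therefore at most $(U{+}1)\sum_{j\le K}\binom{U}{j}\le (U{+}1)(eU/K)^K$, whose logarithm is $O\!\big(\tfrac1\epsilon\log U\big)$. For each fixed transcript the induced sequence of at most $U$ queries is fixed and independent of $D$, so Hoeffding applies: a $[-1,1]$-valued mean is estimated to within $\epsilon/4$ except with probability $2\exp(-n\epsilon^2/32)$. A union bound over all transcripts and all query positions then makes the accuracy event hold simultaneously, which in particular covers the queries on the one transcript that is actually realized.

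Combining the two, I would require $2\,U\cdot(\#\text{transcripts})\cdot\exp(-n\epsilon^2/32)\le\delta$, i.e. $n\gtrsim\tfrac{32}{\epsilon^2}\big(\log(\#\text{transcripts})+\log(U/\delta)\big)$; since $\log(\#\text{transcripts})=O\!\big(\tfrac1\epsilon\log U\big)$, this matches the claimed sufficient condition $n\ge\Theta\!\big(\tfrac{\log(U/\delta)}{\epsilon^3}\big)$, with the extra $1/\epsilon$ over a non-adaptive bound being precisely the price of the $\le 2/\epsilon$ bits of feedback leaked through acceptances. The main obstacle, and the only genuinely delicate step, is this adaptivity bookkeeping: getting the transcript count right so that the cap on acceptances (rather than the total number $U$ of submissions) controls the union-bound size, which is what keeps the sample complexity at $1/\epsilon^3$ rather than letting it blow up with $U$. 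Handling a randomized analyst is routine by conditioning on its randomness and integrating, and the precise constant ($65$) falls out of the Hoeffding exponent together with the lower-order terms in the transcript count. I would also note that the alternative phrasing via the max-information / description-length bounds of \citep{BH15,DFHPRR2} gives the identical result, since the transcript is describable in $O\!\big(\tfrac1\epsilon\log U\big)$ bits.
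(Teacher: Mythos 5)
Your proposal is correct and takes essentially the same approach as the paper's own proof: both rewrite $\mu_i\cdot\Delta_i$ as the empirical mean of a single $[-1,1]$-valued function $\mathbbm{1}[g_i(x)=1]\cdot(\ell(f_i(x),y)-\ell(h_i(x),y))$, apply a Chernoff/Hoeffding bound to fixed queries, and then union-bound over the polynomially many transcripts permitted by the cap of roughly $2/\epsilon$ acceptances, yielding the $\ln(U/\delta)/\epsilon^3$ sample complexity and the same threshold arithmetic ($3\epsilon/4 \pm \epsilon/4$) for the two conclusions. The only differences are minor bookkeeping: your transcript count is slightly more careful and you explicitly handle analyst randomness by conditioning, neither of which changes the argument.
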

\ifdraft

The high level idea of the proof is as follows: For any \emph{fixed} (i.e. non-adaptively chosen) sequence of submissions, a Chernoff bound and a union bound are enough to argue that the estimate of the product of the parameters $\mu_i$ and $\Delta_i$ on the holdout set is with high probability close to their expected value on the underlying distribution. We then observe that submissions depend on the holdout set only through the transcript $\pi$, and so are able to union bound over all possible transcripts $\pi$. Since $\pi$ contains only $2/\epsilon$ instances in which the submission is accepted, the number of such transcripts grows only polynomially in $U$ rather than exponentially, and so we can union bound over all transcripts with only a logarithmic dependence in $U$.  

\begin{proof}
We first consider any \emph{fixed} triple of functions $f_i:\cX\rightarrow \cY, h_i:\cX\rightarrow,\cY, g_i:\cX\rightarrow \{0,1\}$. Observe that we can write:
\begin{eqnarray*}
\mu_i\cdot \Delta_i &=& \mu_D(g_i)\cdot \left(
 L(D,f_i,g_i) - L(D,h_i,g_i) \right) \\
 &=& \frac{\sum_{(x,y)\in D}\mathbbm{1}[g_i(x) = 1]}{n} \cdot \frac{\sum_{(x,y) \in D} \mathbbm{1}[g_i(x) = 1]\cdot \left(\ell(f_i(x),y) - \ell(h_i(x),y)\right)}{\sum_{(x,y) \in D} \mathbbm{1}[g_i(x) = 1]} \\
 &=& \frac{1}{n} \sum_{(x,y) \in D} \mathbbm{1}[g_i(x) = 1]\cdot \left(\ell(f_i(x),y) - \ell(h_i(x),y)\right)
\end{eqnarray*}
Since each $(x,y) \in D$ is drawn independently from $\cD$, each term in the sum $\mathbbm{1}[g_i(x) = 1]\cdot \left(\ell(f_i(x),y) - \ell(h_i(x),y)\right)$ is an independent random variable taking value in the range $[-1,1]$. Thus $\mu_i\cdot \Delta_i$ is the average of $n$ independent bounded random variables and we can apply a Chernoff bound to conclude that for any value of $\delta' > 0$:
$$\Pr_{D \sim \cD^n}\left[\left| \mu_i\cdot \Delta_i -\mu_\cD(g_i)\cdot \left(
 L(\cD,f_i,g_i) - L(\cD,h_i,g_i)\right) \right|  \geq \sqrt{\frac{2\ln(2/\delta')}{n}}\right] \leq \delta'$$
 Solving for $n$ we have that with probability $1-\delta'$, we have $\left| \mu_i\cdot \Delta_i -\mu_\cD(g_i)\cdot \left(
 L(\cD,f_i,g_i) - L(\cD,h_i,g_i)\right) \right| \leq \frac{\epsilon}{4}$ if:
 $$n \geq \frac{32 \ln(2/\delta')}{\epsilon^2}$$
 This analysis was for a fixed triple of functions $(f_i,h_i,g_i)$, but these triples  can be chosen arbitrarily as a function of the transcript $\pi$. We therefore need to count how many transcripts $\pi$ might arise. By construction, $\pi$ has length at most $U$ and has at most $2/\epsilon$ indices such that $\pi_i = \top$. Thus the number of transcripts $\pi$ that can arise is at most: ${U \choose \frac{2}{\epsilon}}\cdot 2^{2/\epsilon} \leq U^{2/\epsilon}$, and each transcript results in some sequence of $U$ triples $(f_i,h_i,g_i)$. Thus for any mechanism for generating triples from transcript prefixes, there are at most $U^{2/\epsilon + 1}$ triples that can ever arise. We can complete the proof by union bounding over this set. Taking $\delta' = \frac{\delta}{U^{2/\epsilon + 1}}$ and plugging into our Chernoff bound above, we obtain that with probability $1-\delta$ over the choice of $D$, for any method of generating a sequence of $U$ triples $\{(f_i,h_i,g_i)\}_{i=1}^U$ from transcripts $\pi$, we have that:
 $\max_i\left| \mu_i\cdot \Delta_i -\mu_\cD(g_i)\cdot \left(
 L(\cD,f_i,g_i) - L(\cD,h_i,g_i)\right) \right| \leq \frac{\epsilon}{4}$ so long as:
  $$n \geq \frac{32(\frac{2}{\epsilon} + 1) \ln(2U/\delta')}{\epsilon^2} \geq \frac{65 \ln(2U/\delta')}{\epsilon^3}$$

 Finally, note that whenever this event obtains, the conclusions of the theorem hold, because we have that $\pi_i = \top$ exactly when $\mu_i\cdot \Delta_i \geq \frac{3\epsilon}{4}$. In this case, $\mu_\cD(g_i)\cdot \left(
 L(\cD,f_i,g_i) - L(\cD,h_i,g_i)\right) \geq \frac{3\epsilon}{4} - \frac{\epsilon}{4} = \frac{\epsilon}{2}$ as desired. Similarly, whenever $\pi_i = \bot$, we have that $\mu_\cD(g_i)\cdot \left(
 L(\cD,f_i,g_i) - L(\cD,h_i,g_i)\right) \leq \frac{3\epsilon}{4} + \frac{\epsilon}{4} = \epsilon$ as desired.
\end{proof}
\else
The proof can be found in the Appendix.
\fi

We conclude this section by showing that we can use CertificateChecker (Algorithm \ref{alg:checker}) to run an algorithm FalsifyAndUpdate (Algorithm \ref{alg:FalsifyAndUpdate}) which\ifdraft:
\begin{enumerate}
    \item Persistently maintains a current model $f_t$ publicly, and elicits a stream of submissions $(g_i,h_i)$ attempting to falsify the hypothesis that the current model $f_t$ is approximately Bayes optimal,
    \item With high probability does not reject any submissions that falsify the assertion that $f_t$ is $\epsilon$-Bayes optimal,
    \item With high probability does not accept any submissions that do not falsify the assertion that $f_t$ is $\epsilon/2$-Bayes optimal,
    \item Whenever it accepts a submission $(g_i,h_i)$, it updates the current model $f_t$ and outputs a new model $f_{t+1}$ such that $L(\cD,f_{t+1}) \leq L(\cD,f_t) - \frac{\epsilon}{2}$ and such that $(g_i,h_i)$ no longer falsifies the sub-optimality of $f_{t+1}$, and
    \item With high probability does not halt until receiving $U$ submissions.
\end{enumerate}
\else \ persistently maintains a current model, and is able to correctly accept and reject proposed certificates of sub-optimality. \fi

\begin{algorithm}
\ifdraft
\else 
\small
\fi
\KwInput{An initial model $f_0$, a holdout dataset $D$, Target $\epsilon$, and a stream of submissions  $(g_1,h_1),(g_2,h_2),\ldots$ of length at most $U$}
Let $t \leftarrow 0$ \\

Initialize an instance of CertificateChecker($\epsilon, D, U, \ldots$)

\While{CertificateChecker has not halted}{
  Consider the next submission $(g_i,h_i)$ \\

  Feed the triple $(f_t,g_i,h_i)$ to CertificateChecker and receive $\pi_i\in \{\bot,\top\}$ \\

  \If{$\pi_i = \bot$}{
    \KwOut{Submission $(g_i,h_i)$ is rejected.}
  }
  \Else{
   Let $t \leftarrow t+1$ and let $f_t = \textrm{ListUpdate}(f_{t-1},(g_i,h_i))$ \\

   \KwOut{Submission $(g_i,h_i)$ is accepted. The new model is $f_{t}$.}
  }

}
\caption{FalsifyAndUpdate($\epsilon,D,(f_1,g_1,h_1),\ldots )$: An algorithm that maintains a sequence of models $f_1,\ldots,f_T$ and accepts submissions of proposed certificates of sub-optimality $(g_i,h_i)$ that attempt to falsify the assertion that the current model $f_t$ is not $\epsilon$-Bayes optimal. It either accepts or rejects each submission: if it accepts a submission then it also updates its current model $f_t$ and outputs a new model $f_{t+1}$.}
\label{alg:FalsifyAndUpdate}
\end{algorithm}

\begin{restatable}{theorem}{bountySC}
\label{thm:bounty-SC}
Fix any $\epsilon,\delta > 0$. Let $\cD \in \Delta(\cX \times \cY)$ be any distribution over labelled examples, and let $D \sim \cD^n$ be a holdout dataset consisting of $n$ i.i.d. samples from $\cD$. Suppose:
\ifdraft
$$n \geq \frac{65 \ln(2U/\delta')}{\epsilon^3}.$$
\else $n \geq \frac{65 \ln(2U/\delta')}{\epsilon^3}.$ \fi
Then for any (possibly adaptive) process generating a sequence of at most $U$ submissions $\{(g_i,h_i)\}_{i=1}^U$, with probability at least $1-\delta$, we have that FalsifyAndUpdate$(\epsilon,D,\cdots)$ satisfies:
\begin{enumerate}
    \item If $(g_i,h_i)$ is rejected, then $(g_i,h_i)$ is not a $(\mu,\Delta)$-certificate of sub-optimality for $f_t$, where $f_t$ is the current model at the time of submission $i$, for any $\mu, \Delta$ such that $\mu\cdot \Delta \geq \epsilon$.
    \item If $(g_i,h_i)$ is accepted, then $(g_i,h_i)$ is a $(\mu,\Delta)$-certificate of sub-optimality for $f_t$, where $f_t$ is the current model at the time of submission $i$, for some $\mu, \Delta$ such that $\mu\cdot \Delta \geq \frac{\epsilon}{2}$. Moreover, the new model $f_{t+1}$ output satisfies $L(\cD,f_{t+1}, g_i) = L(\cD, h_i, g_i)$ and $L(\cD,f_{t+1}) \leq L(\cD,f_t) - \frac{\epsilon}{2}$.
    \item FalsifyAndUpdate does not halt before receiving all $U$ submissions.
\end{enumerate}
\end{restatable}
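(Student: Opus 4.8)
The plan is to treat FalsifyAndUpdate as a thin wrapper around CertificateChecker and to derive all three conclusions by composing Theorems \ref{thm:adaptive}, \ref{thm:progress}, and \ref{thm:updatebound}. Conclusion (1) and the certificate half of conclusion (2) are statements purely about the accept/reject decisions, which are made entirely by the embedded CertificateChecker instance, so they should follow from Theorem \ref{thm:adaptive} once I check that its hypotheses apply. The loss-improvement clause of conclusion (2) will come from the guarantee of ListUpdate (Theorem \ref{thm:progress}), and the non-halting conclusion (3) will come from a potential argument in the style of Theorem \ref{thm:updatebound}.

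The key step, and the one needing the most care, is to verify that the stream of triples FalsifyAndUpdate feeds into CertificateChecker is exactly an admissible adaptive stream of the form handled by Theorem \ref{thm:adaptive}. First I would observe that the $i$-th triple passed in is $(f_t, g_i, h_i)$, where $f_t$ is the current model. The model $f_t$ is a deterministic function of the fixed initial model $f_0$ together with the subsequence of previously accepted submissions, and which submissions were accepted is recorded precisely by the transcript $\pi$ produced by CertificateChecker. Since the external (possibly adversarial) submission process may itself depend on the holdout $D$ only through the feedback $\pi$ it has observed, the entire map from $\pi$ to the next triple $(f_t, g_i, h_i)$ depends on $D$ only through $\pi$. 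This is exactly the adaptivity model under which Theorem \ref{thm:adaptive} is proved, and the sample-size hypothesis $n \geq 65\ln(2U/\delta')/\epsilon^3$ is identical, so I may invoke it to condition on the probability $1-\delta$ event on which all of its guarantees hold. On that event, conclusion (1) is immediate: a rejected submission ($\pi_i = \bot$) is not a $(\mu,\Delta)$-certificate for the current $f_t$ for any $\mu\Delta \geq \epsilon$. Likewise, the certificate half of conclusion (2) is immediate: an accepted submission ($\pi_i = \top$) is a genuine $(\mu,\Delta)$-certificate for $f_t$ with $\mu\Delta \geq \epsilon/2$ under the true distribution $\cD$.

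To finish conclusion (2), on an accept FalsifyAndUpdate sets $f_{t+1} = \textrm{ListUpdate}(f_t,(g_i,h_i))$, so I would apply Theorem \ref{thm:progress} with the $(\mu,\Delta)$-certificate just certified: its first clause gives $L(\cD,f_{t+1},g_i) = L(\cD,h_i,g_i)$, and its second clause gives $L(\cD,f_{t+1}) \leq L(\cD,f_t) - \mu\Delta \leq L(\cD,f_t) - \epsilon/2$, using $\mu\Delta \geq \epsilon/2$. For conclusion (3) I would run the potential argument of Theorem \ref{thm:updatebound}: starting from $L(\cD,f_0) = \ell_0 \leq 1$ with loss non-negative, and since each accept decreases the true loss by at least $\epsilon/2$, the total number of accepts is at most $\ell_0/(\epsilon/2) \leq 2/\epsilon$. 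Hence the counter \textbf{NumberAccepted} never exceeds $2/\epsilon$, the while-loop guard of CertificateChecker is never violated by accepts, and the algorithm can only terminate when the submission index exceeds $U$, i.e. after all $U$ submissions have been processed.

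The main obstacle is the bookkeeping in the second paragraph: because the deployed models $f_t$ depend on the holdout set through the history of accept/reject decisions, I must argue that feeding them back into the same CertificateChecker creates no dependence beyond what is already permitted by the transcript-based adaptivity model of Theorem \ref{thm:adaptive}. Everything else is a direct reduction. I would also double-check the accounting of the approximation slack, namely that the weaker $\epsilon/2$ guaranteed on accepts (rather than $\epsilon$) is what propagates into both the per-step loss decrease and the accept budget, since consistency of these constants is precisely what makes conclusion (3) align with the explicit $2/\epsilon$ threshold hard-coded into CertificateChecker.
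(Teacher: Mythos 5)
Your proposal is correct and follows essentially the same route as the paper's own proof: invoke Theorem \ref{thm:adaptive} for the accept/reject guarantees (noting, as the paper does, that the models $f_t$ depend on the holdout set only through the transcript $\pi$, so the adaptive stream of triples is admissible), apply Theorem \ref{thm:progress} on each accept for the loss-improvement clause, and use the potential argument of Theorem \ref{thm:updatebound} to bound the number of accepts by $2/\epsilon$ so that CertificateChecker's halting condition is never triggered before all $U$ submissions are processed. Your explicit verification of the transcript-based adaptivity is in fact slightly more detailed than the paper's, which relegates this point to a high-level remark preceding its proof.
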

\ifdraft

At a high level, this proof reduces to the guarantees of CertificateChecker and ListUpdate. Note that the models $f_t$ produced in the run of this algorithm depend on the holdout set \emph{only} through the transcript $\pi$ produced by certificate checker --- i.e. given the stream of submissions and the output of CertificateChecker, one can reproduce the decision lists $f_t$ output by FalsifyAndUpdate. Thus we inherit the sample complexity bounds proven for CertificateChecker. 

\begin{proof}
  This theorem follows straightforwardly from the properties of Algorithm \ref{alg:basic} and Algorithm \ref{alg:checker}. From Theorem \ref{thm:adaptive}, we have that with probability $1-\delta$, every submission accepted by CertificateChecker (and hence by FalsifyandUpdate) is a $(\mu,\Delta)$-certificate of sub-optimality for $f_t$ with $\mu\cdot \Delta \geq \epsilon/2$ and every submission rejected is not a $(\mu,\Delta)$-certificate of sub-optimality for any $\mu, \Delta$ with $\mu\cdot \Delta \geq \epsilon$.

  Whenever this event obtains, then for every call that FalsifyAndUpdate makes to $\textrm{ListUpdate}(f_{t-1},(g_i,h_i))$ is such that $(g_i,h_i)$ is a $(\mu,\Delta)$-certificate of sub-optimality for $f_{t-1}$ for $\mu\cdot\Delta \geq \epsilon/2$. Therefore by Theorem \ref{thm:progress}, we have that $L(\cD,f_{t+1}, g_i) = L(\cD, h_i, g_i)$ and $L(\cD,f_{t+1}) \leq L(\cD,f_t) - \frac{\epsilon}{2}$. Finally, by Theorem \ref{thm:updatebound}, if each invocation of  the iteration $f_t = \textrm{ListUpdate}(f_{t-1},(g_i,h_i))$ is such that $(g_i,h_i)$ is a $(\mu,\Delta)$-certificate of sub-optimality for $f_{t-1}$ with $\mu\cdot \Delta \geq \epsilon/2$, then there can be at most $2/\epsilon$ such invocations. Since FalsifyAndUpdate makes one such invocation for every submission that is accepted, this means there can be at most $2/\epsilon$ submissions accepted in total. But CertificateChecker has only two halting conditions: it halts when either more than $2/\epsilon$ submissions are accepted, or when $U$ submissions have been made in total. Because with probability at least $1-\delta$ no more than $2/\epsilon$ submissions are accepted, it must be that with probability $1-\delta$, FalsifyAndUpdate does not halt until all $U$ submissions have been received.
\end{proof}
\else
The proof can be found in the Appendix.
\fi

\begin{remark}
Note that FalsifyAndUpdate has sample complexity scaling only logarithmically with the total number of submissions $U$ that we can accept, and no dependence on the complexity of the submissions. This means that a relatively small holdout dataset $D$ is sufficient to run an extremely long-running bias bounty program (i.e. handling a number of submissions that is exponentially large in the size of the holdout set) that automatically updates the current model whenever submissions are accepted and bounties are awarded.
\end{remark}

\subsubsection{Guaranteeing Groupwise Monotone Improvements}
\label{sec:monotone}
\ifdraft FalsifyAndUpdate (Algorithm \ref{alg:FalsifyAndUpdate}) has the property that whenever it accepts a submission $(g,h)$ falsifying that its current model $f_t$ is $\epsilon/2$-Bayes optimal, it produces a new model $f_{t+1}$ that has overall loss that has decreased by at least $\epsilon/2$. It also promises that $f_{t+1}$ has strictly lower loss than $f_t$ on group $g$. However, because the groups $g_i$ can have arbitrary intersections, this does \emph{not} imply that $f_{t+1}$ has error that is lower than that of $f_t$ on groups that were previously identified. \else
 Because the groups $g_i$ can have arbitrary intersections, Algorithm \ref{alg:FalsifyAndUpdate} does \emph{not} guarantee that $f_{t+1}$ has error that is lower than that of $f_t$ on all groups that were previously identified. \fi
Specifically, let $\cG(f_t)$ denote the set of at most $2/\epsilon$ groups $g_i$ that make up the internal nodes of decision list $f_t$ --- i.e. the set of groups corresponding to submissions $(g_i,h_i)$ that were previously accepted and incorporated into model $f_t$.  It might be that for some $g_j \in \cG(f_t)$,  $L(\cD,f_{t+1},g_j) \gg L(\cD, f_t, g_j)$. This kind of non-monotonic behavior is extremely undesirable in the context of a bias bug bounty program, because it means that previous instances of sub-optimal behavior on a group $g_i$ that were explicitly identified and corrected for can be undone by future updates. \ifdraft Note that simply repeating the update $(g_j,h_j)$ when this occurs does not solve the problem --- this would return the performance of the model on $g_j$ to what it was \emph{at the time that it was originally introduced} --- but since the model's performance on $g_j$ might have improved in the mean time, it would not guarantee groupwise error monotonicity. \fi

There is a simple fix, however: whenever a new proposed certificate of sub-optimality $(g_i,h_i)$ for a model $f_t$ is accepted and a new model $f_{t+1}$ is generated, add the proposed certificates $(g_j,f_{\ell})$ to the front of the stream of submissions, for each pair of $g_j \in \cG(f_t)$ and $\ell \leq t$. Updates resulting from these submissions (which we call \emph{repairs}) might themselves generate new non-monotonicities, but repeating this process recursively is sufficient to guarantee approximate groupwise monotonicity --- and because we know from Theorem \ref{thm:updatebound} that the total number of updates $T$ cannot exceed $2/\epsilon$, this process never adds more than $\frac{8}{\epsilon^3}$ submissions to the existing stream, and thus affects the sample complexity bound only by low order terms. \ifdraft This is because for each of the at most $2/\epsilon$ updates, there are at most $|\cG(f_t)|^2 \leq \frac{4}{\epsilon^2}$ proposed certificates that can be generated in this way. Moreover, if any of these constructed submissions trigger a model update, these updates too count towards the limit of $2/\epsilon$ updates that can \emph{ever} occur --- and so do not increase the maximum length of the decision list that is ultimately output.\fi The procedure, which we call MonotoneFalsifyAndUpdate, is described as Algorithm \ref{alg:MonFalsifyAndUpdate}. Here we state its guarantee:

\begin{algorithm}
\ifdraft\else 
\small
\fi
\KwInput{A model $f_0$, a holdout dataset $D$, Target $\epsilon$, and a stream of submissions  $(g_1,h_1),(g_2,h_2),\ldots$ of length at most $U$}
Let $t \leftarrow 0$ and Initialize an instance of CertificateChecker($\epsilon, D, U + \frac{8}{\epsilon^3}, \ldots$)

\While{CertificateChecker has not halted}{
  Consider the next submission $(g_i,h_i)$ \\

  Feed the triple $(f_t,g_i,h_i)$ to CertificateChecker and receive $\pi_i\in \{\bot,\top\}$ \\

  \If{$\pi_i = \bot$}{
    \KwOut{Submission $(g_i,h_i)$ is rejected.}
  }
  \Else{
    MonotoneProgress $\leftarrow$ FALSE \\
    $(g_U,h_U) \leftarrow (g_i,h_i)$ \\
    $t' \leftarrow t$ and $f'_{t'} \leftarrow f_t$ \\
  \While{MonotoneProgress $=$ FALSE and CertificateChecker has not halted}{
    Let $t' \leftarrow t'+1$ and let $f'_{t'} = \textrm{ListUpdate}(f_{t'-1}',(g_U,h_U))$ \\
    MonotoneProgress $\leftarrow$ TRUE

    \For{each pair $\ell < t$, $g_j \in
    \cG(f_t)$}{
      Feed the triple $(f_t,g_j,f_{\ell})$ to CertificateChecker and receive $\pi_U \in \{\bot,\top\}$ \\
      \If{$\pi_U = \top$ (Submission accepted)} {
        MonotoneProgress $\leftarrow$ FALSE \\
        $(g_U,h_U) \leftarrow (g_j, f_\ell)$
      }

    }

   }
   Let $t \leftarrow t+1$ and let $f_t = f'_{t'}$ \\

   \KwOut{Submission $(g_i,h_i)$ is accepted. The new model is $f_{t}$.}
  }

}
\caption{MonotoneFalsifyAndUpdate($\epsilon,D,(f_1,g_1,h_1),\ldots )$: A version of FalsifyAndUpdate that guarantees approximate group-wise error monotonicity.}
\label{alg:MonFalsifyAndUpdate}
\end{algorithm}
\ifdraft
\else
\fi 
\begin{restatable}{theorem}{bountymonotone}
\label{thm:bounty-monotone-SC}
Fix any $\epsilon,\delta > 0$. Let $\cD \in \Delta(\cX \times \cY)$ be any distribution over labelled examples, and let $D \sim \cD^n$ be a holdout dataset consisting of $n$ i.i.d. samples from $\cD$. Suppose: \ifdraft
$$n \geq \frac{65 \ln\left(\frac{2(U+\frac{8}{\epsilon^3})}{\delta'}\right)}{\epsilon^3}.$$
\else $n \geq \frac{65 \ln\left(\frac{2(U+\frac{8}{\epsilon^3})}{\delta'}\right)}{\epsilon^3}.$\fi
Then for any (possibly adaptive) process generating a sequence of at most $U$ submissions $\{(g_i,h_i)\}_{i=1}^U$, with probability at least $1-\delta$, we have that MonotoneFalsifyAndUpdate$(\epsilon,D,\cdots)$ satisfies all of the properties proven in Theorem \ref{thm:bounty-SC} for FalsifyAndUpdate, and additionally satisfies the following error monotonicity property. Consider any model $f_t$ that is output, and any group $g_j \in \cG(f_t)$. Then:
\ifdraft
$$L(\cD,f_t,g_j) \leq \min_{\ell < t} L(\cD, f_{\ell}, g_j) + \frac{\epsilon}{\mu_\cD(g_j)} $$
\else $L(\cD,f_t,g_j) \leq \min_{\ell < t} L(\cD, f_{\ell}, g_j) + \frac{\epsilon}{\mu_\cD(g_j)}$. \fi
\end{restatable}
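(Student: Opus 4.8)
The plan is to reduce everything to the guarantees already established for CertificateChecker (Theorem~\ref{thm:adaptive}), ListUpdate (Theorem~\ref{thm:progress}), and the potential argument (Theorem~\ref{thm:updatebound}), now applied to the single, extended stream that MonotoneFalsifyAndUpdate feeds into its one instance of CertificateChecker. First I would observe that every model the algorithm constructs---both the intermediate working models $f'_{t'}$ and the output models $f_t$---is a decision list produced by iterated calls to ListUpdate, and hence depends on the holdout set $D$ \emph{only} through the transcript $\pi$ returned by CertificateChecker. Consequently the adaptive-composition guarantee of Theorem~\ref{thm:adaptive} applies verbatim: with the stated sample size (which is exactly the bound of Theorem~\ref{thm:adaptive} with $U$ replaced by $U+\frac{8}{\epsilon^3}$, matching the initialization of CertificateChecker), with probability $1-\delta$ every accept/reject decision is correct in the sense of the two conclusions of that theorem. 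I would condition on this event throughout.

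Next I would bound the total number of submissions fed to CertificateChecker, both to justify the stream length $U+\frac{8}{\epsilon^3}$ and to show the inner repair loop terminates. The key is a single \emph{global} potential argument: every accepted submission---whether an original $(g_i,h_i)$ or a repair $(g_j,f_\ell)$---triggers a call to ListUpdate on a $(\mu,\Delta)$-certificate with $\mu\cdot\Delta\geq\epsilon/2$, so by Theorem~\ref{thm:progress} each update decreases $L(\cD,\cdot)$ by at least $\epsilon/2$; since loss is nonnegative and at most $1$ initially, Theorem~\ref{thm:updatebound} caps the number of accepted updates over the entire run at $2/\epsilon$. Each accepted update is followed by at most $|\cG(f_t)|^2\leq (2/\epsilon)^2=4/\epsilon^2$ repair checks, so at most $(2/\epsilon)(4/\epsilon^2)=8/\epsilon^3$ repair submissions are ever generated. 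This bounds the stream length and, because the number of accepts is capped, shows CertificateChecker does not halt on its accept-count condition before all $U$ original submissions are consumed---giving conclusion (3) and, together with inheriting the accept/reject logic unchanged from FalsifyAndUpdate, conclusions (1) and (2) of Theorem~\ref{thm:bounty-SC}.

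The heart of the argument is the monotonicity claim, which I would extract directly from the inner loop's termination condition. When an original submission is accepted, the inner loop repeatedly updates the working model and re-checks every pair $(g_j,f_\ell)$ with $g_j\in\cG(f_t)$ and $\ell<t$; it exits only when \textsc{MonotoneProgress} remains \textsc{True}, i.e.\ when \emph{every} such check is rejected by CertificateChecker. By conclusion (1) of Theorem~\ref{thm:adaptive}, a rejection certifies that $(g_j,f_\ell)$ is not a $(\mu,\Delta)$-certificate of sub-optimality for the output model $f_t$ with $\mu\cdot\Delta\geq\epsilon$; instantiating this with $\mu=\mu_\cD(g_j)$ and $\Delta=L(\cD,f_t,g_j)-L(\cD,f_\ell,g_j)$ forces $\mu_\cD(g_j)\bigl(L(\cD,f_t,g_j)-L(\cD,f_\ell,g_j)\bigr)<\epsilon$, hence $L(\cD,f_t,g_j)<L(\cD,f_\ell,g_j)+\epsilon/\mu_\cD(g_j)$ (the bound being trivial when the improvement is nonpositive). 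Taking the minimum over $\ell<t$ yields exactly the claimed inequality.

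I expect the main obstacle to be the bookkeeping in the second step rather than the monotonicity derivation itself: one must run the potential argument of Theorem~\ref{thm:updatebound} \emph{globally} across interleaved original and repair updates, so that repairs draw from the same budget of $2/\epsilon$ total updates, and one must verify that a repair cannot introduce a fresh group into $\cG$---so that $|\cG(f_t)|$ stays bounded by $2/\epsilon$ and the nested loop genuinely terminates rather than spawning an ever-growing set of groups to re-check. Establishing that repairs only reuse existing groups $g_j\in\cG(f_t)$, and that each accepted repair still counts against the global update budget, is what makes the $8/\epsilon^3$ accounting---and thus the termination and sample-complexity claims---go through.
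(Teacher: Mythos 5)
Your proposal is correct and takes essentially the same route as the paper's proof: you inherit the accept/reject guarantees of Theorem~\ref{thm:adaptive} applied to the extended stream of length $U+\frac{8}{\epsilon^3}$, bound the repair submissions by the identical $(2/\epsilon)\cdot(4/\epsilon^2) = 8/\epsilon^3$ counting in which repair updates draw on the same global budget of $2/\epsilon$ updates from Theorem~\ref{thm:updatebound}, and obtain monotonicity from the fact that the repair loop terminates only once every pair $(g_j,f_\ell)$ is rejected, so that a rejection certifies $\mu_\cD(g_j)\bigl(L(\cD,f_t,g_j)-L(\cD,f_\ell,g_j)\bigr) < \epsilon$. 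The only (immaterial) difference is presentational: you argue the monotonicity step as a direct contrapositive of the rejection guarantee, whereas the paper phrases the same step as a proof by contradiction.
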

\ifdraft
\begin{proof}
The proof that MonotoneFalsifyAndUpdate satisfies the first two conclusions of Theorem \ref{thm:bounty-SC}:
\begin{enumerate}
  \item If $(g_i,h_i)$ is rejected, then $(g_i,h_i)$ is not a $(\mu,\Delta)$-certificate of sub-optimality for $f_t$, where $f_t$ is the current model at the time of submission $i$, for any $\mu, \Delta$ such that $\mu\cdot \Delta \geq \epsilon$.
    \item If $(g_i,h_i)$ is accepted, then $(g_i,h_i)$ is a $(\mu,\Delta)$-certificate of sub-optimality for $f_t$, where $f_t$ is the current model at the time of submission $i$, for some $\mu, \Delta$ such that $\mu\cdot \Delta \geq \frac{\epsilon}{2}$. Moreover, the new model $f_{t+1}$ output satisfies  $L(\cD,f_{t+1}) \leq L(\cD,f_t) - \frac{\epsilon}{2}$.
\end{enumerate}
are identical and we do not repeat them here. We must show that with probability $1-\delta$, CertificateChecker (and hence MonotoneFalsifyAndUpdate) does not halt before processing all $U$ submissions. Note that MonotoneFalsifyAndUpdate initializes an instance of CertificateChecker that will not halt before receiving $U + \frac{8}{\epsilon^3}$ many submissions. Thus it remains to verify that our algorithm does not produce more than $8/\epsilon^3$ many submissions to CertificateChecker in its monotonicity update process. But this will be the case, because by Theorem \ref{thm:updatebound}, $t \leq \frac{2}{\epsilon}$, and so after each call to ListUpdate, we generate at most $4/\epsilon^2$ many submissions to CertificateChecker. Since there can be at most $2/\epsilon$ such calls to ListUpdate, the claim follows.

To see that the monotonicity property holds, assume for sake of contradiction that it does not --- i.e. that there is a model $f_t$, a group $g_j \in \cG(f_t)$, and a model $f_\ell$ with $\ell < t$ such that:
$$L(\cD,f_t,g_j) >  L(\cD, f_{\ell}, g_j) + \frac{\epsilon}{\mu_\cD(g_j)}$$
In this case, the pair $(g_j,f_{\ell})$ would form a $(\mu,\Delta)$-certificate of sub-optimality for $f_t$ with $\mu\cdot \Delta \geq \epsilon$. But if $L(\cD,f_t,g_j) >  L(\cD, f_{\ell}, g_j) + \frac{\epsilon}{\mu_\cD(g_j)}$, then this certificate must have been rejected, which we have already established is an event that occurs with probability at most $\delta$.
\end{proof}
\else
The proof can be found in the appendix.
\fi

\begin{figure}
    \centering
    \ifdraft
    \includegraphics[scale=0.35]{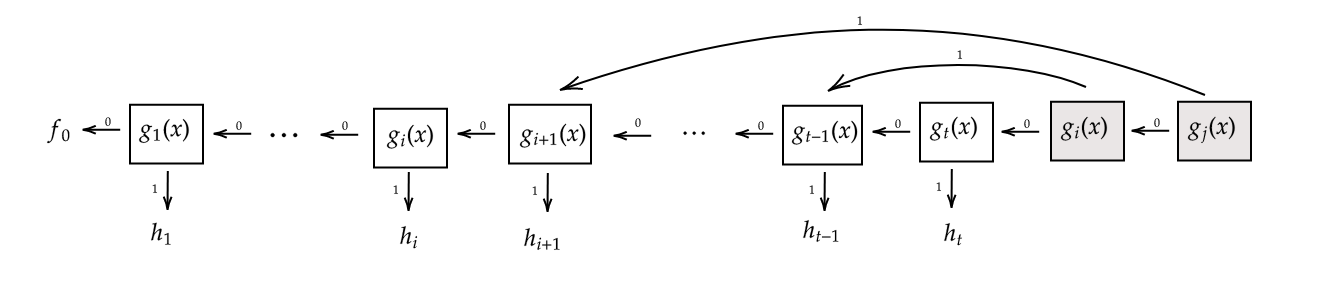}
    \else
    \includegraphics[scale=0.18]{pdl.png}
    \fi
    \caption{A pointer decision list. The shaded nodes are monotonicity repairs caused by update $g_t$. In this example, there were two monotonicity repairs, corresponding to certificates of sub-optimality $(g_i,f_{t-1})$ and $(g_j,f_{i+1})$. Rather than replicating the models $f_{t-1}$ and $f_{i+1}$, the updates are implemented using pointers to the prefix of the decision list representing models $f_{t-1}$ and $f_{i-1}$. Theorem \ref{thm:bounty-monotone-SC} guarantees that the entire length of the list (including nodes that implement back-pointers) cannot grow beyond $2/\epsilon$.}
    \label{fig:pdl}
\end{figure}
Algorithm \ref{alg:MonFalsifyAndUpdate} introduces updates of the form $(g_j,f_{\ell})$, where $f_{\ell}$ is a decision list previously generated by ListUpdate. We might worry that these updates could produce a very large model, since on such an update, the new model $f_\ell$ at the first leaf of the new decision list entirely replicates some previous decision list $f_{\ell}$. However, note that for any such update, $f_{\ell}$ is a \emph{prefix} of $f_t$. Therefore, rather than replicating $f_{\ell}$, we can  introduce a backpointer to level $\ell$ of our existing decision list, without increasing our model's size. We call such a model a \emph{pointer decision list}, as shown in Figure \ref{fig:pdl}. We call the pointer nodes that are introduced to repair non-monotonicities introduced by previous updates \emph{repair nodes} of our pointer decision list. These are the nodes that are shaded in our illustration in Figure \ref{fig:pdl}. 
\ifdraft
\else
\vspace{-0.2cm}
\fi

\subsection{Certificates of Bounded Complexity and Algorithmic Optimization}
\label{sec:restricted}

\ifdraft
In this section we show how to use our ListUpdate method as part of an algorithm for explicitly computing $(\epsilon,\cC)$-Bayes optimal models from data sampled i.i.d. from the underlying distribution $\cD$. First, we must show that if we find certificates of sub-optimality $(g,h) \in C$ on our dataset, that we can be assured that they are certificates of sub-optimality on the underlying distribution. Here, we invoke uniform convergence bounds that depend on $\cC$ being a class of bounded complexity. Next, we must describe an algorithmic method for finding $(\mu,\Delta)$ certificates of sub-optimality that maximize $\mu\cdot \Delta$. Here we give two approaches. The first approach is a reduction to cost sensitive (ternary) classification: the result of the reduction is that the ability to solve weighted multi-class classification problems over some class of models gives us the ability to find certificates of sub-optimality over a related class whenever they exist. The second approach takes an ``EM'' style alternating maximization approach over $g_i \in \cG$ and $h_i \in \cH$ in turn, where each alternating maximization step can be reduced to a binary classification problem. It is only guaranteed to converge to a local optimum (or saddlepoint) of its objective \ifdraft --- i.e. to find a certificate of sub-optimality $(g_i,h_i)$ that cannot be improved by changing either $g_i$ or $h_i$ unilaterally --- \fi but has the merit that it requires only standard binary classification algorithms for a class $\cG$ and $\cH$ to search for certificates of sub-optimality in $\cG\times \cH$. For simplicity of exposition, in this section we restrict attention to the binary classification problem, where the labels are binary $(\cY = \{0,1\}$) and our loss function corresponds to classification error $(\ell(\hat y, y) = \mathbbm{1}[\hat y \neq y])$ --- but the approach readily extends to more general label sets (replacing VC-dimension with the appropriate notion of combinatorial dimension as necessary).

The algorithmic problem we need to solve at each round $t$ is, given an existing model $f_{t-1}$, find a $(\mu,\Delta)$-certificate of optimality $(g,h) \in \cC$ for $f_{t-1}$ that maximizes $\mu\cdot \Delta$ as computed on the empirical data $D$ --- i.e. to solve:
\begin{equation}
\label{eq:optproblem}
(g_t,h_t) =\arg\max_{(g,h) \in \cC }\mu_{D}(g)\cdot (L(D,f_{t-1},g) - L(D,h,g))
\end{equation}

We defer for now the algorithmic problem of finding these certificates, and describe a generic algorithm that can be invoked with any method for finding such certificates. \ifdraft First, we state a useful sample complexity bound proven in \citep{gerrymandering} in a related context of multi-group fairness --- we here state the adaptation to our setting.
\begin{lemma}[Adapted from \citep{gerrymandering}]
\label{lem:samplecomplexity}
Let $\cG$ denote a class of group indicator functions with VC-dimension $d_G$ and $\cH$ denote a class of binary models with VC-dimension $d_H$.  Let $f$ be an arbitrary binary model. Then if:
 $$n \geq \tilde O\left(\frac{(d_H + d_G) + \log(1/\delta)}{\eta^2} \right)$$
 we have that with probability $1-\delta$ over the draw of a dataset $D \sim \cD^n$, for every $g \in \cG$ and $h \in \cH$:
 $$\left|\mu_{\cD}(g)\cdot (L(\cD,f,g) - L(\cD,h,g)) - \mu_{D}(g_p)\cdot (L(D,f,g) - L(D,h,g)) \right| \leq \eta$$
\end{lemma}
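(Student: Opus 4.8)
The plan is to rewrite the quantity we wish to estimate as an \emph{unconditional} expectation of a bounded function, which removes the problematic $1/\mu_g(\cD)$ normalization and reduces the claim to a single uniform convergence statement over a product class. Precisely, using that $\mu_\cD(g)\cdot L(\cD,f,g) = \E_{(x,y)\sim\cD}[\mathbbm{1}[g(x)=1]\cdot \ell(f(x),y)]$, and recalling that $\ell$ is the $0/1$ loss, we have
$$\mu_\cD(g)\cdot(L(\cD,f,g)-L(\cD,h,g)) = \E_{(x,y)\sim \cD}\left[\phi_{g,h}(x,y)\right], \qquad \phi_{g,h}(x,y) := \mathbbm{1}[g(x)=1]\cdot\big(\mathbbm{1}[f(x)\neq y]-\mathbbm{1}[h(x)\neq y]\big),$$
and the empirical quantity is simply the sample average of $\phi_{g,h}$ over $D$. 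Since $\phi_{g,h}$ takes values in $\{-1,0,1\}\subseteq[-1,1]$, the lemma is exactly the statement that the empirical process indexed by $(g,h)\in\cG\times\cH$ converges uniformly at rate $\eta$, and I would obtain it from a standard VC/Rademacher uniform convergence theorem once I control the combinatorial complexity of the class $\Phi=\{\phi_{g,h}: g\in\cG,\,h\in\cH\}$.

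To control that complexity I would split $\phi_{g,h}=A_g - B_{g,h}$, where $A_g(x,y)=\mathbbm{1}[g(x)=1]\cdot\mathbbm{1}[f(x)\neq y]$ and $B_{g,h}(x,y)=\mathbbm{1}[g(x)=1]\cdot \mathbbm{1}[h(x)\neq y]$, and bound each class of indicators separately. The functions $A_g$ are indicators of the sets $\{g=1\}\cap S_f$ for the \emph{fixed} set $S_f=\{(x,y):f(x)\neq y\}$; intersecting every set in a class with one fixed set cannot increase VC dimension, so $\{A_g\}$ has VC dimension at most $d_G$. The functions $B_{g,h}$ are indicators of intersections $\{g=1\}\cap\{(x,y):h(x)\neq y\}$; the first family has VC dimension $d_G$, and since for binary $\cH$ the error-region family $\{(x,y):h(x)\neq y\}$ has VC dimension equal to that of $\cH$, the second has VC dimension $d_H$. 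By Sauer's lemma the growth function of the intersection class is at most the product of the two growth functions, yielding VC dimension $O\big((d_G+d_H)\log(d_G+d_H)\big)$.

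Given these bounds, I would apply the classical VC uniform convergence inequality to the two indicator classes $\{A_g\}$ and $\{B_{g,h\}}$ separately (each at accuracy $\eta/2$ and confidence $\delta/2$) and combine via the triangle inequality and a union bound. For an indicator class of VC dimension $d$, the VC inequality guarantees uniform deviation at most $\eta/2$ once $n = \tilde O((d + \log(1/\delta))/\eta^2)$; taking $d = O((d_G+d_H)\log(d_G+d_H))$ for the dominant term $B_{g,h}$ gives exactly $n = \tilde O((d_G+d_H+\log(1/\delta))/\eta^2)$, with the logarithmic factors in $d_G+d_H$ absorbed into $\tilde O$. The main obstacle---the only step beyond bookkeeping---is the complexity bound for $B_{g,h}$: one must argue that passing to intersections of the group family and the hypothesis-error family only \emph{adds} VC dimensions (up to a logarithmic factor) rather than multiplying them, and handle the fact that $B_{g,h}$ lives over the joint space $\cX\times\cY$ rather than $\cX$ alone. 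Since the statement is an adaptation of a result already proven in \citep{gerrymandering}, I would invoke that argument for the intersection bound and supply the short translation to our $\phi_{g,h}$ above.
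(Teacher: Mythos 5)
Your proof is correct, but note that the paper itself never proves this lemma --- it is imported wholesale from \citep{gerrymandering} (the statement is labeled ``Adapted from'') and invoked as a black box in the proof of the TrainByOpt guarantee, so there is no internal proof to match against. Your self-contained derivation is sound and follows the same route as the cited source: the rewriting of $\mu_\cD(g)\cdot(L(\cD,f,g)-L(\cD,h,g))$ as the unconditional expectation $\E_{(x,y)\sim\cD}[\mathbbm{1}[g(x)=1]\cdot(\mathbbm{1}[f(x)\neq y]-\mathbbm{1}[h(x)\neq y])]$ is exactly the computation the paper performs (for a single fixed triple) in its proof of Theorem \ref{thm:adaptive}, where the lack of any VC structure forces a Chernoff-plus-transcript-counting argument instead; your lemma is the batch analogue where the union bound over an infinite class is replaced by uniform convergence. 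The three technical ingredients all check out: intersecting the lifted class $\{g=1\}\times\cY$ (which has the same VC dimension $d_G$ as $\cG$, since two points sharing an $x$ cannot be shattered) with the fixed error set $S_f$ cannot increase VC dimension; the error-region class $\{(x,y):h(x)\neq y\}$ has VC dimension exactly $d_H$ for binary labels; and the Sauer-based product bound on growth functions gives VC dimension $O((d_G+d_H)\log(d_G+d_H))$ for the intersection class, with the logarithmic factor legitimately absorbed into the $\tilde O$ of the sample bound. Two cosmetic points: the statement's $\mu_D(g_p)$ is a typo for $\mu_D(g)$ (stray notation from the $p$-derived certificates of Section \ref{sec:ternary}), and when $\mu_D(g)=0$ the conditional empirical loss is formally undefined but the product is naturally zero, which your unconditional rewriting handles automatically.
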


With our sample complexity lemma in hand we are ready to present our generic reduction from training an $(\epsilon, \cC)$-Bayes optimal model to the optimization problem over $\cC$ given in \eqref{eq:optproblem}.
\else
It is a reduction from the problem of training an $(\epsilon,\cC)$-Bayes optimal model to the optimization problem over $\cC$ given in \eqref{eq:optproblem}.
\fi

\begin{algorithm}
\ifdraft
\else
\small
\fi
\KwInput{A dataset $D$, a set of group/model pairs $\cC$, and accuracy parameter $\epsilon$.}
Let $f_0:\cX\rightarrow \{0,1\}$ be an arbitrary initial model and $T = 2/\epsilon$ \\

Randomly divide $D$ into $T$ equally sized datasets: $D_1,\ldots,D_T$.

\For{$t = 1$ to $T$}{
Let
$$(g_t,h_t) =\arg\max_{(g,h) \in \cC }\mu_{D}(g)\cdot (L(D,f_{t-1},g) - L(D,h,g))$$

\If{$\mu_{D}(g_t)\cdot (L(D,f_{t-1},g_t) - L(D,h_t,g_t)) \leq \frac{3\epsilon}{4}$}{
\KwOut{Model $f_{t-1}$}
}
\Else{
Let $f_t = \textrm{ListUpdate}(f_{t-1},(g_t,h_t))$
}
}
\KwOut{Model $f_T$}

\caption{TrainByOpt($D,\cC,\epsilon)$: An algorithm for training an $(\epsilon,\cC)$-Bayes optimal model given the ability to optimize over certificates $(g,h) \in \cC$.}
\label{alg:TrainByOpt}
\end{algorithm}
\ifdraft
\else
\vspace{-0.5cm}
\fi
\begin{restatable}{theorem}{samplecomplexity}
Fix an arbitrary distribution $\cD$ over $\cX\times \cY$, a class of group indicator functions $\cG$ of VC-dimension $d_G$ and a class of binary models $\cH$ of VC-dimension $d_H$. Let $\cC \subseteq \cG \times \cH$ and $\epsilon > 0$ be arbitrary.  If: \ifdraft
$$n \geq \tilde O\left(\frac{(d_H + d_G) + \log(1/\delta)}{\epsilon^3} \right)$$
\else $n \geq \tilde O\left(\frac{(d_H + d_G) + \log(1/\delta)}{\epsilon^3} \right)$ \fi
and $D \sim \cD^n$,
then with probability $1-\delta$,  TrainByOpt$(D,\cC,\epsilon)$ (Algorithm \ref{alg:TrainByOpt}) returns a model $f$ that is $(\epsilon,\cC)$-Bayes optimal, using at most $2/\epsilon$ calls to a sub-routine for solving the optimization problem over $(g,h) \in \cC$ given in \eqref{eq:optproblem}.
\end{restatable}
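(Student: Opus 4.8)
The plan is to assemble three results already in hand --- the uniform convergence bound of Lemma~\ref{lem:samplecomplexity}, the certificate/Bayes-optimality equivalence of Theorem~\ref{thm:BOQ}, and the per-update progress guarantee of Theorem~\ref{thm:progress} together with the potential argument of Theorem~\ref{thm:updatebound} --- and to deal with the single genuinely new difficulty, which is \emph{adaptivity}. The model $f_{t-1}$ audited in round $t$ is itself a function of the samples consumed in rounds $1,\dots,t-1$, so Lemma~\ref{lem:samplecomplexity}, which is stated for a \emph{fixed} model, cannot be applied to $f_{t-1}$ if all rounds reuse the same data. This is exactly why TrainByOpt partitions $D$ into independent pieces $D_1,\dots,D_T$ with $T=2/\epsilon$ and, in round $t$, solves the optimization \eqref{eq:optproblem} and runs the stopping test on the fresh piece $D_t$ alone.

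I would fix $\eta = \epsilon/4$ and let $E_t$ denote the event that the conclusion of Lemma~\ref{lem:samplecomplexity} holds (with accuracy $\eta$) simultaneously for all $(g,h)\in\cG\times\cH\supseteq\cC$, for the model $f_{t-1}$, on the sample $D_t$. The key observation is that $f_{t-1}$ is a deterministic function of $D_1,\dots,D_{t-1}$ and hence independent of $D_t$: conditioning on any realization of $D_1,\dots,D_{t-1}$ fixes $f_{t-1}$, and Lemma~\ref{lem:samplecomplexity} then applies verbatim to the fresh i.i.d.\ sample $D_t$, so $\Pr[\lnot E_t \mid D_1,\dots,D_{t-1}] \le \delta/T$ as long as $|D_t|\ge\tilde O((d_G+d_H+\log(T/\delta))/\eta^2)$. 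Averaging over $D_1,\dots,D_{t-1}$ preserves the bound unconditionally, and a union bound over the $T$ rounds gives $\Pr[E]\ge 1-\delta$ for $E=\bigcap_{t}E_t$. Since $|D_t|=n/T$, this requires $n\ge\tilde O((d_G+d_H+\log(1/\delta))/\epsilon^3)$, absorbing $\log T=\log(2/\epsilon)$ into the $\tilde O$, which is the claimed bound.

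Conditioning on $E$, I would analyze the output in two cases, using $\eta=\epsilon/4$ to pass between empirical and distributional quantities. If the algorithm stops early at round $t$, the empirical optimum of \eqref{eq:optproblem} is at most $3\epsilon/4$, so on $E_t$ every $(g,h)\in\cC$ has $\mu_\cD(g)\,(L(\cD,f_{t-1},g)-L(\cD,h,g))\le 3\epsilon/4+\eta=\epsilon$; by Theorem~\ref{thm:BOQ} this says precisely that no $(\mu,\Delta)$-certificate with $\mu\cdot\Delta>\epsilon$ exists, i.e.\ the returned $f_{t-1}$ is $(\epsilon,\cC)$-Bayes optimal. In any round that instead performs an update, the empirical optimum exceeds $3\epsilon/4$, so on $E_t$ the selected pair $(g_t,h_t)$ is a bona fide $(\mu,\Delta)$-certificate for $f_{t-1}$ under $\cD$ with $\mu\cdot\Delta\ge 3\epsilon/4-\eta=\epsilon/2$, and Theorem~\ref{thm:progress} then ensures $L(\cD,f_t)\le L(\cD,f_{t-1})-\epsilon/2$.

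Finally I would close with the counting step. Since $\ell_0\le 1$ and each update lowers the distributional loss by at least $\epsilon/2$, Theorem~\ref{thm:updatebound} (with parameter $\epsilon/2$) caps the number of updates at $2\ell_0/\epsilon\le 2/\epsilon=T$; this simultaneously shows that at most $T$ calls to the optimization subroutine occur and that the loop length $T$ is sufficient. The one boundary case is when early stopping never triggers and all $T$ updates fire: then $L(\cD,f_T)\le\ell_0-T\cdot(\epsilon/2)=\ell_0-1\le 0$, forcing $L(\cD,f_T)=0$, and a zero-loss model is pointwise optimal and hence trivially $(\epsilon,\cC)$-Bayes optimal. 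In every case the returned model is $(\epsilon,\cC)$-Bayes optimal, completing the argument. The main obstacle is the adaptivity point resolved by sample splitting in the second paragraph; the remaining steps are a direct assembly of the earlier theorems.
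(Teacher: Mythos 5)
Your proof is correct, and its skeleton is the same as the paper's: split $D$ into $T = 2/\epsilon$ fresh pieces so that $f_{t-1}$ is independent of $D_t$ (you are right that the pseudocode's use of $D$ in round $t$ must be read as $D_t$ --- the paper's own proof relies on exactly this independence), invoke Lemma~\ref{lem:samplecomplexity} once per round with a union bound over the $T$ rounds, and then translate the empirical $3\epsilon/4$ threshold into the two distributional conclusions: early stopping yields $(\epsilon,\cC)$-Bayes optimality, and every accepted update is a true $(\mu,\Delta)$-certificate with $\mu\cdot\Delta \ge \epsilon/2$, hence an $\epsilon/2$ drop in loss by Theorem~\ref{thm:progress}. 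The one place you genuinely diverge is the terminal case in which all $T$ rounds perform updates. The paper closes this case by contradiction: if $f_T$ were not $\epsilon/2$-Bayes optimal, Theorem~\ref{thm:BOQ} would produce one more $(\mu,\Delta)$-certificate with $\mu\cdot\Delta\ge\epsilon/2$, extending the update sequence to length $T+1 > 2/\epsilon$ and contradicting Theorem~\ref{thm:updatebound}. You instead argue directly that $L(\cD,f_T)\le \ell_0 - T\cdot\epsilon/2 \le 0$, so the loss is exactly zero and $f_T$ is trivially optimal. Both are valid instances of the same potential argument, but note a small robustness difference: your computation needs $T\cdot\epsilon/2 \ge \ell_0$, i.e.\ it uses $T = 2/\epsilon$ exactly (or rounded up), whereas the paper's contradiction needs only $T+1 > 2/\epsilon$ and therefore survives taking $T = \lfloor 2/\epsilon \rfloor$. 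If you want your version to be insensitive to integrality, observe that for any pair $(g,h)$ the certificate value satisfies $\mu_\cD(g)\cdot\bigl(L(\cD,f_T,g) - L(\cD,h,g)\bigr) \le \mu_\cD(g)\cdot L(\cD,f_T,g) \le L(\cD,f_T) \le \ell_0 - T\cdot\epsilon/2 < \epsilon/2$ (using non-negativity of the loss), which again gives $\epsilon/2$-Bayes optimality, and hence $(\epsilon,\cC)$-Bayes optimality, without needing the loss to vanish exactly.
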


\ifdraft
\begin{proof}
  Each of the partitioned datasets $D_t$ has size at least $\tilde O\left(\frac{(d_H + d_G) + \log(1/\delta)}{\epsilon^2} \right)$ and is selected independently of $f_{t-1}$ and so we can invoke Lemma \ref{lem:samplecomplexity} with $\eta = \epsilon/8$ and $\delta = \delta/T$ to conclude that with probability $1-\delta$, for every round $t$:
  $$\left|\mu_{D}(g_t)\cdot (L(D,f_{t-1},g_t) - L(D,h_{t},g_t)) -\mu_{\cD}(g_t)\cdot (L(\cD,f_{t-1},g_t) - L(\cD,h_{t},g_t))\right| \leq \frac{\epsilon}{8}$$
  For the rest of the argument we will assume this condition obtains.
 By assumption, at every round $t$ the models $(g_t,h_t)$ exactly maximize $\mu_{D}(g_t)\cdot (L(D,f_{t-1},g_t) - L(D,h_{t},g_t))$ amongst all models $(g,h) \in \cC$, and so in combination with the above uniform convergence bound, they are $\epsilon/4$-approximate maximizers of $\mu_{\cD}(g_t)\cdot (L(\cD,f_{t-1},g_t) - L(\cD,h_{t},g_t))$. Therefore, if at any round $t \leq T$ the algorithm outputs a model $f_{t-1}$, it must be because:
 $$\max_{(g,h) \in \cC}\mu_{\cD}(g)\cdot (L(\cD,f_{t-1},g) - L(\cD,h,g)) \leq \frac{3\epsilon}{4} + \frac{\epsilon}{4} = \epsilon$$
 Equivalently, for every $(g,h) \in \cC$, it must be that:
 $$L(\cD,f_{t-1},g) \leq L(\cD,h,g) + \frac{\epsilon}{\mu_{\cD}(g)}$$
 By definition, such a model is $(\epsilon,\cC)$-Bayes optimal.

Similarly, if at any round $t < T$ we do not output $f_{t-1}$, then it must be that $(g_t,h_t)$ forms a $(\mu,\Delta)$-certificate of sub-optimality for $\mu = \mu_{\cD}(g_t)$ and $\Delta = L(\cD,f_{t-1},g) - L(\cD,h,g)$ such that $\mu\cdot \Delta \geq \frac{3\epsilon}{4} - \frac{\epsilon}{4} = \frac{\epsilon}{2}$. By Theorem \ref{thm:updatebound} we have that for any sequence of models $f_0, f_1, \ldots, f_T$ such that $f_t = \textrm{ListUpdate}(f_{t-1},(g_t,h_t))$ and $(g_t,h_t)$ forms a $(\mu,\Delta)$-certificate of sub-optimalty for $f_{t-1}$ with $\mu\cdot \Delta \geq \frac{\epsilon}{2}$, it must be that $T \leq \frac{2}{\epsilon}$. Therefore, it must be that if our algorithm outputs model $f_T$, then $f_T$ must be $\frac{\epsilon}{2}$-Bayes optimal. If this were not the case, then by Theorem \ref{thm:BOQ}, there would exist a $(\mu,\Delta)$-certificate of sub-optimality $(g^*,h^*)$ for $f_T$ with $\mu\cdot \Delta \geq \frac{\epsilon}{2}$, which we could use to extend the sequence by setting $f_{T+1} = \textrm{ListUpdate}(f_T,(g^*,h^*))$ --- but that would contradict Theorem \ref{thm:updatebound}. Since $\epsilon/2$-Bayes optimality is strictly stronger than $(\epsilon,\cC)$-Bayes optimality, this completes the proof.
\end{proof}
\else
The proof can be found in the Appendix.
\fi

Algorithm \ref{alg:TrainByOpt} is an efficient algorithm for training an $(\epsilon, \cC)$-Bayes optimal model whenever we can solve optimization problem \eqref{eq:optproblem} efficiently. We now turn to this optimization problem. In Section \ref{sec:ternary} we show that the ability to solve cost sensitive classification problems over a \emph{ternary} class $K$ gives us the ability to solve optimization problem \ref{eq:optproblem} over a related class $\cC_K$. In \ifdraft Section \else Appendix \fi \ref{sec:EM} we show that the ability to solve standard empirical risk minimization problems over classes $\cG$ and $\cH$ respectively give us the ability to run iterative updates of an alternating-maximization (``EM style'') approach to finding certificates $(g,h) \in \cG \times \cH$.

\subsubsection{Finding Certificates via a Reduction to Cost-Sensitive Ternary Classification}
\label{sec:ternary}

For this approach, we start with an arbitrary class $K$ of \emph{ternary} valued functions $p:\cX\rightarrow \{0,1,?\}$. It will be instructive to think of the label ``$?$'' as representing the decision to ``defer'' on an example, leaving the classification outcome to another model. We will identify such a ternary-valued function $p$ with a pair of binary valued functions $g_p:\cX\rightarrow \{0,1\}, h_p:\cX\rightarrow \{0,1\}$ representing a group indicator function and a binary model $h$ that might form a certificate of sub-optimality $(g_p,h_p)$. They are defined as follows:
\begin{definition}
Given a ternary valued function $p:\cX\rightarrow \{0,1,?\}$, the $p$-derived group $g_p$ and model $h_p$ are defined as:
\ifdraft
$$g_p(x) = \begin{cases}
      1 & \text{if } p(x) \in \{0,1\} \\
      0 & \text{if } p(x) = ?
   \end{cases} \ \ \ \ \ \ \  \ h_p(x) = \begin{cases}
      p(x) & \text{if } p(x) \in \{0,1\} \\
      0 & \text{if } p(x) = ?
   \end{cases}$$
\else $g_p(x) = \begin{cases}
      1 & \text{if } p(x) \in \{0,1\} \\
      0 & \text{if } p(x) = ?
   \end{cases} \ \ \ \ \ \ \  \ h_p(x) = \begin{cases}
      p(x) & \text{if } p(x) \in \{0,1\} \\
      0 & \text{if } p(x) = ?
   \end{cases}$ \fi
In other words, interpreting $p(x) = ?$ as the decision for $p$ to ``defer'' on $x$, $g_p$ defines exactly the group of examples that $p$ does \emph{not} defer on, and $h_p$ is the model that makes the same prediction as $p$ on every example that $p$ does not defer on. Given a class of ternary functions $K$, let the $K$-derived certificates $\cC_K$ denote the set of pairs $(g_p,h_p)$ that can be so derived from some $p \in K$: $\cC_K = \{(g_p,h_p) : p \in K\}$. Similarly let $\cG_K = \{g_p : p \in K\}$ and $\cH_K = \{h_p : p \in K\}$ denote the class of group indicator functions and models derived from $K$ respectively.
\end{definition}

Given a model $f:\cX\rightarrow \{0,1\}$ Our goal is to reduce the problem of solving optimization problem \eqref{eq:optproblem} over $\cC_K$ to the problem of solving a ternary \emph{cost-sensitive classification problem} over $K$:
\begin{definition}
A cost-sensitive classification problem is defined by a model class $K$ consisting of functions $p:\cX\rightarrow \cY$, where $\cY$ is some finite label set,  a distribution $\cD$ over $\cX\times \cY$, and a set of real valued costs $c_{(x,y)}(\hat y)$ for each pair $(x, y)$ in the support of $\cD$ and each label $\hat y \in \cY$. A solution $p^* \in K$ to the cost-sensitive classification problem $(\cD, K, \{c_{(x,y)}\})$ is given by:
\ifdraft
$$p^* \in \arg\min_{p \in K} \E_{(x,y) \sim \cD}\left[ c_{(x,y)}(p(x))\right]$$
\else $p^* \in \arg\min_{p \in K} \E_{(x,y) \sim \cD}\left[ c_{(x,y)}(p(x))\right]$ \fi
i.e. the model that minimizes the expected costs for the labels it assigns to points drawn from $\cD$.
\end{definition}

The reduction will make use of the following induced costs:
\begin{definition}
Given a binary model $f:\cX\rightarrow \cY$, the induced costs of $f$  are defined as follows:
\ifdraft
$$c^f_{(x,y)}(\hat y) = \begin{cases}
      0 & \text{if } \hat y = ? \\
      1 & \text{if } f(x) = y \neq \hat y \\
      -1 & \text{if } \hat y = y \neq f(x) \\
      0 & \text{otherwise.}
   \end{cases}$$
\else $c^f_{(x,y)}(\hat y) = 0$ if $\hat y = ?$ or if $\hat y = f(x)$, $c^f_{(x,y)}(\hat y) = 1$ if $f(x) = y \neq \hat y$, and $c^f_{(x,y)}(\hat y) = -1$ if $\hat y = y \neq f(x)$. \fi
Intuitively, it costs nothing to defer a decision to the existing model $f$ --- or equivalently, to make the same decision as $f$. On the other hand, making the \emph{wrong} decision on an example $x$ costs $1$ when $f$ would have made the right decision, and making the right decision ``earns'' $1$ when $f$ would not have.
\end{definition}

\begin{restatable}{theorem}{lemCSC}
\label{lem:CSC}
Fix an arbitrary distribution $\cD$ over $\cX\times \cY$, let $K$ be a class of ternary valued functions, and let $f:\cX\rightarrow \{0,1\}$ be any binary valued model. Let $p^*$ be a solution to the cost-sensitive classification problem $(\cD, K, \{c_{(x,y)}^f\})$, where $c_{(x,y)}^f$ are the induced costs of $f$. We have that:
\ifdraft
$$(g_{p^*},h_{p^*}) \in \arg\max_{(g,h) \in \cC_K }\mu_{\cD}(g)\cdot (L(\cD,f,g) - L(\cD,h,g))$$
\else $(g_{p^*},h_{p^*}) \in \arg\max_{(g,h) \in \cC_K }\mu_{\cD}(g)\cdot (L(\cD,f,g) - L(\cD,h,g))$. \fi
In other words, when $\cD$ is the empirical distribution over $D$, $(g_{p^*},h_{p^*})$ form a solution to optimization problem \eqref{eq:optproblem}.
\end{restatable}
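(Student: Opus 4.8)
The plan is to prove the exact identity that the cost-sensitive objective being minimized is the negation of the objective in \eqref{eq:optproblem} being maximized, so that a minimizer of one is automatically a maximizer of the other. Concretely, I would show that for every $p \in K$,
$$\E_{(x,y)\sim\cD}\left[c^f_{(x,y)}(p(x))\right] = -\,\mu_\cD(g_p)\cdot\bigl(L(\cD,f,g_p) - L(\cD,h_p,g_p)\bigr).$$

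First I would rewrite the right-hand side as a single unconditional expectation. Using $\mu_\cD(g_p)=\Pr_\cD[g_p(x)=1]$ together with the definition of conditional loss, the product $\mu_\cD(g_p)\cdot L(\cD,f,g_p)$ collapses to $\E_\cD[\mathbbm{1}[g_p(x)=1]\cdot \ell(f(x),y)]$, and likewise for $h_p$. Since we are in the binary classification setting with $\ell(\hat y,y)=\mathbbm{1}[\hat y\neq y]$, this turns the target objective into
$$\E_{(x,y)\sim\cD}\Bigl[\mathbbm{1}[g_p(x)=1]\cdot\bigl(\mathbbm{1}[f(x)\neq y]-\mathbbm{1}[h_p(x)\neq y]\bigr)\Bigr].$$

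The core step is then a pointwise identity: for every $(x,y)$,
$$\mathbbm{1}[g_p(x)=1]\cdot\bigl(\mathbbm{1}[f(x)\neq y]-\mathbbm{1}[h_p(x)\neq y]\bigr) = -\,c^f_{(x,y)}(p(x)).$$
I would verify this by case analysis. When $p(x)=?$, both sides vanish (the leading indicator kills the left side, and the deferral cost is $0$). When $p(x)\in\{0,1\}$ we have $g_p(x)=1$ and $h_p(x)=p(x)$, and I would split into the four cases according to whether $f(x)$ and $h_p(x)$ each match $y$: if both are correct or both are wrong, the left side is $0$ and the induced cost is $0$; if $f$ is correct but $h_p$ is wrong, the left side is $-1$ matching cost $+1$; and if $f$ is wrong but $h_p$ is correct, the left side is $+1$ matching cost $-1$. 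Taking expectations over $(x,y)\sim\cD$ then yields the displayed equality of objectives.

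To finish, since the two objectives are exact negatives of one another and the map $p\mapsto(g_p,h_p)$ is a bijection of $K$ onto $\cC_K$, the minimizer $p^*$ of the cost-sensitive problem produces a pair $(g_{p^*},h_{p^*})$ that maximizes $\mu_\cD(g)\cdot(L(\cD,f,g)-L(\cD,h,g))$ over all $(g,h)\in\cC_K$; specializing $\cD$ to the empirical distribution on $D$ gives a solution to \eqref{eq:optproblem}. I expect the only delicate point to be the bookkeeping in the pointwise case analysis — getting the signs right and confirming that the value of $h_p$ off the group (where it is set to $0$) never enters the computation, which is ensured by the leading indicator $\mathbbm{1}[g_p(x)=1]$.
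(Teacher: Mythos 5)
Your proof is correct and follows essentially the same route as the paper's: both establish that the expected induced cost of $p$ is the exact negation of $\mu_\cD(g_p)\cdot(L(\cD,f,g_p)-L(\cD,h_p,g_p))$ (you via a pointwise case analysis, the paper via an equivalent indicator-algebra identity using $\mathbbm{1}[p(x)\neq ?]\cdot\mathbbm{1}[p(x)\neq f(x)]$), and then conclude that minimizing the cost-sensitive objective over $K$ is the same as maximizing the certificate objective over $\cC_K$. Your closing observation that $p\mapsto(g_p,h_p)$ maps $K$ onto $\cC_K$ is exactly the (implicit) step the paper uses as well, so there is nothing missing.
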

\ifdraft
\begin{proof}
 For any model $p \in K$, we can calculate its expected cost under the induced costs of $f$:
 \begin{eqnarray*}
  \E_{(x,y) \sim \cD}\left[ c_{(x,y)}(p(x))\right] &=& \E_{(x,y) \sim \cD}\left[\mathbbm{1}[p(x) \neq ?]\cdot \mathbbm{1}[p(x) \neq f(x)]\cdot (\mathbbm{1}[p(x) \neq y]-\mathbbm{1}[p(x) = y]) \right] \\
  &=&  \E_{(x,y) \sim \cD}\left[\mathbbm{1}[g_p(x) = 1]\cdot \mathbbm{1}[h_p(x) \neq f(x)](\mathbbm{1}[h_p(x) \neq y]-\mathbbm{1}[h_p(x) = y]) \right] \\
  &=& \mu_{\cD}(g_p)\cdot (L(\cD,h_p,g_p) - L(\cD,f,g_p)) \\
\end{eqnarray*}
Thus \emph{minimizing}   $\E_{(x,y) \sim \cD}\left[ c_{(x,y)}(p(x))\right]$ is equivalent to \emph{maximizing} $\mu_{\cD}(g_p)\cdot (L(\cD,f,g_p)-L(\cD,h_p,g_p))$ over $p \in K$.
\end{proof}
\else The proof can be found in the Appendix.
\fi

In other words, in order to be able to efficiently implement algorithm \ref{alg:TrainByOpt} for the class $\cC_K$, it suffices to be able to solve a ternary cost sensitive classification problem over $K$.  The up-shot is that if we can efficiently solve weighted multi-class classification problems (for which we have many algorithms which form good heuristics) over $K$, then we can find approximately $\cC_K$-Bayes optimal models as well.
\else

We can also use our ListUpdate method as part of an algorithm for explicitly computing $(\epsilon,\cC)$-Bayes optimal models from data sampled i.i.d. from the underlying distribution $\cD$. First, we must show that if we find certificates of sub-optimality $(g,h) \in C$ on our dataset, that we can be assured that they are certificates of sub-optimality on the underlying distribution. Next, we must describe an algorithmic method for finding $(\mu,\Delta)$ certificates of sub-optimality that maximize $\mu\cdot \Delta$. We provide such guarantees and algorithmic methods in Appendix \ref{app:algopt}.
\fi 

\ifdraft
\subsubsection{Finding Certificates Using Alternating Maximization}
\label{sec:EM}
The reduction from optimization problem \eqref{eq:optproblem} that we gave in Section \ref{sec:ternary} to ternary cost sensitive classification starts with a ternary class $K$ and then finds certificates of sub-optimality over a derived class $\cC_K$. What if we want to \emph{start} with a pre-defined class of group indicator functions $\cG$ and models $\cH$, and find certificates of sub-optimality $(g,h) \in \cG\times \cH$? In this section we give an alternating maximization method that attempts to solve optimization problem \ref{eq:optproblem} by alternating between maximizing over $g_t$ (holding $h_t$ fixed), and maximizing over $h_t$ (holding $g_t$ fixed):

\begin{equation}
\label{eq:gmin}
g_t=\arg\max_{g\in \cG }\mu_{D}(g)\cdot (L(D,f_{t-1},g) - L(D,h_t,g))
\end{equation}
\begin{equation}
\label{eq:hmin}
h_t =\arg\max_{h \in \cH }\mu_{D}(g_t)\cdot (L(D,f_{t-1},g_t) - L(D,h,g_t))
\end{equation}

We show that each of these alternating maximization steps can be reduced to solving a standard (unweighted) empirical risk minimization problem over $\cG$ and $\cH$ respectively. Thus, each can be solved for any heuristic for standard machine learning problems --- we do not even require support for weighted examples, as we do in the cost sensitive classification approach from Section \ref{sec:ternary}. This alternating maximization approach quickly converges to a local optimum or saddle point of the optimization objective from from \eqref{eq:optproblem} --- i.e. a solution that cannot be improved by either a unilateral change of either $g_t \in \cG$ or $h_t \in \cH$. 

We begin with the minimization problem \eqref{eq:hmin} over $h$, holding $g_t$ fixed, and show that it reduces to an  empirical risk minimization problem over $\cH$.

\begin{restatable}{lemma}{lemhopt}
\label{lem:hopt}
 Fix any $g_t \in \cG$ and dataset $D \in \cX^n$. Let $D_{g_t} = \{(x,y) \in \cD : g_t(x) = 1\}$ be the subset of $D$ consisting of members of group $g$. Let $h^* = \arg\min_{h \in \cH} L(D_{g_t},h)$. Then $h^*$ is a solution to optimization problem \ref{eq:hmin}.
\end{restatable}
\ifdraft
\begin{proof}
We observe that only the final term of the optimization objective in \eqref{eq:hmin} has any dependence on $h$ when $g_t$ is held fixed. Therefore:
\begin{eqnarray*}
\arg\max_{h \in \cH }\mu_{D}(g_t)\cdot (L(D,f_{t-1},g_t) - L(D,h,g_t)) &=& \arg\max_{h \in \cH }-\mu_{D}(g_t) \cdot L(D,h,g_t)) \\
&=& \arg\min_{h \in \cH} L(D,h,g_t) \\
&=& \arg\min_{h \in \cH} L(D_{g_t},h)
\end{eqnarray*}
\end{proof}
\else
The proof can be found in the appendix.
\fi

Next we consider the minimization problem \eqref{eq:gmin} over $g$, holding $h_t$ fixed and show that it reduces to an empirical risk minimization problem over $\cG$. The intuition behind the below construction is that the only points $x$ that matter in optimizing our objective are those on which the models $f_{t-1}$ and $h_t$ disagree. Amongst these points $x$ on which the two models disagree, we want to have $g(x) = 1$ if $h$ correctly predicts the label, and not otherwise.
\begin{restatable}{lemma}{lemgopt}
\label{lem:gopt}
 Fix any $h_t \in \cH$ and dataset $D \in \cX^n$. Let:
 \ifdraft
 $$D_{h_t}^1 = \{(x,y) \in D : h_t(x) = y \neq f_{t-1}(x)\} \ \ \ \ D_{h_t}^0 = \{(x,y) \in D : h_t(x) \neq y = f_{t-1}(x)\}$$

$$D_{h_t} = \left(\bigcup_{(x,y) \in D_{h_t}^1} (x, 1)\right) \cup \left(\bigcup_{(x,y) \in D_{h_t}^0} (x, 0)\right)$$
\else $D_{h_t}^1 = \{(x,y) \in D : h_t(x) = y \neq f_{t-1}(x)\}$, $D_{h_t}^0 = \{(x,y) \in D : h_t(x) \neq y = f_{t-1}(x)\}$, and $D_{h_t} = \left(\bigcup_{(x,y) \in D_{h_t}^1} (x, 1)\right) \cup \left(\bigcup_{(x,y) \in D_{h_t}^0} (x, 0)\right)$. \fi
Let $g^* = \arg\min_{g \in \cG} L(D_{h_t},g)$. Then $g^*$ is a solution to optimization problem \eqref{eq:gmin}.
\end{restatable}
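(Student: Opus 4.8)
The plan is to specialize the objective of \eqref{eq:gmin} to the $0$-$1$ loss, rewrite it as a point-wise sum over $D$, and observe that the contribution of each point is determined entirely by which (if either) of the two sets $D_{h_t}^1, D_{h_t}^0$ it lies in. This will let me identify \emph{maximizing} the objective with \emph{minimizing} an ordinary classification error on the relabeled dataset $D_{h_t}$, which is exactly the reduction claimed.

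First I would use the same algebra as in the proof of Theorem \ref{thm:adaptive} to write $\mu_{D}(g)\cdot\big(L(D,f_{t-1},g) - L(D,h_t,g)\big) = \tfrac1n\sum_{(x,y)\in D}\mathbbm{1}[g(x)=1]\cdot\big(\ell(f_{t-1}(x),y)-\ell(h_t(x),y)\big)$. Under the $0$-$1$ loss the per-point factor $\ell(f_{t-1}(x),y)-\ell(h_t(x),y)=\mathbbm{1}[f_{t-1}(x)\neq y]-\mathbbm{1}[h_t(x)\neq y]$ takes the value $+1$ exactly on $D_{h_t}^1$ (where $h_t$ is correct and $f_{t-1}$ is wrong), $-1$ exactly on $D_{h_t}^0$ (where $f_{t-1}$ is correct and $h_t$ is wrong), and $0$ on every other point (where the two models are either both correct or both wrong). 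Hence, up to the factor $1/n$, the objective equals $|\{x\in D_{h_t}^1: g(x)=1\}| - |\{x\in D_{h_t}^0:g(x)=1\}|$; the ``don't-care'' points drop out, which is precisely why $D_{h_t}$ retains only $D_{h_t}^1\cup D_{h_t}^0$.

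Next I would connect this count to the relabeled classification error. On $D_{h_t}$, points of $D_{h_t}^1$ carry label $1$ and points of $D_{h_t}^0$ carry label $0$, so $|D_{h_t}|\cdot L(D_{h_t},g) = |\{x\in D_{h_t}^1:g(x)=0\}| + |\{x\in D_{h_t}^0:g(x)=1\}|$. Substituting $|\{x\in D_{h_t}^1:g(x)=0\}| = |D_{h_t}^1| - |\{x\in D_{h_t}^1:g(x)=1\}|$ and rearranging shows that the objective of \eqref{eq:gmin} equals $\tfrac1n\big(|D_{h_t}^1| - |D_{h_t}|\cdot L(D_{h_t},g)\big)$. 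Since $|D_{h_t}^1|$ and $|D_{h_t}|$ do not depend on $g$, maximizing the objective over $g\in\cG$ is equivalent to minimizing $L(D_{h_t},g)$ over $g\in\cG$, so $g^* = \arg\min_{g\in\cG}L(D_{h_t},g)$ is a solution to \eqref{eq:gmin}.

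There is no real obstacle here beyond sign bookkeeping: the only points requiring care are the direction of the relabeling (correct-$h_t$/wrong-$f_{t-1}$ points get label $1$, the reverse get label $0$) and the fact that the two $g$-independent constants $|D_{h_t}^1|$ and $|D_{h_t}|$ may be discarded without affecting the $\arg\max$. It is worth flagging that this argument is the mirror image of Lemma \ref{lem:hopt}: there the group was fixed and only the final loss term depended on $h$, whereas here group membership itself is the decision variable, so it is the $+1/-1/0$ payoff structure, rather than a single loss term, that gets re-encoded as binary labels.
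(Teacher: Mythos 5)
Your proof is correct and is essentially the same argument as the paper's: both reduce the objective $\mu_D(g)\cdot(L(D,f_{t-1},g)-L(D,h_t,g))$ to a count of $g$-captured points in $D_{h_t}^1$ minus $g$-captured points in $D_{h_t}^0$ (agreement points contributing zero), and then observe this is an affine, strictly decreasing function of $L(D_{h_t},g)$, so the $\arg\max$ and $\arg\min$ coincide. The only difference is organizational --- you collapse the product into an unnormalized pointwise sum with $+1/-1/0$ payoffs, while the paper partitions the $g(x)=1$ points into four sets $S_1,\dots,S_4$ and cancels the common terms --- but the underlying decomposition and cancellation are identical.
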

\ifdraft
\begin{proof}
  For each $g \in \cG$, we partition the set of points $(x,y) \in D_{h_t}$ such that $g(x) = 1$ according to how they are labelled by $f_{t_1}$ and $h_t$:
  $$S_1(g) = \{(x,y) \in D_{h_t} : g(x) = 1, f_{t-1}(x) = h_t(x) = y\}, \ \ S_2(g) = \{(x,y) \in D_{h_t} : g(x) = 1, f_{t-1}(x) = h_t(x) \neq y\},$$
    $$S_3(g) = \{(x,y) \in D_{h_t} : g(x) = 1, f_{t-1}(x) \neq h_t(x) = y\}, \ \ S_4(g) = \{(x,y) \in D_{h_t} : g(x) = 1, f_{t-1}(x) =  y \neq h_t(x)\};$$

\noindent i.e. amongst the points in group $g$, $S_1(g)$ consists of the points that both $f_{t-1}$ and $h_t$ classify correctly, $S_2(g)$ consists of the points that both classify incorrectly, and $S_3(g)$ and $S_4(g)$ consist of points that $f_{t-1}$ and $h_t$ disagree on: $S_3(g)$ are those points that $h_t$ classifies correctly, and $S_4(g)$ are those points that $f_{t-1}$ classifies correctly. Write:
    $$w_1(g) = \frac{|S_1(g)|}{|D_{h_t}|} \ \ \ w_2(g) = \frac{|S_2(g)|}{|D_{h_t}|} \ \ \ w_3(g)  = \frac{|S_3(g)|}{|D_{h_t}|} \ \ \ w_4(g) = \frac{|S_4(g)|}{|D_{h_t}|}$$
to denote the corresponding proportions of each of the sets $S_i$ within $D_{h_t}$.

Now observe that $\mu_D(g) = w_1(g) + w_2(g) + w_3(g) + w_4(g)$, $L(D,f_{t-1},g) = \frac{w_2(g)+w_3(g)}{\mu_D(g)}$, and $L(D,h_t,g) = \frac{w_2(g)+w_4(g)}{\mu_D(g)}$. Therefore we can rewrite the objective of optimization problem \eqref{eq:gmin} as:
\begin{eqnarray*}
\arg\max_{g\in \cG }\mu_{D}(g)\cdot (L(D,f_{t-1},g) - L(D,h_t,g)) &=& \arg\max_{g\in \cG }(w_2(g) + w_3(g)) - (w_2(g) + w_4(g)) \\
&=& \arg\max_{g\in \cG } w_3(g) - w_4(g) \\
&=& \arg\min_{g\in \cG } w_4(g) - w_3(g) \\
&=& \arg\min_{g \in \cG} L(D_{h_t},g)
\end{eqnarray*}
\end{proof}
\else
The proof can be found in the Appendix.
\fi
With these two components in hand, we can describe our alternating maximization algorithm for finding $\cG\times\cH$ certificates of sub-optimality for a model $f_{t-1}$ (Algorithm \ref{alg:altmin}):

\begin{algorithm}
\KwInput{A dataset $D$, a model $f_{t-1}$, group and model classes $\cG$ and $\cH$, and an error parameter $\epsilon$.}
Let $(g^*,h^*) \in \cG \times \cH$ be an arbitrary initial certificate.  \\
CurrentValue $\leftarrow \mu_{D}(g^*)\cdot (L(D,f_{t-1},g^*) - L(D,h^*,g^*))$ \\
Let:
$$h^* = \arg\min_{h \in \cH} L(D_{g^*},h) \ \ \ g^* = \arg\min_{g \in \cG} L(D_{h^*},g)$$
\While{$\mu_{D}(g^*)\cdot (L(D,f_{t-1},g^*) - L(D,h^*,g^*) \geq \textrm{CurrentValue}+\epsilon$}{
CurrentValue $\leftarrow \mu_{D}(g^*)\cdot (L(D,f_{t-1},g^*) - L(D,h^*,g^*))$ \\
Let:
$$h^* = \arg\min_{h \in \cH} L(D_{g^*},h) \ \ \ g^* = \arg\min_{g \in \cG} L(D_{h^*},g)$$
}
\KwOut{Certificate $(g_t,h_t) = (g^*,h^*)$}
\caption{AltMinCertificateFinder($D,f_{t-1},\cG,\cH,\epsilon)$: An algorithm for finding $\epsilon$-locally optimal certificates of sub-optimality $(g_t,h_t) \in \cG\times \cH$ for model $f_{t-1}$.}
\label{alg:altmin}
\end{algorithm}

We have the following theorem:
\begin{restatable}{theorem}{localopt}
Let $D \in (\cX\times\cY)^n$ be an arbitrary dataset, $f_{t-1}:\cX\rightarrow \cY$ be an arbitrary model, $\cG$ and $\cH$ be arbitrary group and model classes, and $\epsilon > 0$. Then after solving at most $2/\epsilon$ empirical risk minimization problems over each of $\cG$ and $\cH$, AltMinCertificateFinder (Algorithm \ref{alg:altmin}) returns a $(\mu,\Delta)$-certificate of sub-optimality $(g_t,h_t)$ for $f_{t-1}$  that is an $\epsilon$-approximate local optimum (or saddle point) in the sense that:
\begin{enumerate}
    \item For every $h \in \cH$, $(g_t,h)$ is not a $(\mu', \Delta')$ certificate of sub-optimality for $f_{t-1}$ for any $\mu'\cdot \Delta' \geq \mu\cdot \Delta + \epsilon$, and
    \item For every $g \in \cG$, $(g,h_t)$ is not a $(\mu', \Delta')$ certificate of sub-optimality for $f_{t-1}$ for any $\mu'\cdot \Delta' \geq \mu\cdot \Delta + \epsilon$
\end{enumerate}
\end{restatable}
\ifdraft
\begin{proof}
 By  the halting condition of the While loop, every iteration of the While loop increases  $\mu_{D}(g^*)\cdot (L(D,f_{t-1},g^*) - L(D,h^*,g^*))$ by at least $\epsilon$. Since this quantity is bounded in $[-1,1]$, there cannot be more than $2/\epsilon$ iterations. Each iteration solves a single empirical risk minimization problem over each of $\cH$ and $\cG$.

 The certificate $(g_t,h_t)$ finally output is a $(\mu,\Delta)$ certificate of sub-optimality for $\mu = \mu_{D}(g_t)$ and $\Delta = (L(D,f_{t-1},g_t) - L(D,h_t,g_t))$. It must be  that the objective $\mu_{D}(g_t)\cdot (L(D,f_{t-1},g_t) - L(D,h_t,g_t)) = \mu\cdot \Delta$ cannot be improved by more than $\epsilon$ by re-optimizing either $h_t$ or $g_t$, by definition of the halting condition and by Lemmas \ref{lem:hopt} and \ref{lem:gopt}. The theorem follows.
\end{proof}
\else The proof can be found in the Appendix.
\fi
\fi

\section{Experiments}
\label{sec:experiments}

We next provide an experimental illustration and evaluation of our main algorithm
MonotoneFalsifyAndUpdate (Algorithm~\ref{alg:MonFalsifyAndUpdate}), as well as our optimization approach (Algorithm \ref{alg:TrainByOpt}) paired with our reduction to ternary cost sensitive classification. 
We report findings on a number of different
datasets and classification tasks from the recently published Folktables package~\citep{ding2021retiring},
which provides extensive U.S. census-derived Public Use Microdata Samples (PUMS). These granular and large datasets
are well-suited to experimental evaluations of algorithmic fairness, as they include
a number of demographic or protected features including race categories, age, disability status, and binary sex categories.
We refer to~\citep{ding2021retiring} for details of the datasets examined below.

We implemented Algorithm~\ref{alg:MonFalsifyAndUpdate} and divided each dataset examined into
80\% for training $(g,h)$ pairs as proposed certificates of sub-optimality for the algorithm's current model,
and 20\% for use by the certificate checker to validate improvements.
In order to clearly illustrate the gradual subgroup and overall error improvements, we train a 
deliberately simple initial model (a decision stump). In the first set of experiments, the sequence
of subgroups is 11 different demographic groups given by features common to all of the Folktables datasets.
We consider four different datasets \ifdraft \else (three pictured in Figure \ref{fig:acs-results}, a fourth in the appendix) \fi corresponding to four different states and prediction tasks, all for the
year 2018 (the most recent available). Table~\ref{table:data} \ifdraft\else in the Appendix \fi and its caption detail the states and tasks, the total dataset
sizes, and the definitions and counts of each demographic subgroup considered.  Figure~\ref{fig:acs-description} \ifdraft\else in the Appendix \fi shows a sample
fragment and the full feature set for one of the prediction tasks. 

\ifdraft
\begin{table}[h!]
\smaller
\begin{tabular}{lllllllllllll}
Dataset & Total & White & Black & Asian & Native & Other & Two+ & Male & Female & Young & Middle & Old \\
\hline
NY Employment & 196966 & 138473 & 24024 & 17030 & 10964 & 5646 & 829 & 95162 & 101804 & 68163 & 47469 & 81334 \\
Oregon Income & 21918 & 18937 & 311 & 923 & 552 & 823 & 372 & 11454 & 10464 & 5041 & 9124 & 7753 \\
Texas Coverage & 98927 & 72881 & 11192 & 4844 & 6660 & 2555 & 795 & 42128 & 56799 & 42048 & 31300 & 25579 \\
Florida Travel & 88070 & 70627 & 10118 & 2629 & 2499 & 1907 & 290 & 45324 & 42746 & 16710 & 35151 & 36209 \\

\end{tabular}
\caption{Summaries of the four Folktables state/task datasets we use in the experiments discussed below,
indicating total population size and the sizes of the different demographic subgroups considered.
The full descriptors of the demographic subgroups are: (race subgroups) White; Black or African American; Asian;  Native Hawaiian, Native American, Native Alaskan, or Pacific Islander; Some Other Race; Two or More Races; (gender subgroups) Male; Female;
(age subgroups) Young; Middle; Old.}
\label{table:data}
\end{table}
\fi

\ifdraft
\begin{figure}[h!]
        \center
        \subfloat{\includegraphics[width=.7\columnwidth,trim={0.5cm 0.5cm 0.5cm 0.5cm}]{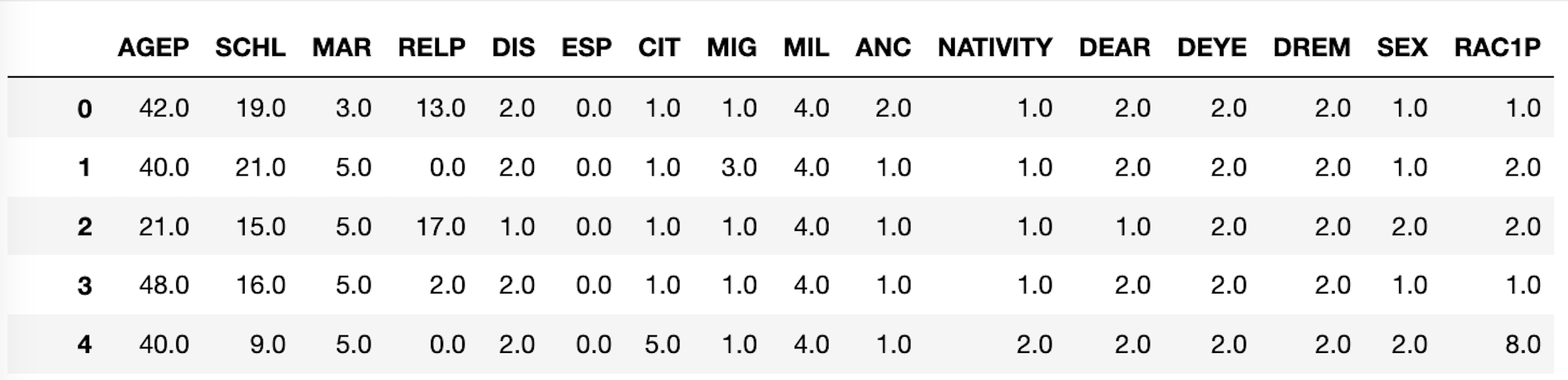}}
        \caption{A fragment of the Folktables ACS PUMS dataset, with the 16 features considered for an employment prediction task: age (AGEP), education (SCHL), marital status (MAR), relationship (RELP), disability status (DIS), parent employee status (ESP), citizen status (CIT), mobility status (MIG), military service (MIL), ancestry record (ANC), nation of origin (NATIVITY), hearing difficulty (DEAR), visual dificulty (DEYE), learning disability (DREM), sex (SEX), and race (RAC1P).}
    \label{fig:acs-description}
\end{figure}
\fi

In our first experiments, these 11 subgroups $g$
were introduced to Algorithm~\ref{alg:MonFalsifyAndUpdate} in some order, and for each a depth 10 decision tree $h$
was trained on just the training data falling into that subgroup. These $(g,h)$ pairs are then given to the certificate
checker, and if  $h$ improves the holdout set error for $g$, the improvement is accepted and incorporated into
the model, and any needed monotonicity repairs of previously introduced groups are made. Otherwise the pair is rejected and we
proceed to the next subgroup.

\ifdraft
Each of the five rows of Figure~\ref{fig:acs-results} shows the results of one experiment of this type, 
with the left panel in each row showing the test error of each subgroup and overall;
the middle panel showing the training error of each subgroup and overall;
and the right panel showing the absolute difference of train and test errors.
The $x$ axis corresponds to the rounds of Algorithm~\ref{alg:MonFalsifyAndUpdate}.
Since the subgroups are introduced in some sequential order (given in the legends),
we plot the subgroup errors using dashed lines up until the round at which the group is
introduced, and solid lines thereafter for visual clarity. Subgroups whose trained decision tree
was rejected by the certificate checker are not shown at all. In Figure \ref{fig:ny_employ_pdl}, a depiction of the decision list generated by the employment task in row 1 of Figure \ref{fig:acs-results} is given.
\else
Each of the plots in Figure ~\ref{fig:acs-results} shows the results of one experiment of this type on test data. Since the subgroups are introduced in some sequential order (given in the legends),
we plot the subgroup errors using dashed lines up until the round at which the group is
introduced, and solid lines thereafter. Subgroups whose trained decision tree
was rejected by the certificate checker are not shown at all. In Figure \ref{fig:ny_employ_pdl}, a depiction of the decision list generated by the employment task shown in the leftmost plot in Figure \ref{fig:acs-results} is given. 
\fi

\ifdraft A number of overarching observations are in order. First, as \else As \fi promised by Theorem~\ref{thm:bounty-monotone-SC}, the overall and subgroup test errors \ifdraft (left panels) \else\fi are all monotonically non-increasing
after the introduction of the subgroup,
and generally enjoy significant improvement as the algorithm proceeds. Note that even prior to
the introduction of a subgroup (dashed lines), the test errors are generally non-increasing. Although this is not guaranteed by the theory, it is not necessarily surprising, since before groups are introduced they often start from a high baseline error. However, this is
not always the case --- for example, on the Oregon income prediction task \ifdraft (third row) \else (middle panel) \fi the test error for the Native subgroup (light green) increases significantly with the
introduction of the subgroups for young and middle-aged at rounds 3 and 4 before it has been introduced.

\ifdraft
 
We also note that, as shown in Figure \ref{fig:ny_employ_pdl}, the need for repair backpointers for previously introduced groups upon introduction of a new group is rather common on these datasets, which demonstrates that this is a crucial feature of our algorithm for guaranteeing group monotonicity. Note that this is not in opposition to our observation that we often have monotonic improvement for groups before they are introduced, since after a group is introduced, the model performs well on them, and so it is more likely for changes in group performance to increase error.
\fi

\ifdraft
Turning next to the subgroup and overall training errors (middle panels), we observe that while they are generally non-increasing, they are not exclusively non-increasing, and 
this is not guaranteed by the theory --- indeed, the entire purpose of the certificate checker enforcing monotonicity on the test
set is to validate proposed improvements from training. Examining the differences between train and test errors (right panels):
as expected, the gaps tend to be larger for smaller subgroups since overfitting is more likely. Nevertheless, by the final rounds, train and test errors are generally close, almost always
less than 0.1 and often less than 0.05. Second, while volatile, the train-test gaps generally increase with rounds of
the algorithm. This is because the algorithm's model increases in complexity with rounds, thus
training errors underestimate test errors more as the algorithm proceeds. 
\fi

\begin{figure*}[h!]
        \ifdraft
        \subfloat{\includegraphics[width=.3\columnwidth,trim={0.5cm 0.5cm 0.5cm 0.5cm}]{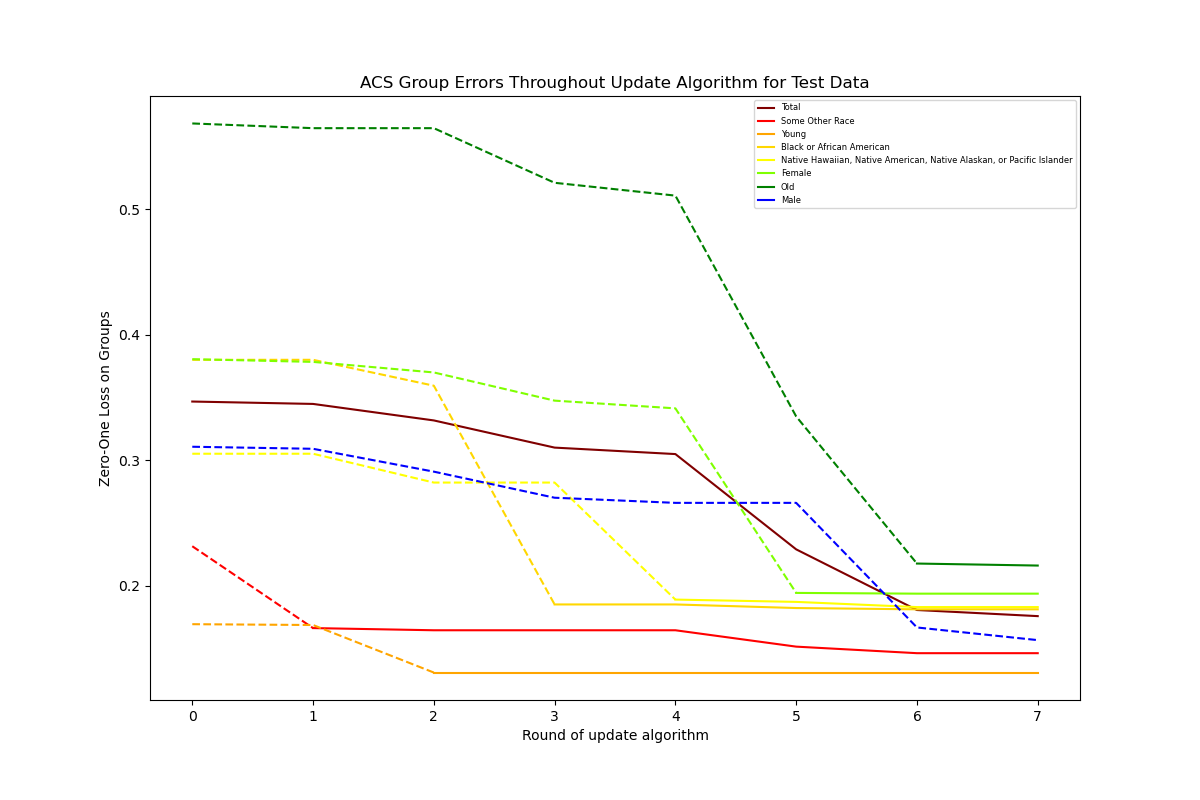}} \subfloat{\includegraphics[width=.3\columnwidth,trim={0.5cm 0.5cm 0.5cm 0.5cm}]{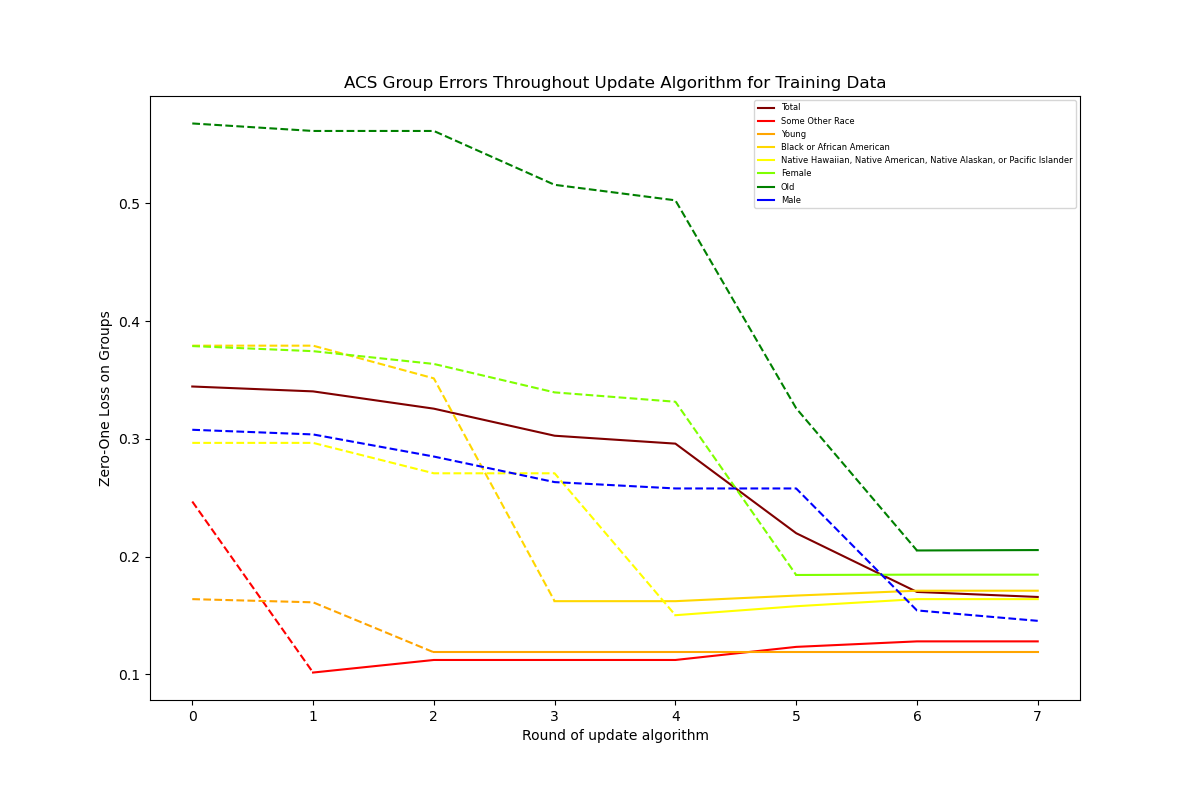}}
        \subfloat{\includegraphics[width=.3\columnwidth,trim={0.5cm 0.5cm 0.5cm 0.5cm}]{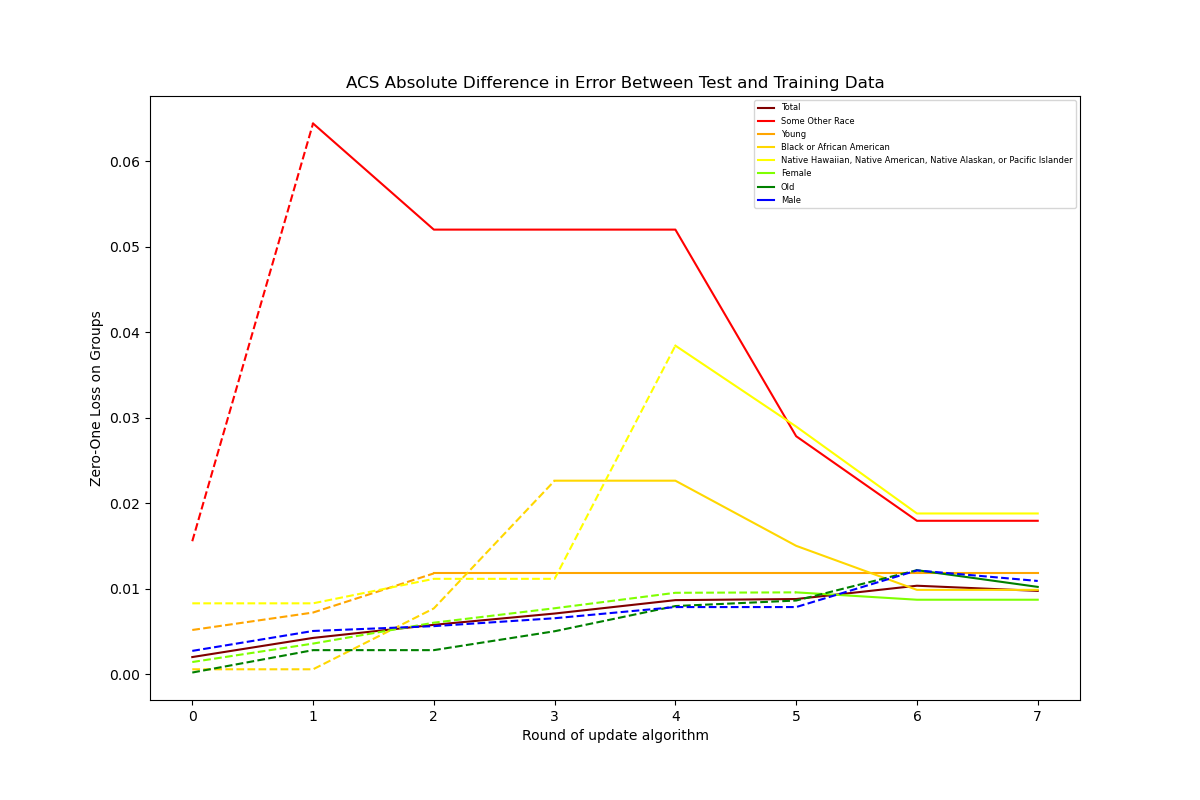}} \\
        \subfloat{\includegraphics[width=.3\columnwidth,trim={0.5cm 0.5cm 0.5cm 0.5cm}]{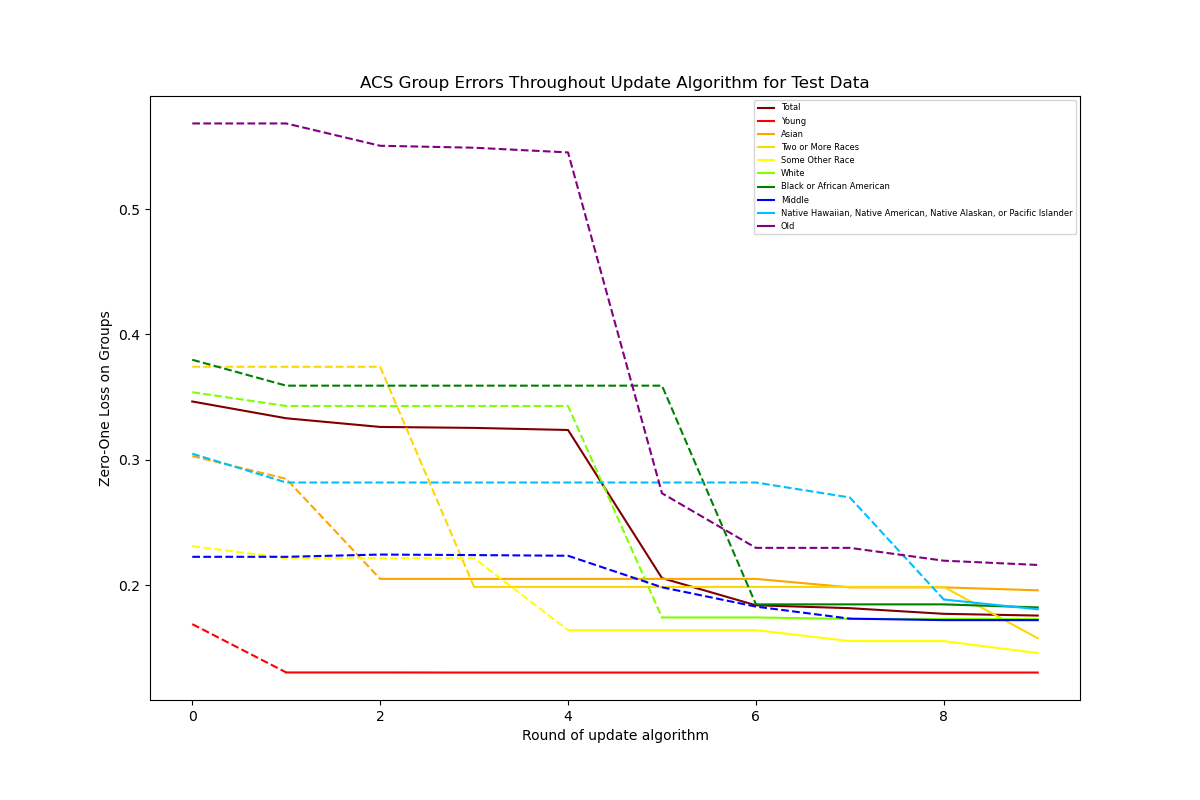}} \subfloat{\includegraphics[width=.3\columnwidth,trim={0.5cm 0.5cm 0.5cm 0.5cm}]{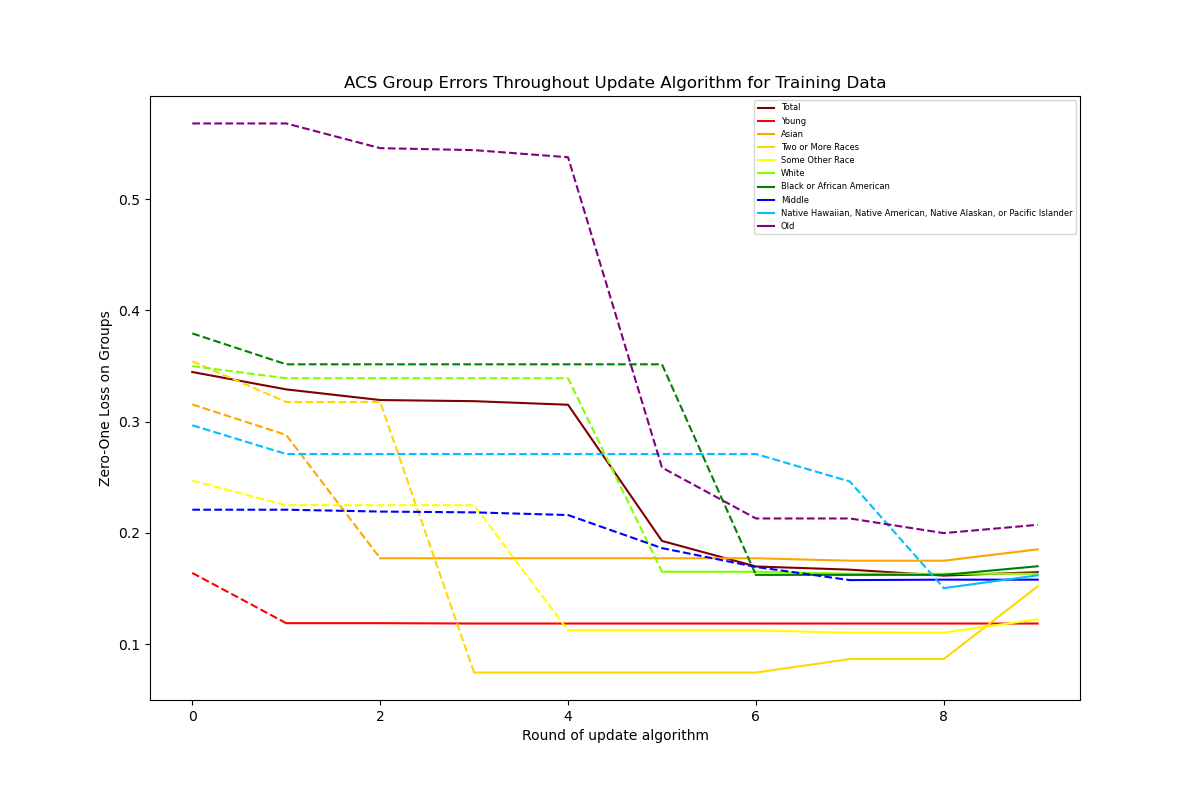}}
        \subfloat{\includegraphics[width=.3\columnwidth,trim={0.5cm 0.5cm 0.5cm 0.5cm}]{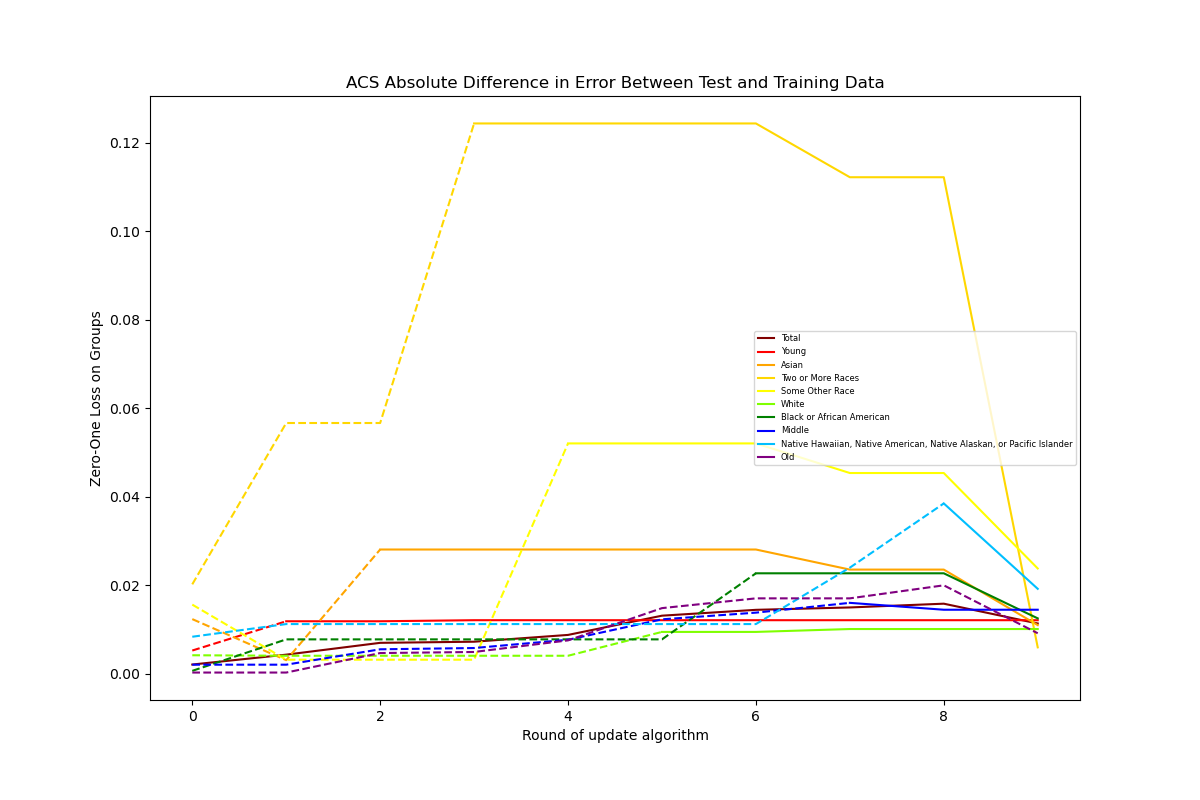}} \\
        \subfloat{\includegraphics[width=.3\columnwidth,trim={0.5cm 0.5cm 0.5cm 0.5cm}]{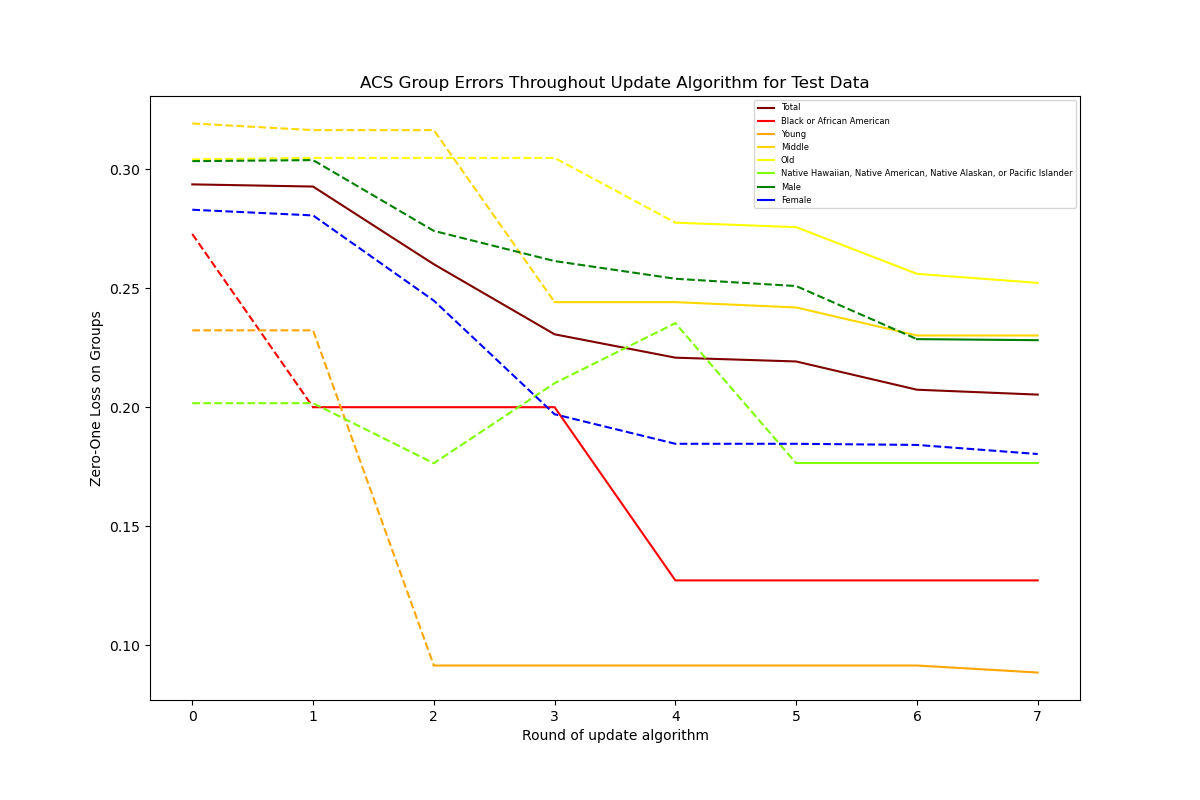}} \subfloat{\includegraphics[width=.3\columnwidth,trim={0.5cm 0.5cm 0.5cm 0.5cm}]{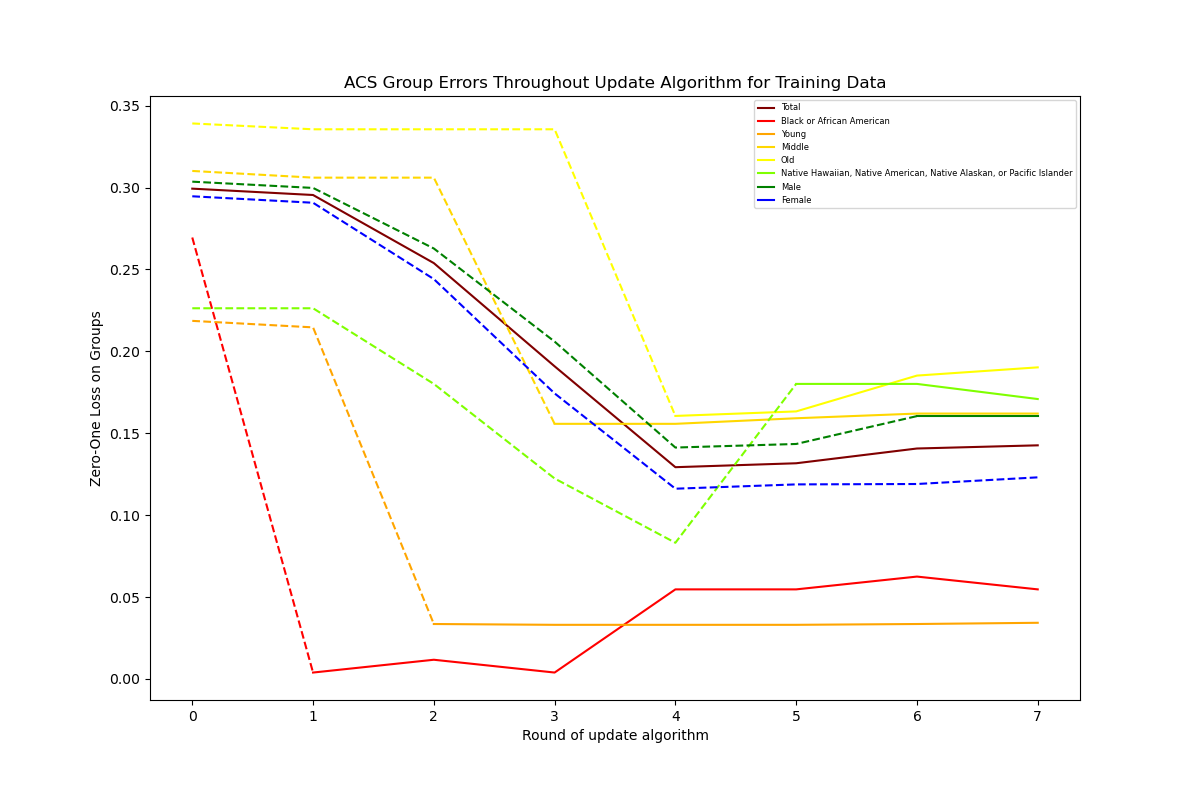}}
        \subfloat{\includegraphics[width=.3\columnwidth,trim={0.5cm 0.5cm 0.5cm 0.5cm}]{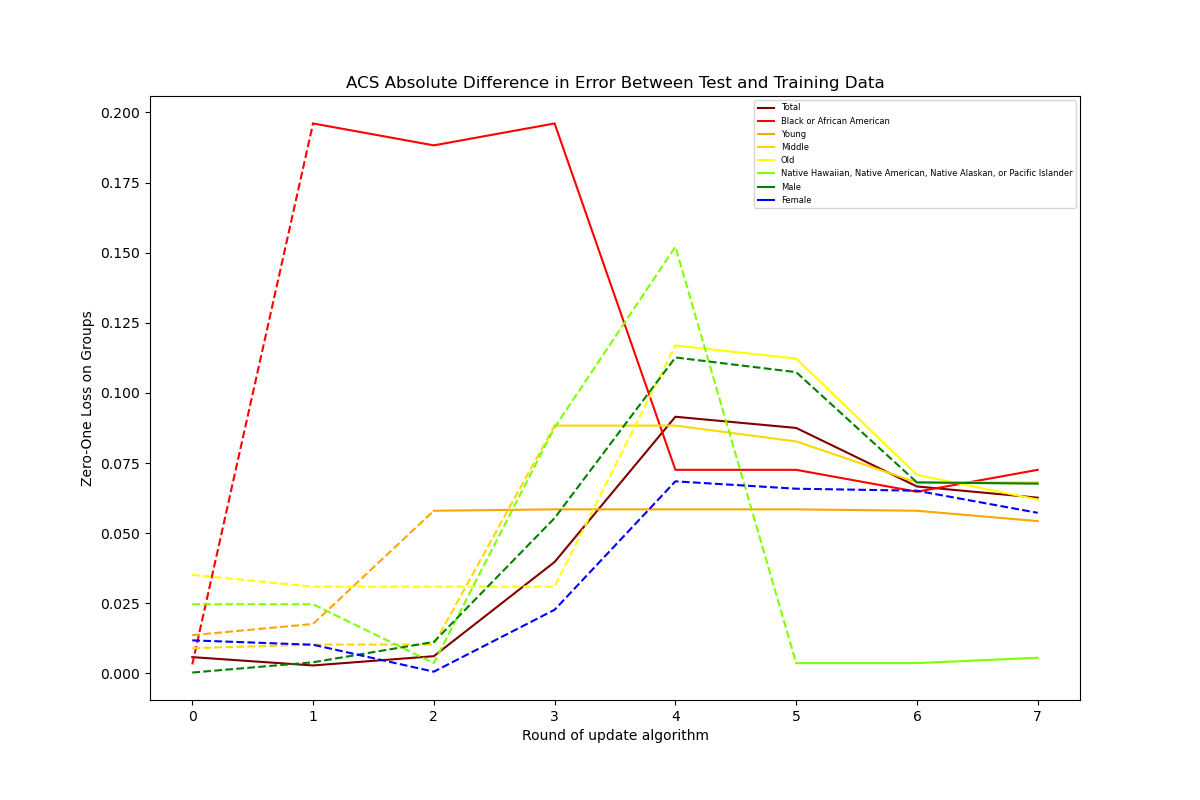}} \\
        \else
        \subfloat{\includegraphics[width=0.3 \textwidth,trim={0.5cm 0.5cm 0.5cm 0.5cm}]{figures/ny_18_employ_12345_test.png}}
        \subfloat{\includegraphics[width=0.3 \textwidth,trim={0.5cm 0.5cm 0.5cm 0.5cm}]{figures/or_18_income_12340_test.png}}
        \subfloat{\includegraphics[width=.3\textwidth,trim={0.5cm 0.5cm 0.5cm 0.5cm}]{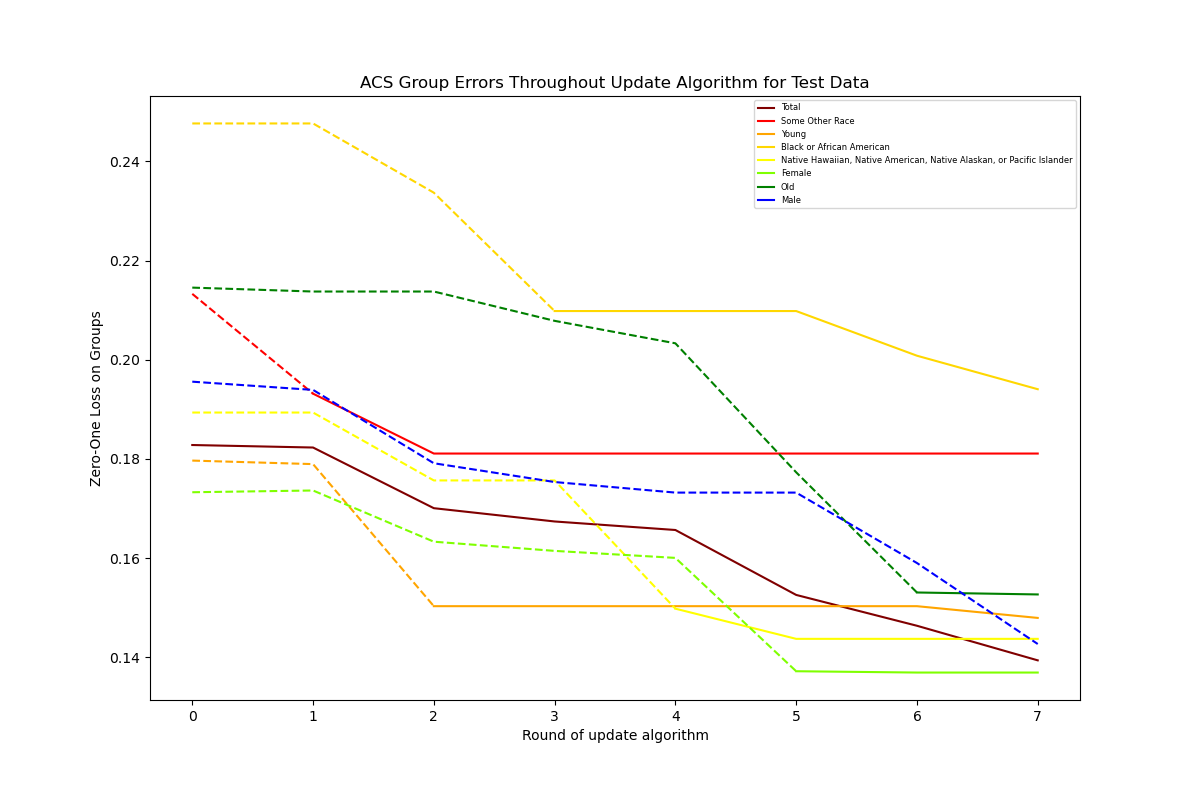}}
        \fi 
        \ifdraft
        \subfloat{\includegraphics[width=.3\columnwidth,trim={0.5cm 0.5cm 0.5cm 0.5cm}]{figures/tx_18_cover_12345_test.png}} \subfloat{\includegraphics[width=.3\columnwidth,trim={0.5cm 0.5cm 0.5cm 0.5cm}]{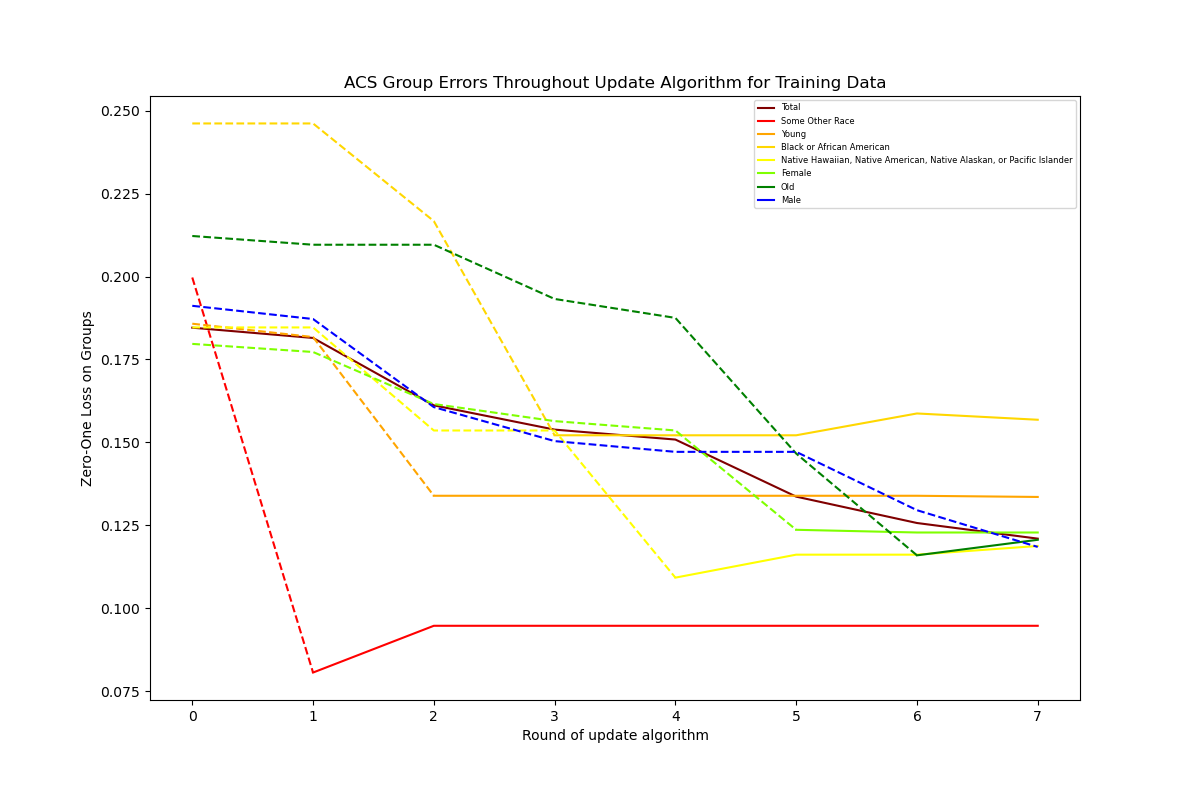}}
        \subfloat{\includegraphics[width=.3\columnwidth,trim={0.5cm 0.5cm 0.5cm 0.5cm}]{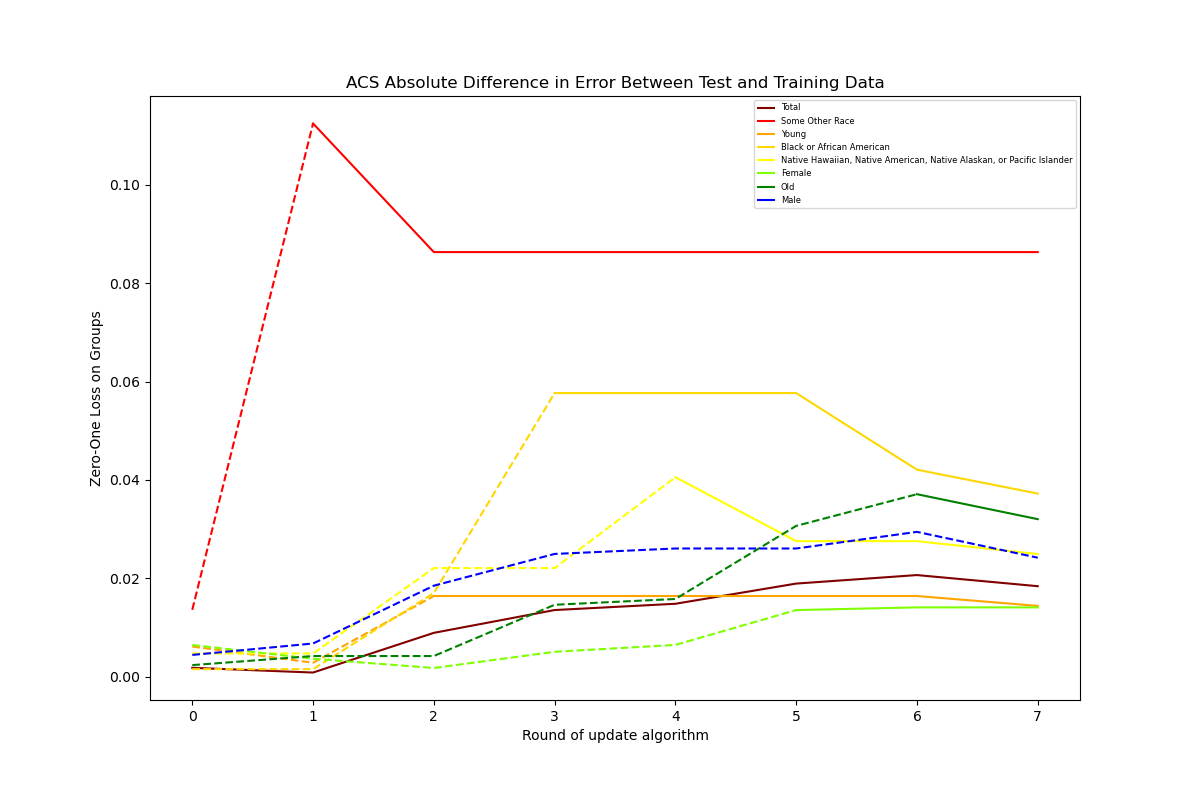}} \\
        \subfloat{\includegraphics[width=.3\columnwidth,trim={0.5cm 0.5cm 0.5cm 0.5cm}]{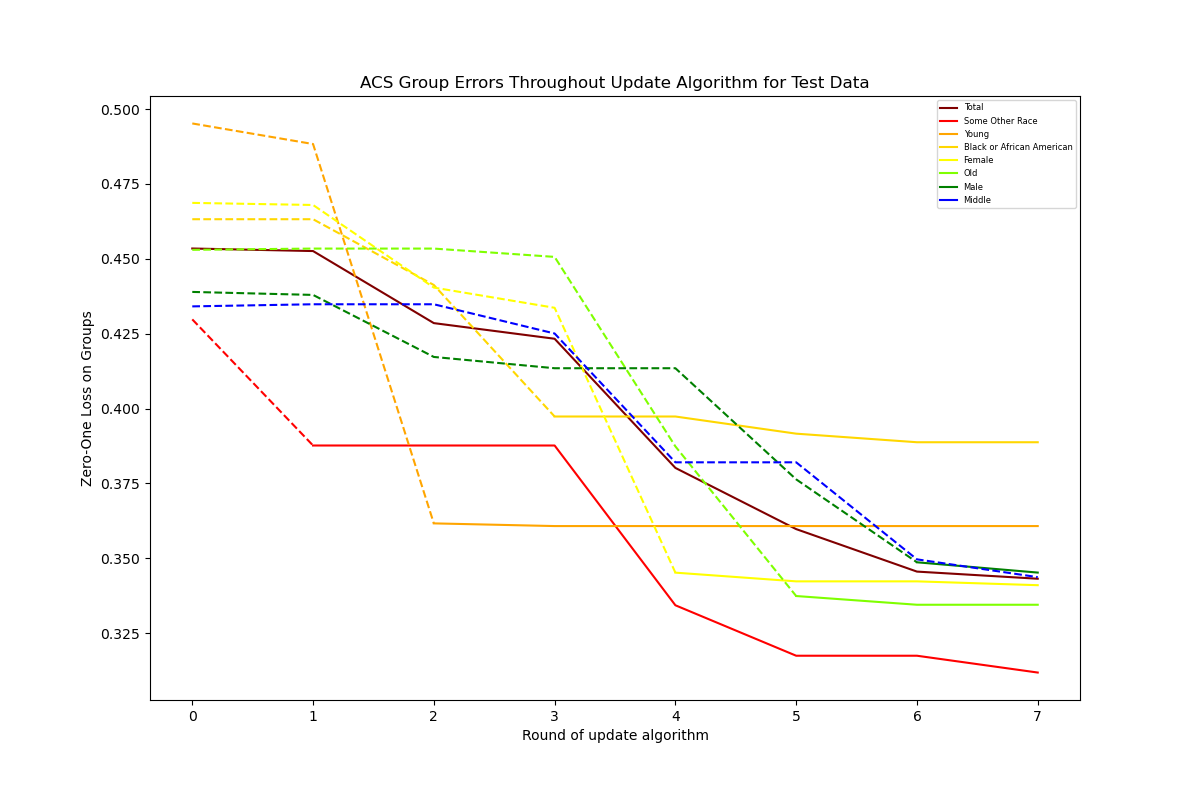}} \subfloat{\includegraphics[width=.3\columnwidth,trim={0.5cm 0.5cm 0.5cm 0.5cm}]{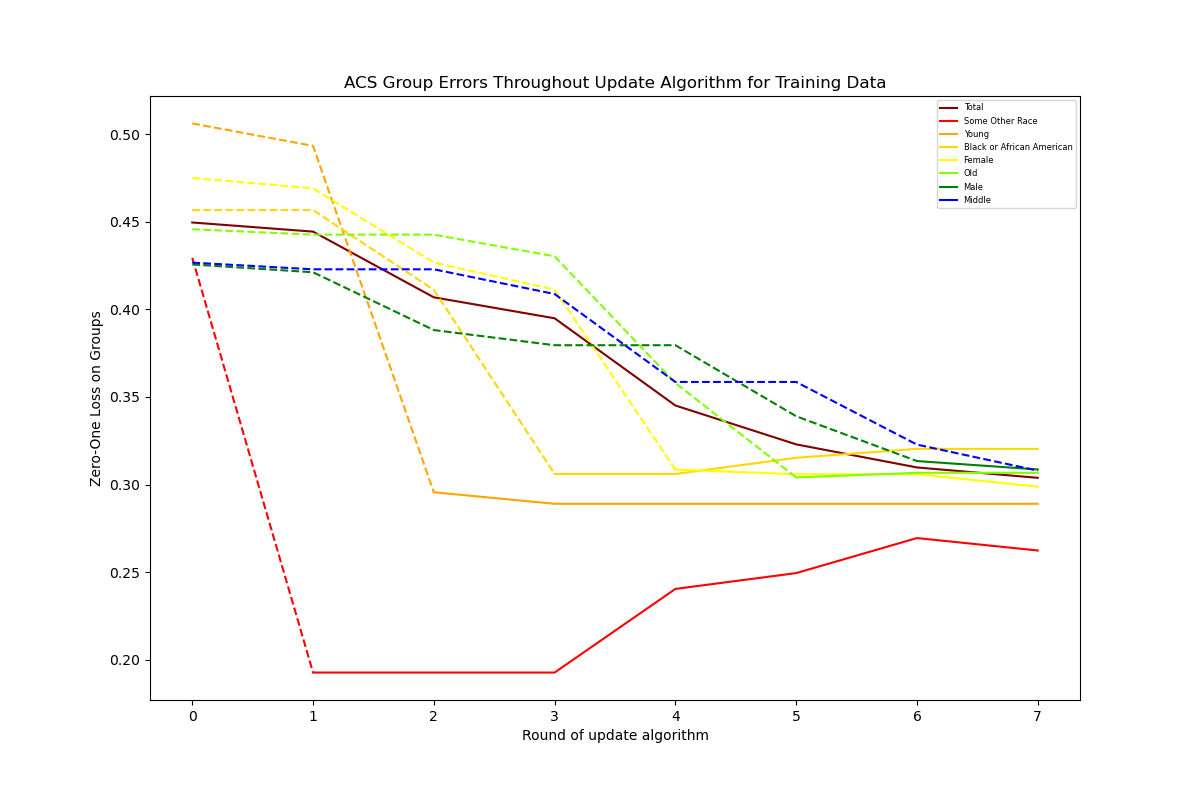}}
        \subfloat{\includegraphics[width=.3\columnwidth,trim={0.5cm 0.5cm 0.5cm 0.5cm}]{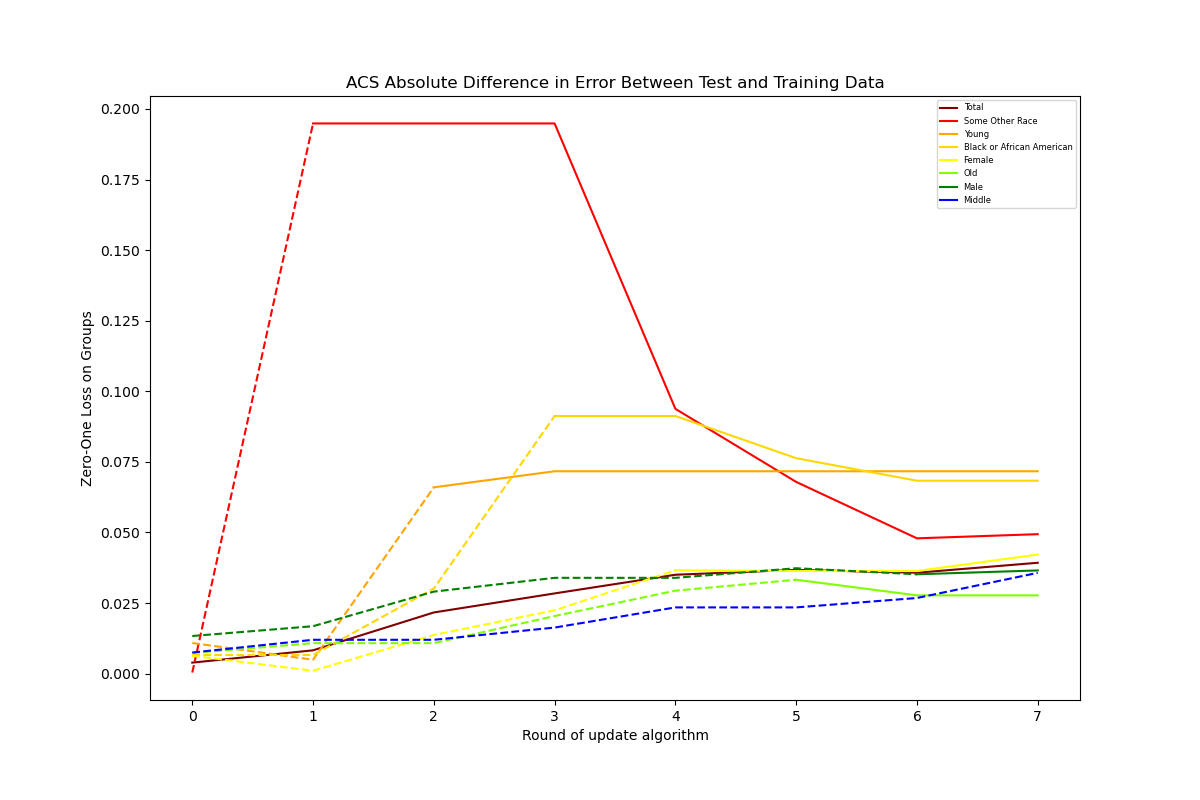}}
        \else \fi 
        \caption{\ifdraft \footnotesize \fi \ifdraft Sample results of an implementation of algorithm MonotoneFalsifyAndUpdate (Algorithm~\ref{alg:MonFalsifyAndUpdate}) on a
        number of different task and U.S. state datasets from the Folktables package. In each row, the 
        left panel shows group error rates on the test
        data; the middle panel shows group error rates on the training data; and the right panel shows the absolute difference of test and training error rates. First row: ACS employment task for New York state. Second row: ACS employment task for New York but with a
        different order of introduction of the groups. Third row: ACS income task for Oregon. Fourth row: ACS coverage task for Texas.
        Fifth row: ACS travel time task for Florida. \else 
        Results of an implementation of algorithm Algorithm~\ref{alg:MonFalsifyAndUpdate} on a
        number of different tasks and U.S. state datasets from the Folktables package. Left: employment task for NY, middle: income task for OR, and right: coverage task for TX. 
        \fi See text for discussion. }
    \label{fig:acs-results}
\end{figure*}

\ifdraft
\begin{figure}[h!]
    \centering
    \includegraphics[scale=0.18]{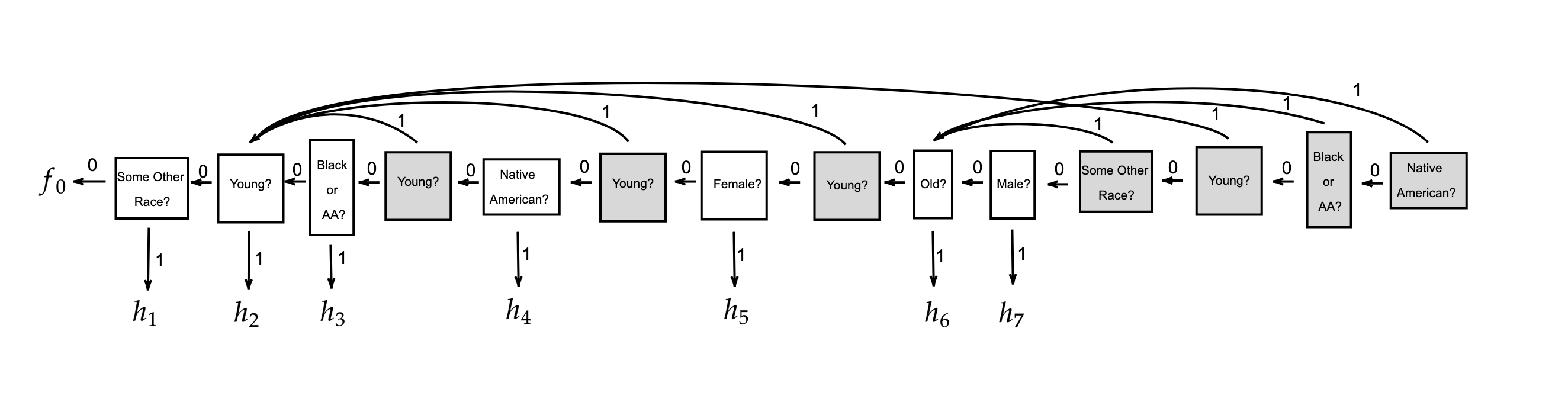}
    \caption{The pointer decision list generated by an implementation of algorithm MonotoneFalsifyAndUpdate (Algorithm~\ref{alg:MonFalsifyAndUpdate}) on an ACS employment task for New York State. Repair nodes are in gray. There were three additional updates that were rejected for models trained on Asian individuals, middle-age individuals, and those belonging to the ``other" race category.}
    \label{fig:ny_employ_pdl}
\end{figure}
\else
\begin{figure*}[h!]
    \centering
    \includegraphics[width = \textwidth]{figures/ny_18_employ_12345_PDL.png}
    \caption{The pointer decision list generated by an implementation of algorithm MonotoneFalsifyAndUpdate (Algorithm~\ref{alg:MonFalsifyAndUpdate}) on an ACS employment task for New York State. Repair nodes are in gray.}
    \label{fig:ny_employ_pdl}
\end{figure*}
\fi 

\ifdraft
The first two rows of plots in Figure~\ref{fig:acs-results} correspond to experiments on the same dataset (New York employment task),
with the only difference being the order in which the demographic subgroups are introduced to 
Algorithm~\ref{alg:MonFalsifyAndUpdate}. This ordering can have qualitative effects on the behavior of the algorithm, including
which subgroup improvements are accepted. For example, the male and female subgroups appear in the first row but are absent in the second due to their proposed improvements being rejected, while the Asian, 2 or more races, white, and middle-age subgroups
appear in the second row but not the first. While the evolution of errors for the subgroups
common to both rows is also quite different, in general both the overall and all 11 subgroup test errors end up being quite similar 
for both orderings at their final round, with the differences generally being in the second or third decimal 
place, as shown by Table~\ref{table:order-errors}.
\else 
While not visible in Figure \ref{fig:acs-results}, we note that the order the demographic groups are introduced can have qualitative effects on the behavior of the algorithm, including
which subgroup improvements are accepted, as shown in the first two rows of Figure \ref{fig:acs-results-app} in the Appendix. However, regardless of the ordering, the final errors across all groups end up comparable, with the differences generally being in the second or third decimal 
place, as shown by Table~\ref{table:order-errors} ~in the Appendix.
\fi

\ifdraft
\begin{table}[h!]
\begin{center}
\tabcolsep=0.11cm
\begin{tabular}{lllllllllllll}
Total & White & Black & Asian & Native  & Other & Two+ & Male & Female & Young & Middle & Old \\
\hline \\
0.0004 & 0.016 & 0.065& 0.023 & 0.0367 & 0.020 & 0.033 & 0.016 & 0.008 & 0.040 & 0.045 & 0.030 \\
\end{tabular}
\end{center}
\caption{Absolute differences in final overall and subgroup test errors between two different sequential orderings.
See text for details.}
\label{table:order-errors}
\end{table}
\fi 

Examining the actual pointer decision list produced in one of these experiments is also enlightening --- see Figure \ref{fig:ny_employ_pdl}. We note a couple of things. First, observing the repairs for the group ``Young'', we see that via the sequence of repairs, any ``young'' individual will always be classified by the model $h_2$ introduced at round 2 together with the group ``young". This is in contrast to the other groups, whose members are split across a number of different models by the pointer decision list. This indicates that being in the group ``young'' is more salient than being in any of the racial or gender groups in this task from the point of view of accurate classification. Second, we note that unlike for the group ``young'', repairs for other groups do not necessarily point back to the round at which they were introduced. For example, the right-most repairs for ``Native American'', ``Black or AA'', and ``Some Other Race'' point back to round 6, at which the group ``Old'' was introduced. This indicates that the pointer decision list at round 6 is more accurate for each of these groups then the decision trees $h_4$ $h_3$ and $h_1$ that were introduced along with these groups, despite the fact that these decision trees were trained \emph{only} on examples from their corresponding groups.  

The experiments described so far apply our framework and algorithm to a setting in which the groups
considered are simple demographic groups, introduced in a fixed ordering.
We finish with a brief experimental investigation of the approach described in Section~\ref{sec:restricted}\ifdraft \else and Appendix \ref{app:algopt}\fi, in
which the discovery of updates is posed as an optimization problem.  Specifically,
we implemented the ternary CSC approach of Section~\ref{sec:ternary}. In our implementation,  following the
approach taken in \citep{agarwal2018reductions,kearns2019empirical}, we learn a ternary
classifier by first learning two separate depth 7 decision tree regression functions for the costs of predicting 0 and 1, 
while the cost of predicting "?" is always 0 as per Section~\ref{sec:ternary}. Our final ternary classifier
chooses the prediction that minimizes the costs predicted by the regression functions.

We applied the CSC approach to the ACS income task using Folktables datasets for thirteen different states in the year 2016.
One interesting empirical phenomenon is the speed with which this approach converges, generally stopping after two
to five rounds and not finding any further subgroup improvements to the current model. \ifdraft
While the subgroups identified by the algorithm are not especially simple or interpretable (being defined by the
minimum over two depth 7 regression trees), one way of understanding behavior and performance is by direct
comparison of the test errors on the 11 basic demographic subgroups, which are available as features (along with various
non-demographic features) to the regression trees. \else\fi In Figure~\ref{fig:csc-errors}, we compare these subgroup test
errors to those obtained by the simple sequential introduction of those subgroups discussed earlier. In the left panel,
the $x$ axis represents each of the 11 subgroups, and the color coding of the points represents which of the 13 state
datasets is considered. The $y$ values of the points measure the signed difference of the test errors of the CSC approach
and the simple sequential approach (using a fixed ordering), 
with positive values being a win for sequential and negative values a win for CSC.
The right panel visualizes the same data, but now grouped by states on the $x$ axis and with color coding for the groups.

The overarching message of the figure is that though CSC does not directly consider these subgroups, and instead optimizes for the complex subgroup giving the best weighted improvement to the overall error, it is nevertheless
quite competitive on the simple subgroups, with the mass of points above and below $y = 0$ being approximately the same.
\ifdraft More specifically, averaged over all 143 state-group pairs, the average difference is 0.018, only a slight win for the sequential
approach; and for 5 of the 11 groups (Black or African American, some other race, Native, female and young) the CSC averages are better. \else \fi Combined with the very rapid
convergence of CSC, we can thus view it as an approach that seeks rich subgroup optimization while providing 
strong basic demographic group performance ``for free''. Further, combining the two approaches (e.g. running the CSC approach then sequentially
introducing the basic groups) should only yield further improvement.

\ifdraft
\begin{figure}[h!]
        \center
        \subfloat{\includegraphics[width=.45\columnwidth,trim={0.5cm 0.5cm 0.5cm 0.5cm}]{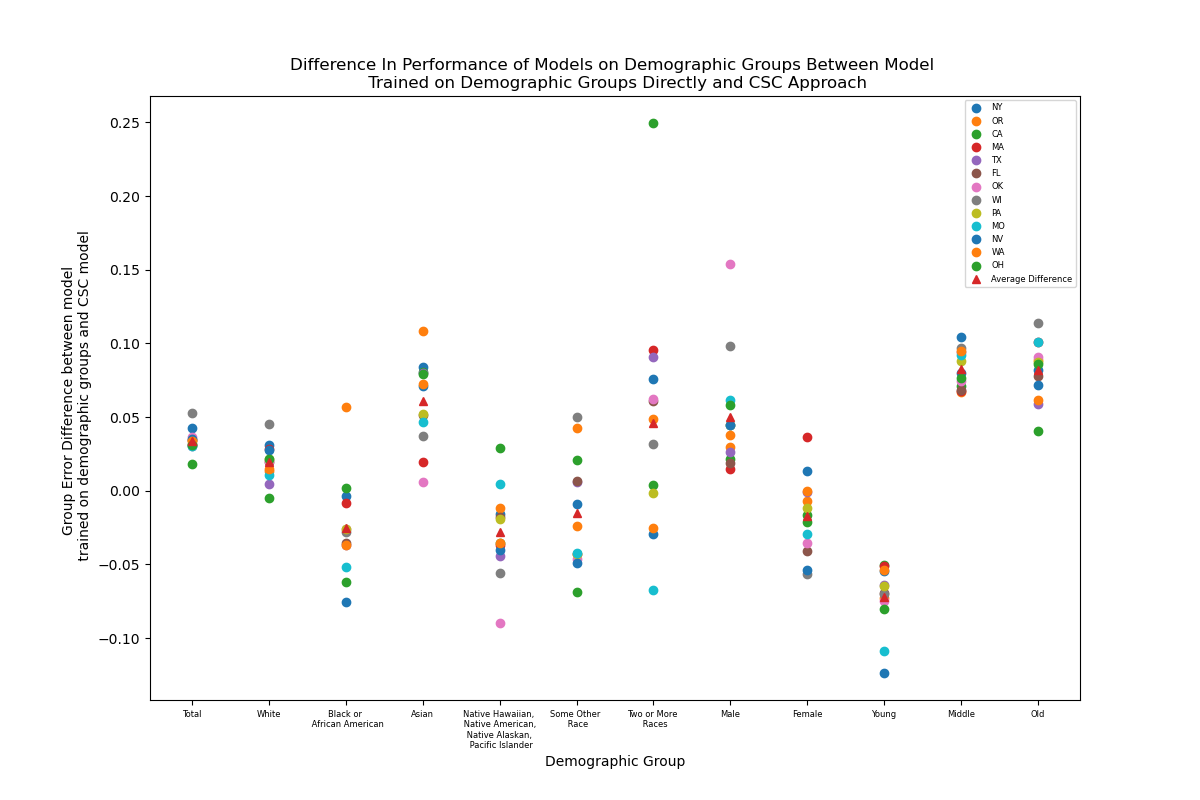}}
        \subfloat{\includegraphics[width=.45\columnwidth,trim={0.5cm 0.5cm 0.5cm 0.5cm}]{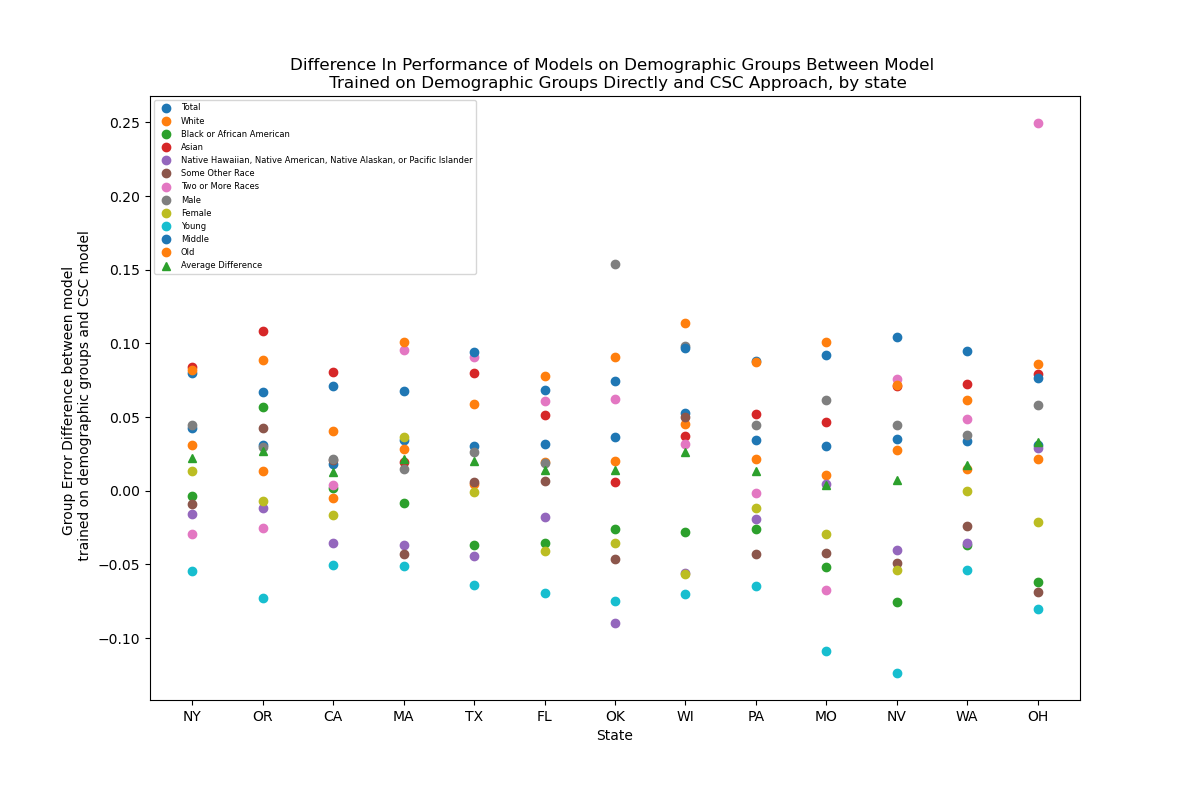}}
        \caption{Comparison of basic demographic subgroup test errors for sequential and CSC approaches. See text for details.}
    \label{fig:csc-errors}
\end{figure}
\else
\begin{figure*}[h!]
        \center
        \subfloat{\includegraphics[width=0.4\textwidth]{figures/error_diffs_groups.png}}
        \subfloat{\includegraphics[width=0.4\textwidth]{figures/state_demo_errs.png}}
        \caption{Comparison of basic demographic subgroup test errors for sequential and CSC approaches. See text for details.}
    \label{fig:csc-errors}
\end{figure*}
\fi 

\section{A Preliminary Deployment}
\label{sec:deployment}

We conclude by informally reporting findings from a preliminary deployment of our bias
bounty framework to a group of 83 students divided into 36 teams in an undergraduate class
on ethical algorithm design at the University of Pennsylvania in the Spring of 2022. In order
to expedite this deployment in a limited time window, we made a few departures from the formal framework. 
First, we did not implement the adaptive data analysis mechanisms to
prevent overfitting to the holdout set, but instead reported errors on the holdout set
regardless of whether a proposed update is accepted or not. Second, in order to avoid the
systems overhead of implementing a full client-server architecture and its potential
security mechanisms, each team was instead given a self-contained notebook implementing both
tools for finding $(g,h)$ pairs and the functionality for accepting updates and incorporating them
into a revised decision list. Each notebook contained access to one of two Folktables
datasets and classification tasks, which were divided roughly evenly between the teams. Students
were told there were absolutely no constraints on the approaches they could take to finding
improving $g,h$ pairs --- they could do so by human intuition, manual exploratory data analysis, or 
they could entirely
automate the process using ML or other techniques. They were generally
encouraged to find as many accepted updates as they could during the duration of the deployment.

Despite the very preliminary nature of this exercise, a number of informal and high-level
observations can nevertheless be made. First, despite wide variations in the quantitative
backgrounds of the students (the only prerequisite to the course being exposure to introductory
programming), virtually all of the most successful teams adopted hybrid approaches that involved some combination
of manual or data-driven discovery of groups along with more automated means of finding $g$, 
followed by training of $h$ on $g$. 

For instance, many groups automatically scanned through the feature space by column, or combinations of a small number of columns, to identify which values for each feature the current model performed poorly on, and then trained models on these subsets of the data. Note this has an advantage of interpretability over the groups generated by the algorithmic approaches developed in Section \ref{sec:restricted}, as the groups are always explicit functions of categorical values over a small number of features (e.g. ``Native American Women" or ``Young Men"). However, this method is also restricted to tabular data that has a nicely structured feature space: if instead students were tasked with bias hunting over a collection of images, given only the input pixel matrix, such a strategy would not be sufficient. 

Another notable feature of the exercise is that the purely algorithmic approaches appeared insufficient: groups who employed a combination of manual data analytics and algorithmic methods were able to bring errors down better than groups who employed purely algorithmic approaches similar to those in Section \ref{sec:restricted}. This emphasizes the fact that there is really an advantage to doing a true deployment of such a system within a community, as opposed to attempting to automatically identify all issues.

Some detail on the efforts of one of the most successful teams is provided in Table~\ref{tab:class}, where for each successive $(g,h)$
pair accepted, we show an English description of the group, and indication of whether it was discovered by (A)utomated
or (M)anual techniques, as well as the pre- and post-update group errors, the group weight, and the overall model error after each update.
A number of observations are in order. First, in general the earlier groups discovered are simpler and often manually
discovered, whereas the later ones tend to be more complex and automated. Indeed, the final group accepted is not 
even defined by features of the dataset at all,
but rather is trained on the errors of the overall model.
Second, echoing the increasing complexity and specificity of successive groups,
we see that there are generally diminishing marginal returns in both the group
weights and the reduction of overall model error. We might expect both increasing group complexity and diminishing returns
to be general features of bias bounty events conducted under our particular algorithmic framework.

\begin{table}
\centering
\resizebox{\columnwidth}{!}{%
\begin{tabular}{c l >{\centering\arraybackslash}m{.16\columnwidth} >{\centering\arraybackslash}m{.16\columnwidth} >{\centering\arraybackslash}m{.17\columnwidth} >{\centering\arraybackslash}m{.14\columnwidth} >{\centering\arraybackslash}m{.18\columnwidth}}
Round & Group Description & Group Error Pre Update & Group Error Post Update & Group Error Improvement & Group Weight & Overall Model Error \\
 &  &  &  &  &  & \\
0 &  &  &  &  & 1.0000 & 0.2796 \\
1 & male sex (M) & 0.2866 & 0.1734 & 0.1133 & 0.5238 & 0.2203 \\
2 & female sex (M) & 0.2719 & 0.1658 & 0.1061 & 0.4762 & 0.1698 \\
3 & ages 17 to 24 and non-white (M) & 0.0447 & 0.0414 & 0.0032 & 0.0526 & 0.1696 \\
4 & over the age of 30 working without pay in a family business (A) & 0.2258 & 0.1452 & 0.0806 & 0.0021 & 0.1694 \\
5 & self-employed in own not incorporated business, professional practice, or farm (A) & 0.2255 & 0.2205 & 0.0050 & 0.0884 & 0.1690 \\
6 & over the age of 62 (retired) (M) & 0.2322 & 0.2291 & 0.0031 & 0.1193 & 0.1686 \\
7 & First-line supervisors of office and administrative support workers (A) & 0.3073 & 0.2661 & 0.0413 & 0.0074 & 0.1683 \\
8 & real estate brokers and sales agents (A) & 0.3154 & 0.2905 & 0.0249 & 0.0082 & 0.1681 \\
9 & first-line supervisors of retail sales workers (A) & 0.2583 & 0.2494 & 0.0088 & 0.0154 & 0.1680 \\
10 & office clerks, general (A) & 0.2135 & 0.1842 & 0.0292 & 0.0117 & 0.1676 \\
11 & paint, coating, and adhesive manufacturing (A) & 0.1713 & 0.1657 & 0.0055 & 0.0185 & 0.1675 \\
12 & those born in California, Mexico, or Southeast Asia generally working as medical technicians (M) & 0.2126 & 0.2032 & 0.0094 & 0.0255 & 0.1671 \\
13 & accountants/Auditors who work 40 hour work weeks (A) & 0.2883 & 0.2793 & 0.0090 & 0.0076 & 0.1671 \\
14 & female sex secretaries and administrative assistants, except legal, medical, and executive (A) & 0.2586 & 0.2533 & 0.0053 & 0.0129 & 0.1670 \\
15 & machine learning classifier over errors on current model (A) & 0.5600 & 0.3200 & 0.2400 & 0.0009 & 0.1668 \\
\end{tabular}}
\caption{Details of the efforts of one particular team, see text for discussion.}
\label{tab:class}
\end{table}

\subsection*{Acknowledgements} We give warm thanks to Peter Hallinan, Pietro Perona and Alex Tolbert for many enlightening conversations at an early stage of this work,
and to Bri Cervantes and Declan Harrison for providing details of their class bias bounty efforts, and permission to include them here.

\newpage

\ifdraft
\bibliographystyle{plainnat}
\bibliography{refs}
\else
\bibliographystyle{ACM-Reference-Format}
\bibliography{refs}
\fi

\appendix

\onecolumn

\section{Desiderata for a Bias Bounty Program}
\label{sec:desiderata}

Here we lay out a number of desiderata for a bias bounty program that would be run continuously and at scale, and how our method addresses these needs.

\begin{enumerate}
    \item The program should be able to accept a very large number of submissions on an ongoing basis. As a result, there should be a category of bounties for which the submissions should be able to be evaluated automatically. Our method suggests the following category: submissions that identify a group of non-negligible size such on which our existing model has performance that is worse than optimal by some non-negligible amount. 
    \item Since the program allows open participation and judges submissions automatically, it should be prepared for adversarial submissions, and be able to distinguish between genuine problems discovered on the underlying population or distribution, from cherrypicked examples designed to overfit the competition dataset. Our framework suggests a natural solution to this problem: bounty hunters submit a model describing a group $g$ on which performance is sub-optimal, and a model $h$ which demonstrates the sub-optimality of our existing model on $g$. Since $g$ and $h$ are themselves models, they can be evaluated on a holdout set to prevent overfitting, and we can use techniques from adaptive data analysis to safely re-use the same holdout set across many submissions \citep{DFHPRR1,DFHPRR2,DFHPRR3}.
    
    \item For bounties it awards automatically, it should be able to correct the problem identified automatically. Our framework naturally does this, given that bounty hunters submit pairs $(g,h)$, which are the objects we need to correct our model. 
    
    \item The process should be convergent in the sense that there should be no way to produce a sequence of submissions that guarantees the submitter an unlimited amount of money. Said another way, if we award bounties for identifying legitimate problems that we then go on to ``fix'', our fixes should actually make progress in some quantifiable way. The convergence analysis of our framework establishes this. 
    \item The model that results from the automatic patches resulting from the submitted bounties should be ``simple'' in the sense that the patching operation should itself be computationally easy (since it may need to be repeated many times), and evaluating the final model should also be computationally easy. The ``updates'' in our framework are elementary, and produces a simple object (a decision list composed of the models submitted) that is easy to evaluate. 
    \item Awarding a bounty (and correcting the corresponding problem) should not decrease the performance for any other group: fixing one problem should not introduce others. The monotone improvement property of our method satisfies this.  
    \item We should impose as few burdens on the ``bounty hunters'' as possible. Here, it might seem that our framework falls short because it places two kinds of burdens on ``bounty hunters'': first, it requires that they submit models $g$ that identify the group on which the current model performs poorly, rather than just identifying subsets of examples on which the model performs poorly. Second, it requires that they submit models $h$ that demonstrate improvements on these groups --- and perhaps the burden of improving on a group $g$ should be left to the organization deploying the model in the first place! But we note that these requirements are solving real problems that would otherwise arise:
    \begin{enumerate}
    \item As noted above, if bounty hunters simply submitted examples on which the current model performed poorly, rather than a model $g$ identifying such examples, then we would be unable to distinguish cherrypicked examples from real distributional problems. 
    \item If we did not require that bounty hunters submit a model $h$ that demonstrates improved performance on the group $g$ they submitted, then we would be subject to the following difficulty: it might be, for example, that the group $g$ that the bounty hunter identifies (correctly) as having high error corresponds to a group like ``blurry images'' on which improved performance is impossible. Problems of this sort cannot be fixed, and do not correspond to deviations from Bayes optimality. Requiring bounty hunters to submit a model $h$ that demonstrates improved performance disambiguates ``blurry images'' problems from real deviations from Bayes optimality.
\end{enumerate}
Of course, in a real system, we should strive to reduce the burden on the bounty hunter as much as possible. So, for example, we might provide an interface that allows the bounty hunter to identify a collection of examples on which the current model performs poorly, and then attempts to automatically train a model $g$ to capture examples like those identified by the bounty hunter. Similarly, given a group $g$, we could provide an interface that automatically attempts to use standard methods to train a good model $h$ on examples from $g$. Interfaces like this could make the bounty program accessible to a wider audience, without requiring any machine learning expertise.  
\end{enumerate}

\ifdraft
\else
\section{Proofs from Section \ref{sec:certificates}}
\BOQ*
\begin{proof}
  We need to prove two directions. First, we will assume that $f$ is $(\epsilon,\cC)$-Bayes optimal, and show that in this case there do not exist any pairs $(g,h) \in \cC$ such that $(g,h)$ form a $(\mu,\Delta)$-certificate of sub-optimality with $\mu\cdot \Delta > \epsilon$.  Fix a pair $(g,h) \in \cC$. Without loss of generality, we can take $\Delta = L(\cD,f,g) - L(\cD,h,g)$ (and if $\Delta \leq 0$ we are done, so we can also assume that $\Delta > 0$). Since $f$ is $(\epsilon,\cC)$-Bayes optimal, by definition we have that: 
$$\Delta =L(\cD,f,g) - L(\cD,h,g) \leq \frac{\epsilon}{\mu_g(\cD)}$$
Solving, we get that $\Delta\cdot \mu_g \leq \epsilon$ as desired.

Next, we prove the other direction: We assume that there exists a pair $(g,h) \in \cC$ that form an $(\mu, \Delta)$-certificate of sub-optimality, and show that $f$ is not $(\epsilon, \cC)$-Bayes optimal for any $\epsilon < \mu\cdot \Delta$. Without loss of generality we can take $\mu = \mu_g(\cD)$ and conclude:
 $$L(\cD,f,g) - L(\cD,h,g) \geq \Delta = \frac{\mu\cdot \Delta}{\mu_g(\cD)} \geq \frac{\epsilon}{\mu_g(\cD)}$$
which falsifies $(\epsilon, \cC)$-Bayes optimality for any $\epsilon < \mu\cdot \Delta$ as desired.
\end{proof}

\progress*
\begin{proof}
It is immediate from the definition of $f_{t+1}$ that $L(\cD,f_{t+1},g_{t+1}) = L(\cD, h_{t+1},g_{t+1})$, since for any $x$ such that  $g_{t+1}(x) = 1$, $f_{t+1}(x) = h_{t+1}(x)$. It remains to verify the 2nd condition. %Define $\hat g_t(x) = 1-g_t(x)$ to be the complement of group $g$. 
Because we also have that for every $x$ such that $g_{t+1}(x) = 0$, $f_{t+1}(x) = f_t(x)$, we can calculate:
\begin{eqnarray*}
L(\cD,f_{t+1}) &=& \Pr_{\cD}[g_{t+1}(x) = 0]\cdot \E_{\cD}[\ell(f_{t+1}(x), y) | g_{t+1}(x) = 0] + \Pr_{\cD}[g_{t+1}(x) = 1]\cdot \E_{\cD}[\ell(f_{t+1}(x), y) | g_{t+1}(x) = 1] \\
&=&  \Pr_{\cD}[g_{t+1}(x) = 0]\cdot \E_{\cD}[\ell(f_{t}(x), y) | g_{t+1}(x) = 0] + \Pr_{\cD}[g_{t+1}(x) = 1]\cdot \E_{\cD}[\ell(h_{t+1}(x), y) | g_{t+1}(x) = 1]\\
&\leq& \Pr_{\cD}[g_{t+1}(x) = 0]\cdot \E_{\cD}[\ell(f_{t}(x), y) | g_{t+1}(x) = 0] + \Pr_{\cD}[g_{t+1}(x) = 1]
\left(\E_{\cD}[\ell(f_{t}(x), y) | g_{t+1}(x) = 1] - \Delta \right)\\
&\leq& L(\cD, f_t) - \mu\Delta
\end{eqnarray*}
\end{proof}

\updatebound*
\begin{proof}
By assumption $L(\cD,f_0) = \ell_0$. Because each $(g_i,h_{i})$ is a $(\mu,\Delta)$-certificate of suboptimality of $f_{i-1}$ with $\mu\cdot \Delta \geq \epsilon$, we know from Theorem \ref{thm:progress} that for each $i$, $L(\cD,f_i) \leq L(\cD,f_{i-1}) - \epsilon$. Hence $L(\cD,f_T) \leq \ell_0 - T\epsilon$. But loss is non-negative: $L(\cD,f_T) \geq 0$. Thus it must be that $T \leq \frac{\ell_0}{\epsilon}$ as desired.
\end{proof}

\section{Material from Section \ref{sec:reusable}}
\certchecker*
\begin{proof}
We first consider any \emph{fixed} triple of functions $f_i:\cX\rightarrow \cY, h_i:\cX\rightarrow,\cY, g_i:\cX\rightarrow \{0,1\}$. Observe that we can write:
\begin{eqnarray*}
\mu_i\cdot \Delta_i &=& \mu_D(g_i)\cdot \left(
 L(D,f_i,g_i) - L(D,h_i,g_i) \right) \\
 &=& \frac{\sum_{(x,y)\in D}\mathbbm{1}[g_i(x) = 1]}{n} \cdot \frac{\sum_{(x,y) \in D} \mathbbm{1}[g_i(x) = 1]\cdot \left(\ell(f_i(x),y) - \ell(h_i(x),y)\right)}{\sum_{(x,y) \in D} \mathbbm{1}[g_i(x) = 1]} \\
 &=& \frac{1}{n} \sum_{(x,y) \in D} \mathbbm{1}[g_i(x) = 1]\cdot \left(\ell(f_i(x),y) - \ell(h_i(x),y)\right)
\end{eqnarray*}
Since each $(x,y) \in D$ is drawn independently from $\cD$, each term in the sum $\mathbbm{1}[g_i(x) = 1]\cdot \left(\ell(f_i(x),y) - \ell(h_i(x),y)\right)$ is an independent random variable taking value in the range $[-1,1]$. Thus $\mu_i\cdot \Delta_i$ is the average of $n$ independent bounded random variables and we can apply a Chernoff bound to conclude that for any value of $\delta' > 0$: 
$$\Pr_{D \sim \cD^n}\left[\left| \mu_i\cdot \Delta_i -\mu_\cD(g_i)\cdot \left(
 L(\cD,f_i,g_i) - L(\cD,h_i,g_i)\right) \right|  \geq \sqrt{\frac{2\ln(2/\delta')}{n}}\right] \leq \delta'$$
 Solving for $n$ we have that with probability $1-\delta'$, we have $\left| \mu_i\cdot \Delta_i -\mu_\cD(g_i)\cdot \left(
 L(\cD,f_i,g_i) - L(\cD,h_i,g_i)\right) \right| \leq \frac{\epsilon}{4}$ if:
 $$n \geq \frac{32 \ln(2/\delta')}{\epsilon^2}$$
 This analysis was for a fixed triple of functions $(f_i,h_i,g_i)$, but these triples  can be chosen arbitrarily as a function of the transcript $\pi$. We therefore need to count how many transcripts $\pi$ might arise. By construction, $\pi$ has length at most $U$ and has at most $2/\epsilon$ indices such that $\pi_i = \top$. Thus the number of transcripts $\pi$ that can arise is at most: ${U \choose \frac{2}{\epsilon}}\cdot 2^{2/\epsilon} \leq U^{2/\epsilon}$, and each transcript results in some sequence of $U$ triples $(f_i,h_i,g_i)$. Thus for any mechanism for generating triples from transcript prefixes, there are at most $U^{2/\epsilon + 1}$ triples that can ever arise. We can complete the proof by union bounding over this set. Taking $\delta' = \frac{\delta}{U^{2/\epsilon + 1}}$ and plugging into our Chernoff bound above, we obtain that with probability $1-\delta$ over the choice of $D$, for any method of generating a sequence of $U$ triples $\{(f_i,h_i,g_i)\}_{i=1}^U$ from transcripts $\pi$, we have that:
 $\max_i\left| \mu_i\cdot \Delta_i -\mu_\cD(g_i)\cdot \left(
 L(\cD,f_i,g_i) - L(\cD,h_i,g_i)\right) \right| \leq \frac{\epsilon}{4}$ so long as:
  $$n \geq \frac{32(\frac{2}{\epsilon} + 1) \ln(2U/\delta')}{\epsilon^2} \geq \frac{65 \ln(2U/\delta')}{\epsilon^3}$$
  
 Finally, note that whenever this event obtains, the conclusions of the theorem hold, because we have that $\pi_i = \top$ exactly when $\mu_i\cdot \Delta_i \geq \frac{3\epsilon}{4}$. In this case, $\mu_\cD(g_i)\cdot \left(
 L(\cD,f_i,g_i) - L(\cD,h_i,g_i)\right) \geq \frac{3\epsilon}{4} - \frac{\epsilon}{4} = \frac{\epsilon}{2}$ as desired. Similarly, whenever $\pi_i = \bot$, we have that $\mu_\cD(g_i)\cdot \left(
 L(\cD,f_i,g_i) - L(\cD,h_i,g_i)\right) \leq \frac{3\epsilon}{4} + \frac{\epsilon}{4} = \epsilon$ as desired.
\end{proof}

\bountySC*
\begin{proof}
  This theorem follows straightforwardly from the properties of Algorithm \ref{alg:basic} and Algorithm \ref{alg:checker}. From Theorem \ref{thm:adaptive}, we have that with probability $1-\delta$, every submission accepted by CertificateChecker (and hence by FalsifyandUpdate) is a $(\mu,\Delta)$-certificate of sub-optimality for $f_t$ with $\mu\cdot \Delta \geq \epsilon/2$ and every submission rejected is not a $(\mu,\Delta)$-certificate of sub-optimality for any $\mu, \Delta$ with $\mu\cdot \Delta \geq \epsilon$. 
  
  Whenever this event obtains, then for every call that FalsifyAndUpdate makes to $\textrm{ListUpdate}(f_{t-1},(g_i,h_i))$ is such that $(g_i,h_i)$ is a $(\mu,\Delta)$-certificate of sub-optimality for $f_{t-1}$ for $\mu\cdot\Delta \geq \epsilon/2$. Therefore by Theorem \ref{thm:progress}, we have that $L(\cD,f_{t+1}, g_i) = L(\cD, h_i, g_i)$ and $L(\cD,f_{t+1}) \leq L(\cD,f_t) - \frac{\epsilon}{2}$. Finally, by Theorem \ref{thm:updatebound}, if each invocation of  the iteration $f_t = \textrm{ListUpdate}(f_{t-1},(g_i,h_i))$ is such that $(g_i,h_i)$ is a $(\mu,\Delta)$-certificate of sub-optimality for $f_{t-1}$ with $\mu\cdot \Delta \geq \epsilon/2$, then there can be at most $2/\epsilon$ such invocations. Since FalsifyAndUpdate makes one such invocation for every submission that is accepted, this means there can be at most $2/\epsilon$ submissions accepted in total. But CertificateChecker has only two halting conditions: it halts when either more than $2/\epsilon$ submissions are accepted, or when $U$ submissions have been made in total. Because with probability at least $1-\delta$ no more than $2/\epsilon$ submissions are accepted, it must be that with probability $1-\delta$, FalsifyAndUpdate does not halt until all $U$ submissions have been received. 
\end{proof}

\bountymonotone*
\begin{proof}
The proof that MonotoneFalsifyAndUpdate satisfies the first two conclusions of Theorem \ref{thm:bounty-SC}:
\begin{enumerate}
  \item If $(g_i,h_i)$ is rejected, then $(g_i,h_i)$ is not a $(\mu,\Delta)$-certificate of sub-optimality for $f_t$, where $f_t$ is the current model at the time of submission $i$, for any $\mu, \Delta$ such that $\mu\cdot \Delta \geq \epsilon$. 
    \item If $(g_i,h_i)$ is accepted, then $(g_i,h_i)$ is a $(\mu,\Delta)$-certificate of sub-optimality for $f_t$, where $f_t$ is the current model at the time of submission $i$, for some $\mu, \Delta$ such that $\mu\cdot \Delta \geq \frac{\epsilon}{2}$. Moreover, the new model $f_{t+1}$ output satisfies  $L(\cD,f_{t+1}) \leq L(\cD,f_t) - \frac{\epsilon}{2}$.
\end{enumerate}
are identical and we do not repeat them here. We must show that with probability $1-\delta$, CertificateChecker (and hence MonotoneFalsifyAndUpdate) does not halt before processing all $U$ submissions. Note that MonotoneFalsifyAndUpdate initializes an instance of CertificateChecker that will not halt before receiving $U + \frac{8}{\epsilon^3}$ many submissions. Thus it remains to verify that our algorithm does not produce more than $8/\epsilon^3$ many submissions to CertificateChecker in its monotonicity update process. But this will be the case, because by Theorem \ref{thm:updatebound}, $t \leq \frac{2}{\epsilon}$, and so after each call to ListUpdate, we generate at most $4/\epsilon^2$ many submissions to CertificateChecker. Since there can be at most $2/\epsilon$ such calls to ListUpdate, the claim follows. 

To see that the monotonicity property holds, assume for sake of contradiction that it does not --- i.e. that there is a model $f_t$, a group $g_j \in \cG(f_t)$, and a model $f_\ell$ with $\ell < t$ such that:
$$L(\cD,f_t,g_j) >  L(\cD, f_{\ell}, g_j) + \frac{\epsilon}{\mu_\cD(g_j)}$$
In this case, the pair $(g_j,f_{\ell})$ would form a $(\mu,\Delta)$-certificate of sub-optimality for $f_t$ with $\mu\cdot \Delta \geq \epsilon$. But if $L(\cD,f_t,g_j) >  L(\cD, f_{\ell}, g_j) + \frac{\epsilon}{\mu_\cD(g_j)}$, then this certificate must have been rejected, which we have already established is an event that occurs with probability at most $\delta$.
\end{proof}

\subsection{Certificates of Bounded Complexity and Algorithmic Optimization}
\label{app:algopt}
In this section we show how to use our ListUpdate method as part of an algorithm for explicitly computing $(\epsilon,\cC)$-Bayes optimal models from data sampled i.i.d. from the underlying distribution $\cD$. First, we must show that if we find certificates of sub-optimality $(g,h) \in C$ on our dataset, that we can be assured that they are certificates of sub-optimality on the underlying distribution. Here, we invoke uniform convergence bounds that depend on $\cC$ being a class of bounded complexity. Next, we must describe an algorithmic method for finding $(\mu,\Delta)$ certificates of sub-optimality that maximize $\mu\cdot \Delta$. Here we give two approaches. The first approach is a reduction to cost sensitive (ternary) classification: the result of the reduction is that the ability to solve weighted multi-class classification problems over some class of models gives us the ability to find certificates of sub-optimality over a related class whenever they exist. The second approach takes an ``EM'' style alternating maximization approach over $g_i \in \cG$ and $h_i \in \cH$ in turn, where each alternating maximization step can be reduced to a binary classification problem. It is only guaranteed to converge to a local optimum (or saddlepoint) of its objective --- i.e. to find a certificate of sub-optimality $(g_i,h_i)$ that cannot be improved by changing either $g_i$ or $h_i$ unilaterally --- but has the merit that it requires only standard binary classification algorithms for a class $\cG$ and $\cH$ to search for certificates of sub-optimality in $\cG\times \cH$. For simplicity of exposition, in this section we restrict attention to the binary classification problem, where the labels are binary $(\cY = \{0,1\}$) and our loss function corresponds to classification error $(\ell(\hat y, y) = \mathbbm{1}[\hat y \neq y])$ --- but the approach readily extends to more general label sets (replacing VC-dimension with the appropriate notion of combinatorial dimension as necessary).

The algorithmic problem we need to solve at each round $t$ is, given an existing model $f_{t-1}$, find a $(\mu,\Delta)$-certificate of optimality $(g,h) \in \cC$ for $f_{t-1}$ that maximizes $\mu\cdot \Delta$ as computed on the empirical data $D$ --- i.e. to solve:
\begin{equation}
\label{eq:optproblem}
(g_t,h_t) =\arg\max_{(g,h) \in \cC }\mu_{D}(g)\cdot (L(D,f_{t-1},g) - L(D,h,g))
\end{equation}

We defer for now the algorithmic problem of finding these certificates, and describe a generic algorithm that can be invoked with any method for finding such certificates. First, we state a useful sample complexity bound proven in \citep{gerrymandering} in a related context of multi-group fairness --- we here state the adaptation to our setting.
\begin{lemma}[Adapted from \citep{gerrymandering}]
\label{lem:samplecomplexity}
Let $\cG$ denote a class of group indicator functions with VC-dimension $d_G$ and $\cH$ denote a class of binary models with VC-dimension $d_H$.  Let $f$ be an arbitrary binary model. Then if:
 $$n \geq \tilde O\left(\frac{(d_H + d_G) + \log(1/\delta)}{\eta^2} \right)$$
 we have that with probability $1-\delta$ over the draw of a dataset $D \sim \cD^n$, for every $g \in \cG$ and $h \in \cH$:
 $$\left|\mu_{\cD}(g)\cdot (L(\cD,f,g) - L(\cD,h,g)) - \mu_{D}(g_p)\cdot (L(D,f,g) - L(D,h,g)) \right| \leq \eta$$
\end{lemma}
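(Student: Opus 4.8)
The plan is to recognize the bracketed quantity as the deviation between the population and empirical mean of a single bounded random variable indexed by the pair $(g,h)$, and then to invoke a standard VC-style uniform convergence bound whose growth function is controlled jointly by $d_G$ and $d_H$. First I would rewrite, using the identity $\mu_\cD(g)\cdot L(\cD,f,g) = \E_{(x,y)\sim\cD}[\mathbbm{1}[g(x)=1]\cdot \ell(f(x),y)]$ (and likewise for $h$, and for the empirical measure), so that
\[
\mu_\cD(g)\cdot\big(L(\cD,f,g)-L(\cD,h,g)\big) = \E_{(x,y)\sim\cD}\big[\phi_{g,h}(x,y)\big],
\]
where $\phi_{g,h}(x,y) = \mathbbm{1}[g(x)=1]\cdot\big(\ell(f(x),y)-\ell(h(x),y)\big)$, and the empirical analogue $\mu_D(g)\cdot(L(D,f,g)-L(D,h,g))$ is exactly the empirical average $\frac{1}{n}\sum_{(x,y)\in D}\phi_{g,h}(x,y)$. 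Since in the binary setting $\ell(\hat y,y)=\mathbbm{1}[\hat y\neq y]$, each $\phi_{g,h}$ takes values in $\{-1,0,1\}$, so every summand is bounded and the claim is precisely the uniform convergence statement $\sup_{g\in\cG,\,h\in\cH}\big|\E_\cD[\phi_{g,h}] - \frac{1}{n}\sum_D \phi_{g,h}\big| \leq \eta$.

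Second, I would bound the complexity of the induced function class $\Phi = \{\phi_{g,h} : g\in\cG,\, h\in\cH\}$. The key point, which exploits that $f$ is fixed, is that on any sample of $n$ points the value vector $(\phi_{g,h}(x_i,y_i))_{i=1}^n$ is a fixed deterministic function of the two pattern vectors $(g(x_i))_i$ and $(h(x_i))_i$ together with the (fixed) data $(f(x_i),y_i)_i$. Hence the number of distinct restrictions of $\Phi$ to the sample is at most the product of the growth functions of $\cG$ and $\cH$, which by the Sauer--Shelah lemma is at most $O(n^{d_G})\cdot O(n^{d_H}) = O(n^{d_G+d_H})$. In other words, combining the group indicator with the comparison model costs only \emph{additively} in the effective dimension.

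Finally, I would feed this growth-function bound into the standard symmetrization / VC uniform convergence theorem for $[-1,1]$-valued classes (or, equivalently, union bound over the $O(n^{d_G+d_H})$ effective value-vectors and apply a Hoeffding bound to each). This yields $\sup_{g,h}|\cdots| \leq \tilde O\big(\sqrt{((d_G+d_H)+\log(1/\delta))/n}\big)$ with probability $1-\delta$, and solving this for $n$ gives the stated sample complexity $n\geq\tilde O\big(((d_H+d_G)+\log(1/\delta))/\eta^2\big)$. The main obstacle is really the reformulation in the first step: the raw object $L(\cD,f,g)$ is an expectation \emph{conditioned} on $\{g(x)=1\}$, an event that may have arbitrarily small probability, so a naive uniform bound over conditional losses would degrade like $1/\mu_g$. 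Multiplying through by $\mu_\cD(g)$ turns the target into an unconditional expectation of a bounded function, and it is exactly this cancellation that makes the additive $d_G+d_H$ complexity (rather than a $1/\mu$-dependent quantity) the correct thing to control.
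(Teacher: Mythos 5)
Your argument is correct, but there is a wrinkle in the comparison: the paper never proves Lemma \ref{lem:samplecomplexity} at all --- it is imported wholesale (``adapted from'' \citep{gerrymandering}), so there is no in-paper proof to diverge from, and your write-up is in effect supplying the missing argument. What you give is the standard VC route, and it is sound: the reformulation $\mu_\cD(g)\cdot(L(\cD,f,g)-L(\cD,h,g)) = \E_{(x,y)\sim\cD}[\phi_{g,h}(x,y)]$ with $\phi_{g,h}(x,y)=\mathbbm{1}[g(x)=1]\cdot(\ell(f(x),y)-\ell(h(x),y))$ is exactly the identity the paper itself exploits in proving Theorem \ref{thm:adaptive}, where the same product quantity is written as an average of $[-1,1]$-valued terms; the difference is that there the supremum is over finitely many adaptively chosen triples (handled by Chernoff plus a union bound over transcripts), whereas here it is over the infinite class $\cG\times\cH$, which is precisely where your Sauer--Shelah product bound $O(n^{d_G})\cdot O(n^{d_H})$ and symmetrization come in. Your closing observation --- that multiplying by $\mu_\cD(g)$ is what converts a conditional loss (which cannot be uniformly controlled, since $\mu_g$ can be arbitrarily small) into an unconditional bounded expectation, and that this is why the $\epsilon/\mu_g(\cD)$ scaling appears in the definition of $(\epsilon,\cC)$-Bayes optimality --- is exactly the right way to understand the lemma. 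One small point to tighten: your parenthetical shortcut of ``union bound over the $O(n^{d_G+d_H})$ effective value-vectors and apply Hoeffding to each'' is not literally valid on its own, since the set of realized value vectors is a random object determined by the very sample you are trying to control; the ghost-sample/symmetrization step (which you do invoke as the primary route) is what licenses conditioning on the double sample and then union bounding over its restrictions.
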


With our sample complexity lemma in hand we are ready to present our generic reduction from training an $(\epsilon, \cC)$-Bayes optimal model to the optimization problem over $\cC$ given in \eqref{eq:optproblem}.
It is a reduction from the problem of training an $(\epsilon,\cC)$-Bayes optimal model to the optimization problem over $\cC$ given in \eqref{eq:optproblem}.

\begin{algorithm}
\KwInput{A dataset $D$, a set of group/model pairs $\cC$, and accuracy parameter $\epsilon$.}
Let $f_0:\cX\rightarrow \{0,1\}$ be an arbitrary initial model and $T = 2/\epsilon$ \\

Randomly divide $D$ into $T$ equally sized datasets: $D_1,\ldots,D_T$.

\For{$t = 1$ to $T$}{
Let
$$(g_t,h_t) =\arg\max_{(g,h) \in \cC }\mu_{D}(g)\cdot (L(D,f_{t-1},g) - L(D,h,g))$$

\If{$\mu_{D}(g_t)\cdot (L(D,f_{t-1},g_t) - L(D,h_t,g_t)) \leq \frac{3\epsilon}{4}$}{
\KwOut{Model $f_{t-1}$}
}
\Else{
Let $f_t = \textrm{ListUpdate}(f_{t-1},(g_t,h_t))$
}
}
\KwOut{Model $f_T$}

\caption{TrainByOpt($D,\cC,\epsilon)$: An algorithm for training an $(\epsilon,\cC)$-Bayes optimal model given the ability to optimize over certificates $(g,h) \in \cC$.}
\label{alg:TrainByOpt}
\end{algorithm}

\begin{restatable}{theorem}{samplecomplexity}
Fix an arbitrary distribution $\cD$ over $\cX\times \cY$, a class of group indicator functions $\cG$ of VC-dimension $d_G$ and a class of binary models $\cH$ of VC-dimension $d_H$. Let $\cC \subseteq \cG \times \cH$ and $\epsilon > 0$ be arbitrary.  If:
$$n \geq \tilde O\left(\frac{(d_H + d_G) + \log(1/\delta)}{\epsilon^3} \right)$$
and $D \sim \cD^n$,
then with probability $1-\delta$,  TrainByOpt$(D,\cC,\epsilon)$ (Algorithm \ref{alg:TrainByOpt}) returns a model $f$ that is $(\epsilon,\cC)$-Bayes optimal, using at most $2/\epsilon$ calls to a sub-routine for solving the optimization problem over $(g,h) \in \cC$ given in \eqref{eq:optproblem}.
\end{restatable}

\begin{proof}
  Each of the partitioned datasets $D_t$ has size at least $\tilde O\left(\frac{(d_H + d_G) + \log(1/\delta)}{\epsilon^2} \right)$ and is selected independently of $f_{t-1}$ and so we can invoke Lemma \ref{lem:samplecomplexity} with $\eta = \epsilon/8$ and $\delta = \delta/T$ to conclude that with probability $1-\delta$, for every round $t$:
  $$\left|\mu_{D}(g_t)\cdot (L(D,f_{t-1},g_t) - L(D,h_{t},g_t)) -\mu_{\cD}(g_t)\cdot (L(\cD,f_{t-1},g_t) - L(\cD,h_{t},g_t))\right| \leq \frac{\epsilon}{8}$$
  For the rest of the argument we will assume this condition obtains.
 By assumption, at every round $t$ the models $(g_t,h_t)$ exactly maximize $\mu_{D}(g_t)\cdot (L(D,f_{t-1},g_t) - L(D,h_{t},g_t))$ amongst all models $(g,h) \in \cC$, and so in combination with the above uniform convergence bound, they are $\epsilon/4$-approximate maximizers of $\mu_{\cD}(g_t)\cdot (L(\cD,f_{t-1},g_t) - L(\cD,h_{t},g_t))$. Therefore, if at any round $t \leq T$ the algorithm outputs a model $f_{t-1}$, it must be because:
 $$\max_{(g,h) \in \cC}\mu_{\cD}(g)\cdot (L(\cD,f_{t-1},g) - L(\cD,h,g)) \leq \frac{3\epsilon}{4} + \frac{\epsilon}{4} = \epsilon$$
 Equivalently, for every $(g,h) \in \cC$, it must be that:
 $$L(\cD,f_{t-1},g) \leq L(\cD,h,g) + \frac{\epsilon}{\mu_{\cD}(g)}$$
 By definition, such a model is $(\epsilon,\cC)$-Bayes optimal.

Similarly, if at any round $t < T$ we do not output $f_{t-1}$, then it must be that $(g_t,h_t)$ forms a $(\mu,\Delta)$-certificate of sub-optimality for $\mu = \mu_{\cD}(g_t)$ and $\Delta = L(\cD,f_{t-1},g) - L(\cD,h,g)$ such that $\mu\cdot \Delta \geq \frac{3\epsilon}{4} - \frac{\epsilon}{4} = \frac{\epsilon}{2}$. By Theorem \ref{thm:updatebound} we have that for any sequence of models $f_0, f_1, \ldots, f_T$ such that $f_t = \textrm{ListUpdate}(f_{t-1},(g_t,h_t))$ and $(g_t,h_t)$ forms a $(\mu,\Delta)$-certificate of sub-optimalty for $f_{t-1}$ with $\mu\cdot \Delta \geq \frac{\epsilon}{2}$, it must be that $T \leq \frac{2}{\epsilon}$. Therefore, it must be that if our algorithm outputs model $f_T$, then $f_T$ must be $\frac{\epsilon}{2}$-Bayes optimal. If this were not the case, then by Theorem \ref{thm:BOQ}, there would exist a $(\mu,\Delta)$-certificate of sub-optimality $(g^*,h^*)$ for $f_T$ with $\mu\cdot \Delta \geq \frac{\epsilon}{2}$, which we could use to extend the sequence by setting $f_{T+1} = \textrm{ListUpdate}(f_T,(g^*,h^*))$ --- but that would contradict Theorem \ref{thm:updatebound}. Since $\epsilon/2$-Bayes optimality is strictly stronger than $(\epsilon,\cC)$-Bayes optimality, this completes the proof.
\end{proof}

Algorithm \ref{alg:TrainByOpt} is an efficient algorithm for training an $(\epsilon, \cC)$-Bayes optimal model whenever we can solve optimization problem \eqref{eq:optproblem} efficiently. We now turn to this optimization problem. In Section \ref{sec:ternary} we show that the ability to solve cost sensitive classification problems over a \emph{ternary} class $K$ gives us the ability to solve optimization problem \ref{eq:optproblem} over a related class $\cC_K$. In Section \ref{sec:EM} we show that the ability to solve standard empirical risk minimization problems over classes $\cG$ and $\cH$ respectively give us the ability to run iterative updates of an alternating-maximization (``EM style'') approach to finding certificates $(g,h) \in \cG \times \cH$.

\subsubsection{Finding Certificates via a Reduction to Cost-Sensitive Ternary Classification}
\label{sec:ternary}

For this approach, we start with an arbitrary class $K$ of \emph{ternary} valued functions $p:\cX\rightarrow \{0,1,?\}$. It will be instructive to think of the label ``$?$'' as representing the decision to ``defer'' on an example, leaving the classification outcome to another model. We will identify such a ternary-valued function $p$ with a pair of binary valued functions $g_p:\cX\rightarrow \{0,1\}, h_p:\cX\rightarrow \{0,1\}$ representing a group indicator function and a binary model $h$ that might form a certificate of sub-optimality $(g_p,h_p)$. They are defined as follows:
\begin{definition}
Given a ternary valued function $p:\cX\rightarrow \{0,1,?\}$, the $p$-derived group $g_p$ and model $h_p$ are defined as:

$$g_p(x) = \begin{cases}
      1 & \text{if } p(x) \in \{0,1\} \\
      0 & \text{if } p(x) = ?
  \end{cases} \ \ \ \ \ \ \  \ h_p(x) = \begin{cases}
      p(x) & \text{if } p(x) \in \{0,1\} \\
      0 & \text{if } p(x) = ?
  \end{cases}$$

In other words, interpreting $p(x) = ?$ as the decision for $p$ to ``defer'' on $x$, $g_p$ defines exactly the group of examples that $p$ does \emph{not} defer on, and $h_p$ is the model that makes the same prediction as $p$ on every example that $p$ does not defer on. Given a class of ternary functions $K$, let the $K$-derived certificates $\cC_K$ denote the set of pairs $(g_p,h_p)$ that can be so derived from some $p \in K$: $\cC_K = \{(g_p,h_p) : p \in K\}$. Similarly let $\cG_K = \{g_p : p \in K\}$ and $\cH_K = \{h_p : p \in K\}$ denote the class of group indicator functions and models derived from $K$ respectively.
\end{definition}

Given a model $f:\cX\rightarrow \{0,1\}$, our goal is to reduce the problem of solving optimization problem \eqref{eq:optproblem} over $\cC_K$ to the problem of solving a ternary \emph{cost-sensitive classification problem} over $K$:
\begin{definition}
A cost-sensitive classification problem is defined by a model class $K$ consisting of functions $p:\cX\rightarrow \cY$, where $\cY$ is some finite label set,  a distribution $\cD$ over $\cX\times \cY$, and a set of real valued costs $c_{(x,y)}(\hat y)$ for each pair $(x, y)$ in the support of $\cD$ and each label $\hat y \in \cY$. A solution $p^* \in K$ to the cost-sensitive classification problem $(\cD, K, \{c_{(x,y)}\})$ is given by:
$$p^* \in \arg\min_{p \in K} \E_{(x,y) \sim \cD}\left[ c_{(x,y)}(p(x))\right]$$
i.e. the model that minimizes the expected costs for the labels it assigns to points drawn from $\cD$.
\end{definition}

The reduction will make use of the following induced costs:
\begin{definition}
Given a binary model $f:\cX\rightarrow \cY$, the induced costs of $f$  are defined as follows:
$$c^f_{(x,y)}(\hat y) = \begin{cases}
      0 & \text{if } \hat y = ? \\
      1 & \text{if } f(x) = y \neq \hat y \\
      -1 & \text{if } \hat y = y \neq f(x) \\
      0 & \text{otherwise.}
  \end{cases}$$
Intuitively, it costs nothing to defer a decision to the existing model $f$ --- or equivalently, to make the same decision as $f$. On the other hand, making the \emph{wrong} decision on an example $x$ costs $1$ when $f$ would have made the right decision, and making the right decision ``earns'' $1$ when $f$ would not have.
\end{definition}

\begin{restatable}{theorem}{lemCSC}
\label{lem:CSC}
Fix an arbitrary distribution $\cD$ over $\cX\times \cY$, let $K$ be a class of ternary valued functions, and let $f:\cX\rightarrow \{0,1\}$ be any binary valued model. Let $p^*$ be a solution to the cost-sensitive classification problem $(\cD, K, \{c_{(x,y)}^f\})$, where $c_{(x,y)}^f$ are the induced costs of $f$. We have that:
$$(g_{p^*},h_{p^*}) \in \arg\max_{(g,h) \in \cC_K }\mu_{\cD}(g)\cdot (L(\cD,f,g) - L(\cD,h,g))$$
In other words, when $\cD$ is the empirical distribution over $D$, $(g_{p^*},h_{p^*})$ form a solution to optimization problem \eqref{eq:optproblem}.
\end{restatable}
\begin{proof}
 For any model $p \in K$, we can calculate its expected cost under the induced costs of $f$:
 \begin{eqnarray*}
  \E_{(x,y) \sim \cD}\left[ c_{(x,y)}(p(x))\right] &=& \E_{(x,y) \sim \cD}\left[\mathbbm{1}[p(x) \neq ?]\cdot \mathbbm{1}[p(x) \neq f(x)]\cdot (\mathbbm{1}[p(x) \neq y]-\mathbbm{1}[p(x) = y]) \right] \\
  &=&  \E_{(x,y) \sim \cD}\left[\mathbbm{1}[g_p(x) = 1]\cdot \mathbbm{1}[h_p(x) \neq f(x)](\mathbbm{1}[h_p(x) \neq y]-\mathbbm{1}[h_p(x) = y]) \right] \\
  &=& \mu_{\cD}(g_p)\cdot (L(\cD,h_p,g_p) - L(\cD,f,g_p)) \\
\end{eqnarray*}
Thus \emph{minimizing}   $\E_{(x,y) \sim \cD}\left[ c_{(x,y)}(p(x))\right]$ is equivalent to \emph{maximizing} $\mu_{\cD}(g_p)\cdot (L(\cD,f,g_p)-L(\cD,h_p,g_p))$ over $p \in K$.
\end{proof}

In other words, in order to be able to efficiently implement algorithm \ref{alg:TrainByOpt} for the class $\cC_K$, it suffices to be able to solve a ternary cost sensitive classification problem over $K$.  The up-shot is that if we can efficiently solve weighted multi-class classification problems (for which we have many algorithms which form good heuristics) over $K$, then we can find approximately $\cC_K$-Bayes optimal models as well.

\subsubsection{Finding Certificates Using Alternating Maximization}
\label{sec:EM}
The reduction from optimization problem \eqref{eq:optproblem} that we gave in Section \ref{sec:ternary} to ternary cost sensitive classification starts with a ternary class $K$ and then finds certificates of sub-optimality over a derived class $\cC_K$. What if we want to \emph{start} with a pre-defined class of group indicator functions $\cG$ and models $\cH$, and find certificates of sub-optimality $(g,h) \in \cG\times \cH$? In this section we give an alternating maximization method that attempts to solve optimization problem \ref{eq:optproblem} by alternating between maximizing over $g_t$ (holding $h_t$ fixed), and maximizing over $h_t$ (holding $g_t$ fixed):

\begin{equation}
\label{eq:gmin}
g_t=\arg\max_{g\in \cG }\mu_{D}(g)\cdot (L(D,f_{t-1},g) - L(D,h_t,g))
\end{equation}
\begin{equation}
\label{eq:hmin}
h_t =\arg\max_{h \in \cH }\mu_{D}(g_t)\cdot (L(D,f_{t-1},g_t) - L(D,h,g_t))
\end{equation}

We show that each of these alternating maximization steps can be reduced to solving a standard (unweighted) empirical risk minimization problem over $\cG$ and $\cH$ respectively. Thus, each can be solved for any heuristic for standard machine learning problems --- we do not even require support for weighted examples, as we do in the cost sensitive classification approach from Section \ref{sec:ternary}. This alternating maximization approach quickly converges to a local optimum or saddle point of the optimization objective from from \eqref{eq:optproblem} --- i.e. a solution that cannot be improved by either a unilateral change of either $g_t \in \cG$ or $h_t \in \cH$. 

We begin with the minimization problem \eqref{eq:hmin} over $h$, holding $g_t$ fixed, and show that it reduces to an  empirical risk minimization problem over $\cH$.

\begin{restatable}{lemma}{lemhopt}
\label{lem:hopt}
 Fix any $g_t \in \cG$ and dataset $D \in \cX^n$. Let $D_{g_t} = \{(x,y) \in \cD : g_t(x) = 1\}$ be the subset of $D$ consisting of members of group $g$. Let $h^* = \arg\min_{h \in \cH} L(D_{g_t},h)$. Then $h^*$ is a solution to optimization problem \ref{eq:hmin}.
\end{restatable}
\begin{proof}
We observe that only the final term of the optimization objective in \eqref{eq:hmin} has any dependence on $h$ when $g_t$ is held fixed. Therefore:
\begin{eqnarray*}
\arg\max_{h \in \cH }\mu_{D}(g_t)\cdot (L(D,f_{t-1},g_t) - L(D,h,g_t)) &=& \arg\max_{h \in \cH }-\mu_{D}(g_t) \cdot L(D,h,g_t)) \\
&=& \arg\min_{h \in \cH} L(D,h,g_t) \\
&=& \arg\min_{h \in \cH} L(D_{g_t},h)
\end{eqnarray*}
\end{proof}

Next we consider the minimization problem \eqref{eq:gmin} over $g$, holding $h_t$ fixed and show that it reduces to an empirical risk minimization problem over $\cG$. The intuition behind the below construction is that the only points $x$ that matter in optimizing our objective are those on which the models $f_{t-1}$ and $h_t$ disagree. Amongst these points $x$ on which the two models disagree, we want to have $g(x) = 1$ if $h$ correctly predicts the label, and not otherwise.
\begin{restatable}{lemma}{lemgopt}
\label{lem:gopt}
 Fix any $h_t \in \cH$ and dataset $D \in \cX^n$. Let:

 $$D_{h_t}^1 = \{(x,y) \in D : h_t(x) = y \neq f_{t-1}(x)\} \ \ \ \ D_{h_t}^0 = \{(x,y) \in D : h_t(x) \neq y = f_{t-1}(x)\}$$

$$D_{h_t} = \left(\bigcup_{(x,y) \in D_{h_t}^1} (x, 1)\right) \cup \left(\bigcup_{(x,y) \in D_{h_t}^0} (x, 0)\right)$$

Let $g^* = \arg\min_{g \in \cG} L(D_{h_t},g)$. Then $g^*$ is a solution to optimization problem \eqref{eq:gmin}.
\end{restatable}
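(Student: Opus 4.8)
The plan is to expand the objective in \eqref{eq:gmin} into a signed count over the datapoints of $D$ and then recognize that count as an affine transformation (with $g$-independent coefficients) of the empirical classification error of $g$ on the relabeled dataset $D_{h_t}$; since minimizing that error and maximizing the original objective are then the same problem, $g^* = \arg\min_{g\in\cG} L(D_{h_t},g)$ must solve \eqref{eq:gmin}.

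First I would use the identity $\mu_D(g)\cdot L(D,f,g) = \frac{1}{n}\sum_{(x,y)\in D}\mathbbm{1}[g(x)=1]\,\ell(f(x),y)$, which holds because multiplying the conditional loss $L(D,f,g)$ by the group mass $\mu_D(g)$ cancels the denominator $|\{(x,y):g(x)=1\}|$ against $n$. Applying this to both $f_{t-1}$ and $h_t$ gives
$$\mu_D(g)\big(L(D,f_{t-1},g)-L(D,h_t,g)\big) = \frac{1}{n}\sum_{(x,y)\in D}\mathbbm{1}[g(x)=1]\big(\ell(f_{t-1}(x),y)-\ell(h_t(x),y)\big).$$
Specializing to $0/1$ loss, the per-point term $\ell(f_{t-1}(x),y)-\ell(h_t(x),y)$ is nonzero only where $f_{t-1}$ and $h_t$ disagree: it equals $+1$ exactly on $D_{h_t}^1$ (where $h_t$ is correct and $f_{t-1}$ wrong) and $-1$ exactly on $D_{h_t}^0$. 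Hence the objective reduces to $\frac{1}{n}\big(a-b\big)$, where $a = |\{(x,y)\in D_{h_t}^1 : g(x)=1\}|$ and $b = |\{(x,y)\in D_{h_t}^0 : g(x)=1\}|$; only disagreement points matter, and we want $g=1$ precisely when $h_t$ is the correct predictor, which is exactly why $D_{h_t}$ relabels $D_{h_t}^1$ with $1$ and $D_{h_t}^0$ with $0$.

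Next I would compute the empirical error on the relabeled set as $L(D_{h_t},g) = \frac{1}{|D_{h_t}|}\big((|D_{h_t}^1| - a) + b\big)$, since a misclassification on $D_{h_t}$ is either a class-$1$ point assigned $g=0$ (there are $|D_{h_t}^1|-a$ of these) or a class-$0$ point assigned $g=1$ (there are $b$ of these). Because $|D_{h_t}^1|$, $|D_{h_t}|$, and $n$ are all constants independent of $g$, minimizing $L(D_{h_t},g)$ is equivalent to maximizing $a-b$, which is exactly $n$ times the objective of \eqref{eq:gmin}. The two $\arg$ problems therefore share the same maximizers, which gives the lemma. (The partition into the four cases $S_1,\dots,S_4$ used in the analogous $h$-step can be organized the same way here, but the two agreement cases contribute zero and can simply be dropped.)

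The whole computation is bookkeeping, and I anticipate no genuine obstacle; the only step worth stating with care is the passage from the \emph{signed} quantity $a-b$ to the \emph{unsigned} classification error $L(D_{h_t},g)$ — that is, recognizing that collapsing the two agreement cases to zero and symmetrizing the two disagreement cases converts the weighted-improvement objective into an ordinary ERM instance over $\cG$. I would make sure to track the $g$-independent constants ($n$, $|D_{h_t}|$, $|D_{h_t}^1|$) explicitly so that the monotone correspondence between the objective and $-L(D_{h_t},g)$ is exact rather than merely up to unquantified error.
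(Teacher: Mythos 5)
Your proof is correct and follows essentially the same route as the paper's: both expand the objective $\mu_D(g)\cdot\bigl(L(D,f_{t-1},g)-L(D,h_t,g)\bigr)$ into per-point contributions, observe that only disagreement points between $f_{t-1}$ and $h_t$ matter (contributing $+1$ on $D_{h_t}^1$ and $-1$ on $D_{h_t}^0$), and identify maximizing that signed count with minimizing the $0/1$ error on the relabeled dataset $D_{h_t}$. The paper organizes this via an explicit four-way partition $S_1,\dots,S_4$ with weights $w_i(g)$ and cancels the agreement terms, while you reach the same cancellation directly through the identity $\mu_D(g)L(D,f,g)=\frac{1}{n}\sum_{(x,y)\in D}\mathbbm{1}[g(x)=1]\,\ell(f(x),y)$; this is a purely cosmetic difference in bookkeeping.
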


\begin{proof}
  For each $g \in \cG$, we partition the set of points $(x,y) \in D_{h_t}$ such that $g(x) = 1$ according to how they are labelled by $f_{t_1}$ and $h_t$:
  $$S_1(g) = \{(x,y) \in D_{h_t} : g(x) = 1, f_{t-1}(x) = h_t(x) = y\} \ \ S_2(g) = \{(x,y) \in D_{h_t} : g(x) = 1, f_{t-1}(x) = h_t(x) \neq y\}$$
    $$S_3(g) = \{(x,y) \in D_{h_t} : g(x) = 1, f_{t-1}(x) \neq h_t(x) = y\} \ \ S_4(g) = \{(x,y) \in D_{h_t} : g(x) = 1, f_{t-1}(x) =  y \neq h_t(x)\}$$

    i.e. amongst the points in group $g$, $S_1(g)$ consists of the points that both $f_{t-1}$ and $h_t$ classify correctly, $S_2(g)$ consists of the points that both classify incorrectly, and $S_3(g)$ and $S_4(g)$ consist of points that $f_{t-1}$ and $h_t$ disagree on: $S_3(g)$ are those points that $h_t$ classifies correctly, and $S_4(g)$ are those points that $f_{t-1}$ classifies correctly. Write:
    $$w_1(g) = \frac{|S_1(g)|}{|D_{h_t}|} \ \ \ w_2(g) = \frac{|S_2(g)|}{|D_{h_t}|} \ \ \ w_3(g)  = \frac{|S_3(g)|}{|D_{h_t}|} \ \ \ w_4(g) = \frac{|S_4(g)|}{|D_{h_t}|}$$
to denote the corresponding proportions of each of the sets $S_i$ within $D_{h_t}$.

Now observe that $\mu_D(g) = w_1(g) + w_2(g) + w_3(g) + w_4(g)$, $L(D,f_{t-1},g) = \frac{w_2(g)+w_3(g)}{\mu_D(g)}$, and $L(D,h_t,g) = \frac{w_2(g)+w_4(g)}{\mu_D(g)}$. Therefore we can rewrite the objective of optimization problem \eqref{eq:gmin} as:
\begin{eqnarray*}
\arg\max_{g\in \cG }\mu_{D}(g)\cdot (L(D,f_{t-1},g) - L(D,h_t,g)) &=& \arg\max_{g\in \cG }(w_2(g) + w_3(g)) - (w_2(g) + w_4(g)) \\
&=& \arg\max_{g\in \cG } w_3(g) - w_4(g) \\
&=& \arg\min_{g\in \cG } w_4(g) - w_3(g) \\
&=& \arg\min_{g \in \cG} L(D_{h_t},g)
\end{eqnarray*}
\end{proof}

With these two components in hand, we can describe our alternating maximization algorithm for finding $\cG\times\cH$ certificates of sub-optimality for a model $f_{t-1}$ (Algorithm \ref{alg:altmin})

\begin{algorithm}
\KwInput{A dataset $D$, a model $f_{t-1}$, group and model classes $\cG$ and $\cH$, and an error parameter $\epsilon$.}
Let $(g^*,h^*) \in \cG \times \cH$ be an arbitrary initial certificate.  \\
CurrentValue $\leftarrow \mu_{D}(g^*)\cdot (L(D,f_{t-1},g^*) - L(D,h^*,g^*))$ \\
Let:
$$h^* = \arg\min_{h \in \cH} L(D_{g^*},h) \ \ \ g^* = \arg\min_{g \in \cG} L(D_{h^*},g)$$
\While{$\mu_{D}(g^*)\cdot (L(D,f_{t-1},g^*) - L(D,h^*,g^*) \geq \textrm{CurrentValue}+\epsilon$}{
CurrentValue $\leftarrow \mu_{D}(g^*)\cdot (L(D,f_{t-1},g^*) - L(D,h^*,g^*))$ \\
Let:
$$h^* = \arg\min_{h \in \cH} L(D_{g^*},h) \ \ \ g^* = \arg\min_{g \in \cG} L(D_{h^*},g)$$
}
\KwOut{Certificate $(g_t,h_t) = (g^*,h^*)$}
\caption{AltMinCertificateFinder($D,f_{t-1},\cG,\cH,\epsilon)$: An algorithm for finding $\epsilon$-locally optimal certificates of sub-optimality $(g_t,h_t) \in \cG\times \cH$ for model $f_{t-1}$.}
\label{alg:altmin}
\end{algorithm}

We have the following theorem:
\begin{restatable}{theorem}{localopt}
Let $D \in (\cX\times\cY)^n$ be an arbitrary dataset, $f_{t-1}:\cX\rightarrow \cY$ be an arbitrary model, $\cG$ and $\cH$ be arbitrary group and model classes, and $\epsilon > 0$. Then after solving at most $2/\epsilon$ empirical risk minimization problems over each of $\cG$ and $\cH$, AltMinCertificateFinder (Algorithm \ref{alg:altmin}) returns a $(\mu,\Delta)$-certificate of sub-optimality $(g_t,h_t)$ for $f_{t-1}$  that is an $\epsilon$-approximate local optimum (or saddle point) in the sense that:
\begin{enumerate}
    \item For every $h \in \cH$, $(g_t,h)$ is not a $(\mu', \Delta')$ certificate of sub-optimality for $f_{t-1}$ for any $\mu'\cdot \Delta' \geq \mu\cdot \Delta + \epsilon$, and
    \item For every $g \in \cG$, $(g,h_t)$ is not a $(\mu', \Delta')$ certificate of sub-optimality for $f_{t-1}$ for any $\mu'\cdot \Delta' \geq \mu\cdot \Delta + \epsilon$
\end{enumerate}
\end{restatable}

\begin{proof}
 By  the halting condition of the While loop, every iteration of the While loop increases  $\mu_{D}(g^*)\cdot (L(D,f_{t-1},g^*) - L(D,h^*,g^*))$ by at least $\epsilon$. Since this quantity is bounded in $[-1,1]$, there cannot be more than $2/\epsilon$ iterations. Each iteration solves a single empirical risk minimization problem over each of $\cH$ and $\cG$.

 The certificate $(g_t,h_t)$ finally output is a $(\mu,\Delta)$ certificate of sub-optimality for $\mu = \mu_{D}(g_t)$ and $\Delta = (L(D,f_{t-1},g_t) - L(D,h_t,g_t))$. It must be  that the objective $\mu_{D}(g_t)\cdot (L(D,f_{t-1},g_t) - L(D,h_t,g_t)) = \mu\cdot \Delta$ cannot be improved by more than $\epsilon$ by re-optimizing either $h_t$ or $g_t$, by definition of the halting condition and by Lemmas \ref{lem:hopt} and \ref{lem:gopt}. The theorem follows.
\end{proof}

\section{Additional Material from Section \ref{sec:experiments}}

\begin{table}[h!]
\begin{tabular}{lllllllllllll}
Dataset & Total & White & Black & Asian & Native & Other & Two+ & Male & Female & Young & Middle & Old \\
\hline
NY Employment & 196966 & 138473 & 24024 & 17030 & 10964 & 5646 & 829 & 95162 & 101804 & 68163 & 47469 & 81334 \\
Oregon Income & 21918 & 18937 & 311 & 923 & 552 & 823 & 372 & 11454 & 10464 & 5041 & 9124 & 7753 \\
Texas Coverage & 98927 & 72881 & 11192 & 4844 & 6660 & 2555 & 795 & 42128 & 56799 & 42048 & 31300 & 25579 \\
Florida Travel & 88070 & 70627 & 10118 & 2629 & 2499 & 1907 & 290 & 45324 & 42746 & 16710 & 35151 & 36209 \\

\end{tabular}
\caption{Summaries of the four Folktables state/task datasets we use in the experiments discussed below,
indicating total population size and the sizes of the different demographic subgroups considered.
The full descriptors of the demographic subgroups are: (race subgroups) White; Black or African American; Asian;  Native Hawaiian, Native American, Native Alaskan, or Pacific Islander; Some Other Race; Two or More Races; (binarized sex subgroups) Male; Female;
(age subgroups) Young; Middle; Old.}
\label{table:data}
\end{table}

\begin{figure}[h!]
        \center
        \subfloat{\includegraphics[width=.7\columnwidth,trim={0.5cm 0.5cm 0.5cm 0.5cm}]{figures/ACS-description2.png}}
        \caption{A fragment of the Folktables ACS PUMS dataset, with the 16 features considered for an employment prediction task: age (AGEP), education (SCHL), marital status (MAR), relationship (RELP), disability status (DIS), parent employee status (ESP), citizen status (CIT), mobility status (MIG), military service (MIL), ancestry record (ANC), nation of origin (NATIVITY), hearing difficulty (DEAR), visual dificulty (DEYE), learning disability (DREM), sex (SEX), and race (RAC1P).}
    \label{fig:acs-description}
\end{figure}

\begin{figure}[h!]
        \subfloat{\includegraphics[width=.3\columnwidth,trim={0.5cm 0.5cm 0.5cm 0.5cm}]{figures/ny_18_employ_12345_test.png}} \subfloat{\includegraphics[width=.3\columnwidth,trim={0.5cm 0.5cm 0.5cm 0.5cm}]{figures/ny_18_employ_12345_train.png}}
        \subfloat{\includegraphics[width=.3\columnwidth,trim={0.5cm 0.5cm 0.5cm 0.5cm}]{figures/ny_18_employ_12345_delta.png}} \\
        \subfloat{\includegraphics[width=.3\columnwidth,trim={0.5cm 0.5cm 0.5cm 0.5cm}]{figures/ny_18_employ_12346_test.png}} \subfloat{\includegraphics[width=.3\columnwidth,trim={0.5cm 0.5cm 0.5cm 0.5cm}]{figures/ny_18_employ_12346_train.png}}
        \subfloat{\includegraphics[width=.3\columnwidth,trim={0.5cm 0.5cm 0.5cm 0.5cm}]{figures/ny_18_employ_12346_delta.png}} \\
        \subfloat{\includegraphics[width=.3\columnwidth,trim={0.5cm 0.5cm 0.5cm 0.5cm}]{figures/or_18_income_12340_test.png}} \subfloat{\includegraphics[width=.3\columnwidth,trim={0.5cm 0.5cm 0.5cm 0.5cm}]{figures/or_18_income_12340_train.png}}
        \subfloat{\includegraphics[width=.3\columnwidth,trim={0.5cm 0.5cm 0.5cm 0.5cm}]{figures/or_18_income_12340_delta.png}} \\
        \subfloat{\includegraphics[width=.3\columnwidth,trim={0.5cm 0.5cm 0.5cm 0.5cm}]{figures/tx_18_cover_12345_test.png}} \subfloat{\includegraphics[width=.3\columnwidth,trim={0.5cm 0.5cm 0.5cm 0.5cm}]{figures/tx_18_cover_12345_train.png}}
        \subfloat{\includegraphics[width=.3\columnwidth,trim={0.5cm 0.5cm 0.5cm 0.5cm}]{figures/tx_18_cover_12345_delta.png}} \\
        \subfloat{\includegraphics[width=.3\columnwidth,trim={0.5cm 0.5cm 0.5cm 0.5cm}]{figures/fl_18_travel_12345_test.png}} \subfloat{\includegraphics[width=.3\columnwidth,trim={0.5cm 0.5cm 0.5cm 0.5cm}]{figures/fl_18_travel_12345_train.png}}
        \subfloat{\includegraphics[width=.3\columnwidth,trim={0.5cm 0.5cm 0.5cm 0.5cm}]{figures/fl_18_travel_12345_delta.png}}
        \caption{Sample results of an implementation of algorithm MonotoneFalsifyAndUpdate (Algorithm~\ref{alg:MonFalsifyAndUpdate}) on a
        number of different task and U.S. state datasets from the Folktables package. In each row, the 
        left panel shows group error rates on the test
        data; the middle panel shows group error rates on the training data; and the right panel shows the absolute difference of test and training error rates. First row: ACS employment task for New York state. Second row: ACS employment task for New York but with a
        different order of introduction of the groups. Third row: ACS income task for Oregon. Fourth row: ACS coverage task for Texas. Fifth row: ACS travel time task for Florida. See text for discussion. }
    \label{fig:acs-results-app}
\end{figure}

\begin{table}[h!]
\tabcolsep=0.11cm
\begin{tabular}{lllllllllllll}
Total & White & Black  & Asian & Native & Some Other Race & Two+ & Male & Female & Young & Middle & Old \\
0.0004 & 0.016 & 0.065 & 0.023 & 0.037 & 0.020 & 0.033 & 0.016 & 0.008 & 0.040 & 0.045 & 0.030 \\
\end{tabular}
\caption{Absolute differences in final overall and subgroup test errors between two different sequential orderings.
See text for details.}
\label{table:order-errors}
\end{table}

\fi
\end{document}